\documentclass[12pt,a4paper]{article}

\usepackage[english]{babel}
\usepackage[utf8]{inputenc}
\usepackage[T1]{fontenc}
\usepackage[a4paper,top=1in,bottom=1in,left=1in,right=1in]{geometry}

\usepackage{natbib}
\usepackage{setspace}
\usepackage{amsmath,amssymb,amsfonts,amsthm,mathtools}
\usepackage{algorithm,algpseudocode}
\usepackage{bm}
\usepackage{graphicx}
\usepackage{float}
\usepackage{booktabs,array,multirow}
\usepackage{caption,subcaption}
\usepackage{lmodern}
\usepackage{microtype}
\usepackage{hyperref}
\usepackage{enumitem}
\usepackage{adjustbox}
\usepackage{tikz}
\usepackage{xcolor}
\usepackage{aliascnt}
\usepackage[nameinlink,noabbrev,capitalise]{cleveref}

\hypersetup{
  colorlinks=true,
  linkcolor=blue!50!black,
  citecolor=blue!50!black,
  urlcolor=blue!60!black
}

\allowdisplaybreaks
\numberwithin{equation}{section}

\algrenewcommand\algorithmicrequire{\textbf{Input:}}
\algrenewcommand\algorithmicensure{\textbf{Output:}}

\theoremstyle{plain}
\newtheorem{theorem}{Theorem}[section]
\newaliascnt{proposition}{theorem}
\newtheorem{proposition}[proposition]{Proposition}
\aliascntresetthe{proposition}
\newaliascnt{lemma}{theorem}
\newtheorem{lemma}[lemma]{Lemma}
\aliascntresetthe{lemma}
\newaliascnt{corollary}{theorem}
\newtheorem{corollary}[corollary]{Corollary}
\aliascntresetthe{corollary}
\theoremstyle{definition}
\newaliascnt{definition}{theorem}
\newtheorem{definition}[definition]{Definition}
\aliascntresetthe{definition}
\newaliascnt{assumption}{theorem}

\aliascntresetthe{assumption}
\newaliascnt{example}{theorem}

\aliascntresetthe{example}
\theoremstyle{remark}
\newaliascnt{remark}{theorem}
\newtheorem{remark}[remark]{Remark}
\aliascntresetthe{remark}

\crefname{theorem}{Theorem}{Theorems}
\Crefname{theorem}{Theorem}{Theorems}
\crefname{proposition}{Proposition}{Propositions}
\Crefname{proposition}{Proposition}{Propositions}
\crefname{lemma}{Lemma}{Lemmas}
\Crefname{lemma}{Lemma}{Lemmas}
\crefname{corollary}{Corollary}{Corollaries}
\Crefname{corollary}{Corollary}{Corollaries}
\crefname{definition}{Definition}{Definitions}
\Crefname{definition}{Definition}{Definitions}
\crefname{assumption}{Assumption}{Assumptions}
\Crefname{assumption}{Assumption}{Assumptions}
\crefname{example}{Example}{Examples}
\Crefname{example}{Example}{Examples}
\crefname{remark}{Remark}{Remarks}
\Crefname{remark}{Remark}{Remarks}

\newcommand{\E}{\mathbb{E}}
\newcommand{\R}{\mathbb{R}}
\newcommand{\KL}{\mathrm{KL}}
\newcommand{\W}{\mathcal{W}}
\newcommand{\U}{\mathcal{U}}
\newcommand{\proxy}{\hat r}
\newcommand{\true}{r^\star}
\newcommand{\advreward}{s}
\newcommand{\bud}{\delta}
\newcommand{\argmax}{\mathop{\mathrm{arg\,max}}}
\newcommand{\argmin}{\mathop{\mathrm{arg\,min}}}
\newcommand{\ip}[2]{\left\langle #1, #2 \right\rangle}
\newcommand{\norm}[1]{\left\lVert #1 \right\rVert}
\newcommand{\ind}{\mathbf{1}}
\newcommand{\rollpi}{\bar\pi}
\newcommand{\algchange}[1]{\textcolor{blue!70!black}{#1}}
\DeclareMathOperator{\Reg}{Reg}
\renewcommand{\thefootnote}{\fnsymbol{footnote}}
\makeatletter
\providecommand{\theHALG@line}{}
\renewcommand{\theHALG@line}{\thealgorithm.\arabic{ALG@line}}
\makeatother

\title{Wasserstein Distributionally Robust Regret Optimization for Reinforcement Learning from Human Feedback}
\author{Yikai Wang$^{*,1}$, Shang Liu$^{*,2,3}$, Jose Blanchet$^{3}$}
\date{\small
$^{1}$Department of Statistics and Operations Research, University of North Carolina\\
$^{2}$Imperial Business School, Imperial College London\\
$^{3}$Department of Management Science and Engineering, Stanford University}

\begin{document}
\maketitle
\onehalfspacing

\def\thefootnote{*}\relax\footnotetext{Equal contribution. Corresponding to s.liu21@imperial.ac.uk.}
\setcounter{footnote}{0}\renewcommand{\thefootnote}{\arabic{footnote}}

\begin{abstract}
Reinforcement learning from human feedback (RLHF) is a central post-training tool for aligning large language models, but its training reward is only a learned proxy for true human utility. This creates a decision problem under objective misspecification: the policy is optimized against an estimated reward, while deployment performance is governed by an unobserved population preference. The resulting gap leads to reward over-optimization, where proxy reward keeps improving after true quality deteriorates. We propose distributionally robust regret optimization (DRRO) for RLHF with a Wasserstein ambiguity set over reward laws, using promptwise \(\ell_p\) distances between reward vectors as transport costs. Unlike standard distributionally robust optimization, which pessimizes worst-case value, DRRO pessimizes worst-case regret relative to the best policy under the same plausible reward perturbation. We show that the expressive-policy problem decomposes into promptwise regret problems. For each prompt, the inner adversary has a dual-norm closed form; under the \(\ell_1\) transport cost used by our algorithm, the optimizer has a water-filling structure. These results lead to a practical policy-gradient algorithm that adds a simple sampled bonus to GRPO-style training. Theory and experiments both show that DRRO is less over-pessimistic than standard DRO and mitigates over-optimization more effectively than existing baselines.
\end{abstract}

\section{Introduction}
\label{sec:introduction}

Large language models owe much of their practical usefulness not only to scale, but also to \emph{post-training}. Pretraining creates a strong next-token predictor, but not necessarily a model that is helpful, safe, or responsive to human intent. Modern alignment pipelines therefore add supervised fine-tuning and preference-based policy optimization, most prominently reinforcement learning from human feedback (RLHF) \citep{christiano2017deep,ziegler2019fine,stiennon2020learning,ouyang2022training}. In a typical RLHF pipeline, prompts are sampled, the model generates candidate responses, humans express preferences, a reward model is fit to those preferences, and the language model is optimized against the learned reward. The leverage of this stage is substantial: \citet{ouyang2022training} reported that a 1.3B aligned model was preferred by human raters to a 175B model without comparable RLHF.

The same pipeline also creates the problem studied in this paper. Direct human feedback is too expensive to collect in real time while policy optimization repeatedly samples massive numbers of responses. RLHF therefore trains a reward model to mimic human ratings. The Bradley--Terry comparison model is perhaps the most widely adopted approach \citep{bradley1952rank,christiano2017deep,ouyang2022training}: for two responses to the same prompt, the preference probability is modeled through the difference between their learned rewards. This construction makes the central issue transparent. The learned reward score is not human preference itself; it is a proxy, estimated from finite and distribution-limited comparisons, and then optimized as if it were the objective. Goodhart's law warns that when a measurement becomes a goal to optimize, the measurement may no longer be a good measurement \citep{goodhart1975problems,strathern1997improving}. RLHF places that warning directly inside a learning algorithm.

The empirical signature is now clear. In the controlled study of \citet{gao2023scaling}, the proxy reward continued to improve as the policy moved farther from the initial model, while a stronger gold-standard reward first peaked and then deteriorated. The very signal used for training can therefore certify progress while actual quality is declining. Related evidence appears in studies of human-feedback confounders \citep{hosking2024gold}, direct alignment over-optimization \citep{rafailov2024directoveropt}, and Goodhart effects in reinforcement learning \citep{karwowski2024goodhart}.

\begin{figure}[t]
\centering
\includegraphics[width=0.72\linewidth]{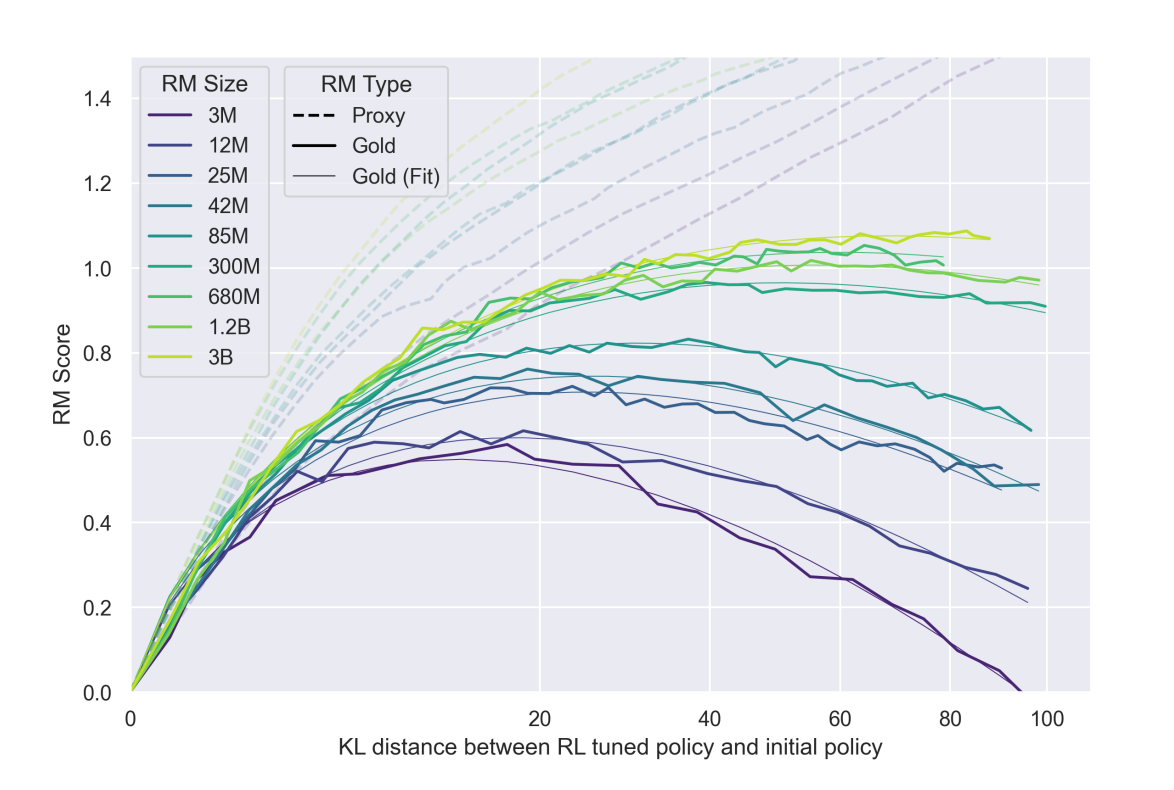}
\caption{Reward over-optimization in RLHF, adapted from \citet{gao2023scaling}. The horizontal axis measures how far policy optimization has moved from the initial policy, while the vertical axis reports reward-model evaluation. When the policy is optimized against the proxy reward model, the proxy evaluation keeps increasing, but the gold reward model's evaluation first improves and then deteriorates.}
\label{fig:intro-overoptimization}
\end{figure}

Conceptually, fix a prompt and imagine listing all complete responses. Promptwise RLHF shifts probability mass toward responses with high learned proxy reward. Deployment quality, however, is governed by population preferences: different users may assign different latent rewards to the same response, and the relevant target is their aggregate preference rather than the learned score. Downstream RLHF is therefore decision making under objective misspecification. More importantly, the misspecification is decision-dependent: optimization changes the response distribution, which changes where the reward model is asked to extrapolate.

Existing mitigations broadly take two paths. One constrains the updated policy near an initial, reference, or behavior policy, through divergence-based or reference-policy control \citep{schulman2017ppo,ouyang2022training,xiong2024iterative}, explicit constrained RLHF \citep{moskovitz2024constrained}, or behavior- and demonstration-supported regularization \citep{dai2025behavior,rita2024demonstration,liu2024provably}. These safeguards can be useful, but they also suppress potentially valuable improvement and hide the deeper question of what objective should be optimized.

The other path improves the proxy itself, for example through reward-model ensembles \citep{coste2024ensembles,eisenstein2024helping}, iterative or active preference collection \citep{xiong2024iterative,das2024active}, alternative reward-model objectives \citep{miao2024inform}, or robust reward modeling \citep{liu2025rrm,bukharin2025adversarial}. These approaches address the right object, but typically require additional data, models, and/or training stages.

A natural alternative is to treat the reward model as an uncertain objective and import robust optimization. Standard distributionally robust optimization (DRO) evaluates a policy by its worst plausible reward model inside an ambiguity set around the learned reward model, drawing on a powerful operations research toolkit for robust and Wasserstein distributionally robust optimization \citep{ben2013robust,wiesemann2014distributionally,esfahani2018data}. In our setting, the Wasserstein distance is taken over reward laws, and the transport cost is the promptwise \(\ell_p\) distance between reward vectors.

Yet worst-case value is not always the right benchmark. Pessimistic value penalties and restrictive feasible regions can become over-conservative \citep{xu2025pet,li2025geb,wolf2025iterated}; in our benchmark, standard DRO even performs worse than raw PPO in peak held-out gold reward, as detailed in the experiments. Inspired by regret in reinforcement learning \citep{jaksch2010nearoptimal,jin2018qlearning}, we argue that RLHF should not robustify absolute value alone. Here, regret means the gap between the chosen policy and the best policy that would have been chosen under the same realized plausible reward.

This motivates our central idea: robustify \emph{regret}, not value. Distributionally robust regret optimization (DRRO) evaluates a policy by its worst plausible regret against the hindsight-best policy under the same plausible reward model. This benchmark is especially natural in RLHF because one policy must be committed before the realized user preference is observed.

The obstacle is computational. Worst-case regret is generally much harder to optimize than worst-case value: classical minmax-regret problems are NP-hard even in static settings \citep{averbakh2004minmax,averbakh2005complexity}, and recent ex-ante DRRO formulations often require substantial optimization machinery \citep{agarwal2022minimax,cho2024wasserstein,bitar2024distributionally,fiechtner2025wasserstein}. This seems discouraging for RLHF, where the action space is the set of all possible language-model responses.

The main theoretical message of this paper is that promptwise RLHF has far more structure than this pessimistic view suggests. For an expressive policy class, the global ex-ante problem decomposes into promptwise regret problems. Each fixed-prompt problem becomes a portfolio-style allocation over responses. Under Wasserstein ambiguity with promptwise \(\ell_p\) transport cost, the inner adversary has an explicit dual-norm form; under the \(\ell_1\) transport cost used by our algorithm, the minimizer further collapses to a water-filling rule. This also explains why DRRO is less pessimistic than DRO: DRRO uses the proxy's ranking information when deciding how to hedge, whereas DRO under the same formulation can ignore it. As ambiguity shrinks, DRRO recovers proxy-greedy behavior; as ambiguity grows, it converges to the uniform policy.

Closed-form promptwise solutions are insightful, but not yet an RLHF algorithm. The full response simplex is hidden and astronomically large, and the trainable object is the LLM parameter vector rather than an explicit probability table. We therefore derive policy-gradient surrogates for DRRO, show that the robust correction appears as a simple reward bonus, and replace the exact maximum by a smooth maximum estimated from sampled responses through self-normalized importance sampling. With numerically stable group-normalized rollout probabilities, the resulting method requires only minimal changes to GRPO implementations \citep{shao2024deepseekmath} and gives the strongest over-optimization mitigation among the baselines we test.

\paragraph{Contributions.}
The paper makes five contributions.
\begin{enumerate}[leftmargin=1.8em]
  \item \textbf{Formulation.} We formulate RLHF under reward-model misspecification as an ex-ante DRRO problem with Wasserstein ambiguity over reward laws and promptwise \(\ell_p\) transport costs.
  \item \textbf{Promptwise theory.} We prove promptwise decomposition, derive dual-norm closed-form adversaries, and obtain a water-filling optimal policy under the \(\ell_1\) transport cost used by the algorithm.
  \item \textbf{Algorithm.} We turn the exact theory into policy-gradient surrogates, a sampling-based smooth-maximum estimator, a stable group-normalized implementation, and ambiguity-budget rules.
  \item \textbf{Experiments.} On a matched VERL benchmark, soft dynamic DRRO reaches peak held-out gold reward 1.43, compared with 1.27 for GRPO, 1.20 for PPO, and 0.79 for DRO.
  \item \textbf{DRO comparison.} We show theoretically that DRRO can be less over-pessimistic than DRO, both under rank-preserving rewards and through relative concentrability.
\end{enumerate}

The rest of the paper introduces the model, develops the promptwise theory and DRRO-GRPO algorithm, reports the experiments, and then compares DRO and DRRO theoretically. The proofs and further details are deferred to the appendix.

\section{RLHF as Ex-Ante Regret Robustness}
\label{sec:model}

This section formalizes the RLHF decision problem. We fix notation for prompts, policies, proxy rewards, and latent human preferences, then show how a Wasserstein ambiguity set over reward distributions reduces to the promptwise mean-reward ambiguity set solved in \Cref{sec:promptwise}.

Let $\mathcal X$ be the prompt space and let $\mathcal D$ be the prompt distribution used for training and deployment evaluation. For the current prompt, let $\mathcal Y$ denote the set of complete responses. This set is astronomically large but conceptually finite after fixing a vocabulary and maximum generation length; the theory uses this finite simplex view, while the algorithm only samples completions. A policy $\pi_\theta(\cdot\mid x)$ assigns probabilities to complete responses, $\pi_0$ denotes the frozen reference policy, and $\rollpi$ denotes the rollout policy used during a policy-gradient update.

Each user $u$ has a latent preference reward $\true_u(x,y)$. The user population $\mathcal Q$ induces a distribution $\mathcal P^\star$ over reward functions, while the learned reward model $\proxy(x,y)$ is only a proxy for this population signal. For any distribution $P$ over reward functions, define
\begin{equation}
\label{eq:return}
J_P(\pi)\coloneqq\E_{x\sim\mathcal D,\ y\sim\pi(\cdot\mid x),\ R\sim P}[R(x,y)].
\end{equation}
For the true population law, $J_{\mathcal P^\star}(\pi)=\E_{u\sim\mathcal Q,\ x\sim\mathcal D,\ y\sim\pi(\cdot\mid x)}[\true_u(x,y)]$. If $P=\delta_r$ is a point mass on reward function $r$, write $J_r(\pi)\coloneqq\E_{x\sim\mathcal D,\ y\sim\pi(\cdot\mid x)}[r(x,y)]$. By linearity, only the certainty-equivalent reward $\bar r_P(x,y)\coloneqq\E_{R\sim P}[R(x,y)]$ matters: $J_P(\pi)=J_{\bar r_P}(\pi)$. We set $\true\coloneqq\bar r_{\mathcal P^\star}$ and let $\Pi$ denote the policy class considered by the robust decision problem.

Nominal RLHF maximizes $J_{\proxy}(\pi)$, often with KL or clipping controls inherited from PPO-style training \citep{schulman2017ppo}. Our algorithms do not add an explicit KL penalty to the policy objective: the robust correction changes the reward signal rather than merely restricting optimization of the same proxy. KL reappears only as a drift statistic for calibrating the dynamic ambiguity budget; \Cref{app:kl-discussion} gives more detail.

The closed-form theory is promptwise. Fix $x$ and enumerate its complete responses as $y_1,\ldots,y_n$. Write $\pi_i(x)\coloneqq\pi(y_i\mid x)$ and $\beta_i(x)\coloneqq\beta(y_i\mid x)$, so $\pi(x),\beta(x)\in\Delta_n$, where $\Delta_n\coloneqq\{\zeta\in\R_+^n:\sum_i \zeta_i=1\}$. For any reward function $r$, define $r_i(x)\coloneqq r(x,y_i)$ and $r(x)\coloneqq(r_1(x),\ldots,r_n(x))\in\R^n$. Similarly, $\proxy(x)\in\R^n$ denotes the promptwise proxy-reward vector.

Let $P_0\coloneqq\delta_{\proxy}$ be the nominal point mass on the learned reward model. For $p,q\ge1$, define the promptwise ground metric
\[
d_p(\mu,\nu)\coloneqq\norm{\mu-\nu}_p,
\]
and let $\W_{q,d_p}$ be the $q$-Wasserstein distance induced by the ground metric $d_p$:
\begin{equation}
\label{eq:promptwise-wasserstein}
\begin{aligned}
\W_{q,d_p}(P,Q;x)
&\coloneqq
\left(
\inf_{\gamma\in\Gamma(P,Q)}
\E_{\gamma}\!\left[
d_p(R(x),\widetilde R(x))^q
\right]
\right)^{1/q},
\end{aligned}
\end{equation}
where $\Gamma(P,Q)$ is the set of couplings of $P$ and $Q$. The mean-reward reduction below holds for all $p,q\ge1$, so the robust model can be stated for any norm-induced transport geometry. For fixed \(p,q\), define the ambiguity set
\begin{equation}
\label{eq:distributional-ambiguity}
\U_{\bud}(P_0)\coloneqq
\left\{
P:\ \W_{q,d_p}(P,P_0;x)\le \bud\ \text{for every }x\in\mathcal X
\right\}.
\end{equation}
Here $\bud$ is the scalar ambiguity budget, not the Dirac delta in $P_0=\delta_{\proxy}$. The algorithmic section later specializes to \(p=1\), because that geometry gives the coordinate-local correction needed for sampled completions; see \Cref{sec:algorithm-discussion,prop:l1-rationale}.

The decision is ex ante: the learner chooses one policy before knowing which plausible reward distribution will govern deployment. The DRRO problem is
\begin{equation}
\label{eq:ex-ante-drro}
\min_{\pi\in\Pi}\ \max_{P\in\U_{\bud}(P_0)}
\left[
\max_{\beta\in\Pi}J_P(\beta)-J_P(\pi)
\right],
\end{equation}
where $\beta$ is the hindsight policy that would be selected if $P$ were known. This benchmark is the key distinction: DRRO compares the chosen policy with the best policy in the same plausible world, whereas DRO evaluates absolute worst-case value.

\begin{proposition}[Mean-reward reduction]
\label{prop:mean-reduction}
For any policy $\pi$ and any distribution $P$ over reward functions, define the mean reward
\begin{equation*}
\bar r_P(x,y):=\E_{R\sim P}[R(x,y)].
\end{equation*}
Then
\begin{equation*}
J_P(\pi)=J_{\bar r_P}(\pi).
\end{equation*}
Consequently, the ex-ante DRRO objective depends on $P$ only through the induced mean-reward set.
\end{proposition}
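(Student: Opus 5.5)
The plan is to prove the identity by interchanging expectations in the definition \eqref{eq:return}, and then to read off the consequence for \eqref{eq:ex-ante-drro} directly.

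\textbf{Step 1: the identity.} Write $J_P(\pi)$ as an iterated integral:
\[
J_P(\pi)=\E_{x\sim\mathcal D}\Big[\sum_{y\in\mathcal Y}\pi(y\mid x)\,\E_{R\sim P}[R(x,y)]\Big].
\]
This uses that the reward draw $R\sim P$ is independent of the pair $(x,y)$, so the joint law in \eqref{eq:return} is the product $\mathcal D\otimes\pi(\cdot\mid x)\otimes P$. It also uses Fubini's theorem to move the $R$-expectation inside. The inner quantity is by definition $\bar r_P(x,y)$. The remaining expression $\E_{x\sim\mathcal D,\,y\sim\pi(\cdot\mid x)}[\bar r_P(x,y)]$ is exactly $J_{\bar r_P}(\pi)$.

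\textbf{Step 2: the consequence.} Apply Step 1 to both $\pi$ and the hindsight policy $\beta$. The inner quantity in \eqref{eq:ex-ante-drro} then becomes
\[
\max_{\beta\in\Pi}J_{\bar r_P}(\beta)-J_{\bar r_P}(\pi),
\]
which is a function of $\bar r_P$ alone. Hence the maximum over $P\in\U_\bud(P_0)$ equals the maximum over $r$ in the image set $\{\bar r_P:P\in\U_\bud(P_0)\}$. This is the induced mean-reward set, and the DRRO objective depends on $P$ only through it. Characterizing this set concretely, presumably as a promptwise $\ell_p$-ball around $\proxy(x)$ via Jensen's inequality, is left to the later reduction; the proposition as stated only requires that the objective factors through $P\mapsto\bar r_P$.

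\textbf{Main obstacle.} The only nontrivial point is justifying the interchange in Step 1. I would handle it in two ways.

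First, $\mathcal Y$ is finite for each $x$, so the inner sum over $y$ commutes with $\E_{R\sim P}$ by linearity. This holds provided each $\E_P|R(x,y)|<\infty$, and I will take that finiteness as a standing assumption, which is implicit in the definition of $\bar r_P$.

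Second, for the outer $x$-integral I will invoke Fubini--Tonelli. Absolute integrability follows from $\E|R(x,y)|$ being integrable against $\mathcal D\otimes\pi$, which I would assume as well; it holds, for example, for bounded rewards. If it fails, both sides are undefined or infinite together, and the identity is interpreted accordingly.
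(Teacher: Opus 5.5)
Your proposal is correct and follows the paper's proof. You interchange the $R$-expectation with the $(x,y)$-expectation by Fubini, identify the inner expectation as $\bar r_P$, and observe that the DRRO objective then factors through $P\mapsto\bar r_P$. Your explicit integrability assumptions are a harmless refinement that the paper leaves implicit.
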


\begin{corollary}[Promptwise Wasserstein ambiguity induces mean-reward ambiguity]
\label{cor:wasserstein-mean}
Fix a prompt $x$ and view each reward function through its vector
\[
R(x)\coloneqq(R(x,y_1),\ldots,R(x,y_n)).
\]
For any $p,q\ge1$, any candidate mean-reward vector $\advreward(x)$, and any budget $\bud\ge0$,
\begin{equation*}
\begin{aligned}
&\exists P\ \text{such that}\ 
\E_{R\sim P}[R(x)]=\advreward(x)
\ \text{and}\ 
\W_{q,d_p}(P,P_0;x)\le\bud\\
&\qquad\Longleftrightarrow\qquad
\left\|\advreward(x)-\E_{R\sim P_0}[R(x)]\right\|_p\le\bud .
\end{aligned}
\end{equation*}
\end{corollary}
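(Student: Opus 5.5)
Since $P_0=\delta_{\proxy}$ is a point mass, the only coupling in $\Gamma(P,P_0)$ is the product $P\otimes\delta_{\proxy}$. The transport problem in \eqref{eq:promptwise-wasserstein} therefore collapses to
\[
\W_{q,d_p}(P,P_0;x)^q=\E_{R\sim P}\bigl[\norm{R(x)-\proxy(x)}_p^q\bigr],
\]
and $\E_{R\sim P_0}[R(x)]=\proxy(x)$. I would establish this identity first and then prove the two directions of the equivalence separately.

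\emph{($\Rightarrow$).} Suppose $P$ has mean $\advreward(x)$ and $\W_{q,d_p}(P,P_0;x)\le\bud$. The map $v\mapsto\norm{v-\proxy(x)}_p$ is convex, so Jensen's inequality gives
\[
\norm{\advreward(x)-\proxy(x)}_p=\norm{\E_P[R(x)-\proxy(x)]}_p\le \E_P\norm{R(x)-\proxy(x)}_p .
\]
Since $t\mapsto t^q$ is convex on $\R_+$ for $q\ge1$, a second application of Jensen (the monotonicity of $L^q$ norms) gives
\[
\E_P\norm{R(x)-\proxy(x)}_p\le\bigl(\E_P\norm{R(x)-\proxy(x)}_p^q\bigr)^{1/q}=\W_{q,d_p}(P,P_0;x)\le\bud .
\]
This direction requires $R(x)$ to be integrable under $P$, which is implicit in the hypothesis that the mean $\advreward(x)$ exists.

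\emph{($\Leftarrow$).} Suppose $\norm{\advreward(x)-\proxy(x)}_p\le\bud$. I would construct $P$ as the point mass on a reward function $s$ that agrees with $\advreward(x)$ at the prompt $x$. Its mean at $x$ is $\advreward(x)$, and by the identity above its distance is $\W_{q,d_p}(\delta_s,P_0;x)=\norm{\advreward(x)-\proxy(x)}_p\le\bud$.

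The main subtlety is the scope of the statement. It is purely promptwise, so $P$ only has to meet the Wasserstein constraint at the fixed prompt $x$. If one wants $P$ to lie in $\U_{\bud}(P_0)$, which imposes the constraint at every prompt, then $s$ should be taken equal to $\advreward(x)$ at $x$ and equal to $\proxy$ at every other prompt. The other prompts then contribute zero transport cost and $\delta_s\in\U_{\bud}(P_0)$. For a family of candidate vectors $\advreward(x')$, one for each prompt $x'$, a single $s$ can match all of them at once, which gives the product form needed in \Cref{prop:mean-reduction}.

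Apart from this scoping point, the only step needing care is the collapse of the coupling set to the product measure. That collapse holds because any coupling whose second marginal is a Dirac mass must be the corresponding product measure.
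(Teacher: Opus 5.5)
Your proof is correct and is essentially the paper's argument. For ($\Rightarrow$) it uses Jensen's inequality followed by the $L^1\le L^q$ moment bound, and for ($\Leftarrow$) your point mass $\delta_s$ is exactly the paper's pushforward of $P_0$ under the promptwise translation by $\advreward(x)-\proxy(x)$, which leaves every other prompt unchanged.

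The one difference is that the paper applies the Jensen chain to an arbitrary coupling $\gamma\in\Gamma(P,P_0)$ and then takes the infimum. It never collapses $\Gamma(P,P_0)$ to the product measure, so its argument holds verbatim for any nominal $P_0$ with a finite mean at $x$ (using the translated $P_0$ in place of $\delta_s$). Your version relies on $P_0=\delta_{\proxy}$, which is fine in this paper's setting.
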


\noindent
\Cref{prop:mean-reduction,cor:wasserstein-mean} justify replacing reward-distribution ambiguity by deterministic ambiguity over mean rewards. With $P_0=\delta_{\proxy}$ and transport cost \(d_p\), the induced fixed-prompt set is
\begin{equation}
\label{eq:promptwise-ambiguity}
\mathcal M_{\bud,p}(\proxy(x))\coloneqq
\left\{
\advreward(x)\in\R^n:\ 
\norm{\advreward(x)-\proxy(x)}_p\le \bud
\right\}.
\end{equation}
Thus $\bud$ controls the allowable promptwise reward perturbation: small $\bud$ trusts the proxy, while large $\bud$ asks the policy to hedge against reward-model extrapolation error.

\section{Closed-Form Promptwise Analysis}
\label{sec:promptwise}

We now isolate the core promptwise optimization problem and solve it exactly. The first step is to justify why this reduction is not merely a heuristic convenience. If the policy class is rich enough to realize arbitrary promptwise response distributions, then the global ex-ante DRRO problem factorizes into independent promptwise problems for any fixed norm parameter \(p\).

\begin{proposition}[Promptwise decomposition]
\label{prop:promptwise-decomposition}
Assume the policy class is the full product space $\Pi_{\mathrm{full}}\coloneqq\Delta_n^{\mathcal X}$, suppressing the harmless prompt-dependence of $n$ in the notation. Fix \(1\le p\le\infty\) and define
\[
\mathcal M_{\bud,p}^{\mathcal X}(\proxy)
\coloneqq
\left\{
\advreward:\ \advreward(x)\in\mathcal M_{\bud,p}(\proxy(x))
\text{ for every }x\in\mathcal X
\right\},
\]
and, for $\zeta\in\Delta_n$, write
\[
\Reg_x(\zeta,\advreward(x))
\coloneqq
\max_{\beta\in\Delta_n}\ip{\beta-\zeta}{\advreward(x)}.
\]
Then, for any fixed policy $\pi$, an optimal global adversary is exactly a direct product of optimal promptwise adversaries:
\[
\advreward^\star(x)\in
\argmax_{\advreward(x)\in\mathcal M_{\bud,p}(\proxy(x))}
\Reg_x(\pi(x),\advreward(x))
\quad\text{for }\mathcal D\text{-a.e. }x,
\]
and conversely any such direct product is globally optimal. Moreover, a policy $\pi^\star$ is globally optimal if and only if it is a direct product of optimal promptwise policies:
\[
\pi^\star(x)\in
\argmin_{\zeta\in\Delta_n}
\max_{\advreward(x)\in\mathcal M_{\bud,p}(\proxy(x))}
\Reg_x(\zeta,\advreward(x))
\quad\text{for }\mathcal D\text{-a.e. }x.
\]
\end{proposition}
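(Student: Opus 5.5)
Using \Cref{prop:mean-reduction,cor:wasserstein-mean}, I will first rewrite the ex-ante objective \eqref{eq:ex-ante-drro} over $\Pi_{\mathrm{full}}$ as a problem over mean-reward functions $\advreward\in\mathcal M^{\mathcal X}_{\bud,p}(\proxy)$. For a fixed $\advreward$, $J_{\advreward}(\beta)=\E_{x\sim\mathcal D}\ip{\beta(x)}{\advreward(x)}$. The class $\Pi_{\mathrm{full}}$ is a product space, so the hindsight maximization over $\beta$ can be done pointwise: choosing $\beta(x)$ as a vertex attaining $\max_i\advreward_i(x)$ for each $x$ is feasible and attains the supremum. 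Here I take a measurable selection, e.g.\ the smallest index among the maximizers, which works because $n$ is finite. This gives the key identity
\[
\max_{\beta\in\Pi_{\mathrm{full}}}J_{\advreward}(\beta)-J_{\advreward}(\pi)=\E_{x\sim\mathcal D}\bigl[\Reg_x(\pi(x),\advreward(x))\bigr].
\]
The adversary's constraint set $\mathcal M^{\mathcal X}_{\bud,p}$ is also a product of the promptwise balls. So the global inner problem becomes $\max_{\advreward}\E_x[\Reg_x(\pi(x),\advreward(x))]$ over a product set, with an integrand depending only on $\advreward(x)$.

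For the adversary claim, I will use the standard interchange of sup and expectation over a product (decomposable) set. Define $V_x(\zeta)\coloneqq\max_{\advreward(x)\in\mathcal M_{\bud,p}(\proxy(x))}\Reg_x(\zeta,\advreward(x))$. This maximum is attained because the ball is compact and $\Reg_x$ is continuous; in fact it is convex and piecewise linear in $\advreward(x)$. The upper bound $\E_x[\Reg_x]\le\E_x[V_x(\pi(x))]$ is immediate. Equality is attained by any measurable product of promptwise maximizers, which gives the converse direction.

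For the ``exactly'' direction, suppose $\advreward^\star$ is globally optimal but fails to be promptwise optimal on a set $A$ of positive $\mathcal D$-measure. Replacing it on $A$ by a measurable promptwise maximizer strictly increases the expectation, since the integrand difference is nonnegative everywhere and strictly positive on $A$. This contradicts global optimality.

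The main obstacle is measurability. I need a measurable selection of promptwise maximizers, and I need $x\mapsto V_x(\pi(x))$ to be measurable. I would handle this by appealing to a measurable-maximum theorem (Kuratowski--Ryll-Nardzewski / Berge), using that the ball correspondence $x\mapsto\mathcal M_{\bud,p}(\proxy(x))$ is compact-valued and measurable when $\proxy$ is measurable. Alternatively, I would restrict to countable $\mathcal X$, where the issue disappears. I also need $\E|\proxy|<\infty$ (or bounded rewards) so that the expectations are finite and strict improvements are meaningful.

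For the policy statement, the global objective is $F(\pi)=\E_x[V_x(\pi(x))]$. Since $\pi$ also ranges over a product space, the same argument applies with min in place of max. First, $F(\pi)\ge\E_x[\min_\zeta V_x(\zeta)]$, where the minimum exists because $V_x$ is convex and continuous (a max of affine functions of $\zeta$) on the compact simplex. This lower bound is attained by a measurable product of promptwise minimizers, so any such product is optimal. Conversely, if a policy is suboptimal on a positive-measure set, swapping in promptwise minimizers there strictly decreases $F$. Measurability of the argmin correspondence again follows from the measurable-maximum theorem applied to the Carathéodory function $(x,\zeta)\mapsto V_x(\zeta)$.
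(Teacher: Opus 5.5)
Your proposal is correct and follows essentially the same route as the paper's proof. Both use the product structure of $\Pi_{\mathrm{full}}$ to write the regret as $\E_x[\Reg_x(\pi(x),\advreward(x))]$, then exchange the adversary's maximum with the expectation because $\mathcal M^{\mathcal X}_{\bud,p}(\proxy)$ is a product set, and finally repeat the argument with the minimum over policies. Your measurable-selection and integrability remarks are not in the paper's proof; they supply details it leaves implicit, in particular the finiteness of $\E_x[\Reg_x]$ that the strict-improvement argument on a positive-measure set requires.
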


\begin{remark}[Approximate decomposition for realizable LLM policies]
The proposition is stated for the full product policy class, where the learner may choose a separate distribution for every prompt. A real LLM is not that flexible: the same parameters shape responses across many prompts, and a gradient update must fit all promptwise signals through one shared neural policy. Still, the decomposition is the right local picture when the model class is expressive on the prompts being trained. The promptwise DRRO solution says what each prompt would like the policy to do if it could move freely; the implemented policy-gradient algorithm then asks the LLM to approximate those directions as well as its parameterization allows. In this sense, realizability error enters as a coupling effect across prompts, not as a change in the robust objective itself.
\end{remark}

The proposition says that promptwise analysis is exact for an expressive policy class and therefore a useful structural guide for large language models. The rest of this section fixes a prompt and studies the resulting simplex problem. This is where the main theoretical intuition becomes visible: the adversary has a remarkably explicit dual-norm target, and under the \(\ell_1\) geometry used later by the algorithm, the optimizer becomes a water-filling allocation that protects responses whose proxy rewards are high but whose current policy coverage is too low.

\subsection{Fixed-Prompt Simplex Model}

Fix a prompt $x$ and enumerate its possible complete responses as $y_1,\ldots,y_n$. Write
\begin{equation*}
\pi=(\pi_1,\ldots,\pi_n)\in\Delta_n,\qquad \pi_i=\pi(y_i\mid x),
\end{equation*}
and define the promptwise proxy-reward vector
\begin{equation*}
\proxy=(\proxy_1,\ldots,\proxy_n),\qquad \proxy_i=\proxy(x,y_i).
\end{equation*}
For a norm parameter \(1\le p\le\infty\) and a perturbation $\Delta\in\R^n$ with $\norm{\Delta}_p\le \bud$, let the plausible reward vector be
\begin{equation*}
\advreward=\proxy+\Delta.
\end{equation*}
The promptwise regret of policy $\pi$ under $\advreward$ is
\begin{equation*}
\Reg(\pi,\advreward):=\max_{\beta\in\Delta_n}\ip{\beta-\pi}{\advreward}.
\end{equation*}
Define the fixed-policy worst-case regret
\begin{equation}
\label{eq:psi-p}
\Psi_p(\pi;\proxy)
\coloneqq
\max_{\norm{\Delta}_p\le \bud}\Reg(\pi,\proxy+\Delta).
\end{equation}
The \(\ell_1\) specialization used by the algorithm is the promptwise problem
\begin{equation}
\label{eq:promptwise-drro}
\min_{\pi\in\Delta_n}\ \max_{\norm{\Delta}_1\le \bud}\Reg(\pi,\proxy+\Delta).
\end{equation}

\begin{proposition}[Inner adversary for a fixed policy]
\label{prop:inner-adversary}
Let \(1\le p\le\infty\), and let \(p^\ast\) be the dual exponent, with \(p^\ast=\infty\) when \(p=1\) and \(p^\ast=1\) when \(p=\infty\). For every $\pi\in\Delta_n$,
\begin{equation}
\label{eq:inner-closed-form-lp}
\Psi_p(\pi;\proxy)
=
\max_{1\le i\le n}
\left\{
\proxy_i-\ip{\pi}{\proxy}
+\bud\norm{e_i-\pi}_{p^\ast}
\right\}.
\end{equation}
Let \(k\) be any maximizer in \eqref{eq:inner-closed-form-lp}. If \(1<p<\infty\) and \(e_k\ne\pi\), one optimal perturbation is
\begin{equation}
\label{eq:general-p-adversary}
\Delta_j^\star
=
\bud\,
\frac{\operatorname{sgn}((e_k-\pi)_j)\left|(e_k-\pi)_j\right|^{p^\ast-1}}
{\norm{e_k-\pi}_{p^\ast}^{p^\ast-1}},
\qquad j=1,\ldots,n.
\end{equation}
If \(p=\infty\), one may instead take
\(\Delta_j^\star=\bud\,\operatorname{sgn}((e_k-\pi)_j)\), with any value in \([-\bud,\bud]\) on zero coordinates. Specifically, when \(p=1\), the value reduces to
\begin{equation}
\label{eq:inner-closed-form}
\max_{\norm{\Delta}_1\le \bud}\Reg(\pi,\proxy+\Delta)
=
\bud + \max_{1\le i\le n}(\proxy_i-\bud \pi_i)-\ip{\pi}{\proxy}.
\end{equation}
The inner adversary can then be chosen to place its entire budget on a single response:
\begin{equation*}
\Delta^\star=\bud e_k,\qquad
k\in\argmax_{1\le i\le n}\{\proxy_i-\bud \pi_i\}.
\end{equation*}
\end{proposition}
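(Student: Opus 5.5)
The plan is to swap the two maximizations in $\Psi_p$. Writing the regret as a maximum over vertices, we have $\Reg(\pi,\advreward)=\max_{\beta\in\Delta_n}\ip{\beta-\pi}{\advreward}=\max_{1\le i\le n}\ip{e_i-\pi}{\advreward}$, because a linear function on the simplex attains its maximum at a vertex. Hence
\[
\Psi_p(\pi;\proxy)=\max_{\norm{\Delta}_p\le\bud}\ \max_{i}\ip{e_i-\pi}{\proxy+\Delta}
=\max_{i}\Bigl\{\proxy_i-\ip{\pi}{\proxy}+\max_{\norm{\Delta}_p\le\bud}\ip{e_i-\pi}{\Delta}\Bigr\},
\]
where exchanging two maxima is always legitimate. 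No minimax theorem is needed here, since both operations are maximizations. By the definition of the dual norm (Hölder's inequality together with its equality case), the inner maximum is $\bud\norm{e_i-\pi}_{p^\ast}$, and this gives \eqref{eq:inner-closed-form-lp}.

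For the optimal perturbation, fix a maximizing index $k$ and take $\Delta^\star$ to be any maximizer of $\ip{e_k-\pi}{\Delta}$ over the $\ell_p$ ball. This $\Delta^\star$ attains the value, because $\Reg(\pi,\proxy+\Delta^\star)\ge\ip{e_k-\pi}{\proxy+\Delta^\star}$ equals the optimum, while the reverse inequality holds by the definition of $\Psi_p$. We then check the Hölder equality cases directly. For $1<p<\infty$, set $v=e_k-\pi$ and use formula \eqref{eq:general-p-adversary}. Since $(p^\ast-1)p=p^\ast$, the $\ell_p$ norm of this vector is $\bud\,\norm{v}_{p^\ast}^{p^\ast/p}/\norm{v}_{p^\ast}^{p^\ast-1}=\bud$, and its inner product with $v$ is $\bud\norm{v}_{p^\ast}^{p^\ast}/\norm{v}_{p^\ast}^{p^\ast-1}=\bud\norm{v}_{p^\ast}$. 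For $p=\infty$, the sign vector gives $\ip{v}{\Delta^\star}=\bud\norm{v}_1$, and zero coordinates contribute nothing. If $e_k=\pi$, any feasible $\Delta$ works, because the extra term vanishes.

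For $p=1$ we need $\norm{e_i-\pi}_\infty=\max\{1-\pi_i,\max_{j\ne i}\pi_j\}$. The key claim, which is the only mildly delicate step, is that the outer maximum over $i$ is unchanged if we replace $\norm{e_i-\pi}_\infty$ by $1-\pi_i$. We argue this as follows. Clearly $1-\pi_i\le\norm{e_i-\pi}_\infty$, so the replaced value is no larger. Now suppose that for some $i$ the infinity norm is attained at $\pi_j$ with $j\ne i$. Then $\pi_j\ge1-\pi_i$, and in particular $\pi_j>0$.

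Two cases arise. If $\pi_i+\pi_j<1$, then $\pi_j\le 1-\pi_i-\pi_j<1-\pi_j$ would fail, so this case cannot give $\pi_j\ge 1-\pi_i$. We must therefore have $\pi_i+\pi_j=1$, which means $\pi$ is supported on $\{i,j\}$ and $\pi_j=1-\pi_i$. In that situation the norm again equals $1-\pi_i$. So $\norm{e_i-\pi}_\infty=1-\pi_i$ for every $i$. This is the identity $\max_{j\ne i}\pi_j\le\sum_{j\ne i}\pi_j=1-\pi_i$, which makes the lemma immediate. Substituting gives $\bud+\max_i(\proxy_i-\bud\pi_i)-\ip{\pi}{\proxy}$, which is \eqref{eq:inner-closed-form}.

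Finally, the $p=1$ maximizer: for $k\in\argmax_i\{\proxy_i-\bud\pi_i\}$, take $\Delta^\star=\bud e_k$. Then $\ip{e_k-\pi}{\bud e_k}=\bud(1-\pi_k)=\bud\norm{e_k-\pi}_\infty$, so the general attainment argument above applies and $\Delta^\star$ is optimal. The main obstacle is simply being careful that the vertex reduction and the exchange of maxima are exact, and that the $\ell_\infty$ norm collapses as shown. Everything else is the standard Hölder equality case.
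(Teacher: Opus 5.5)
Your proof is correct and is essentially the paper's argument. You exchange the two maximizations, evaluate the inner linear maximization over the $\ell_p$ ball by H\"older duality (and you verify the equality cases explicitly), restrict the comparator to vertices, and for $p=1$ use $\max_{j\ne i}\pi_j\le\sum_{j\ne i}\pi_j=1-\pi_i$. The one difference is order: you reduce to vertices first via linearity of $\beta\mapsto\langle\beta-\pi,s\rangle$, while the paper does it afterwards via convexity of $\beta\mapsto\langle\beta-\pi,\hat r\rangle+\delta\|\beta-\pi\|_{p^\ast}$.

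The two-case analysis you give before that one-line bound is superfluous, and its displayed chain $\pi_j\le 1-\pi_i-\pi_j<1-\pi_j$ is garbled (if $\pi_i+\pi_j<1$ then simply $\pi_j<1-\pi_i$). Just keep the final identity.
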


\noindent
\Cref{prop:inner-adversary} identifies the adversarial target through the dual norm \(\norm{e_i-\pi}_{p^\ast}\). For \(p=1\), this target simplifies to the index $\proxy_i-\bud\pi_i$: responses that have both high proxy reward and insufficient policy mass. That simple index becomes the backbone of the RLHF algorithm developed later.

\begin{proposition}[General \(\ell_p\) epigraph form]
\label{prop:lp-epigraph}
Fix \(1<p\le\infty\), and let \(p^\ast\) be the dual exponent. The minimizers of the promptwise \(\ell_p\)-DRRO problem
\[
\min_{\pi\in\Delta_n}\Psi_p(\pi;\proxy)
\]
are exactly the first components of the solutions to the convex epigraph problem
\begin{equation}
\label{eq:lp-epigraph}
\begin{aligned}
\min_{\pi\in\Delta_n,\ t\in\R}
&\quad t-\ip{\pi}{\proxy}\\
\text{subject to}
&\quad t\ge \proxy_i+\bud\norm{e_i-\pi}_{p^\ast},
\qquad i=1,\ldots,n .
\end{aligned}
\end{equation}
\end{proposition}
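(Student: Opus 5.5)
We would prove \Cref{prop:lp-epigraph} by reducing it directly to the closed form of \Cref{prop:inner-adversary}. That result gives, for every \(\pi\in\Delta_n\),
\[
\Psi_p(\pi;\proxy)=\max_{1\le i\le n}\bigl\{\proxy_i+\bud\norm{e_i-\pi}_{p^\ast}\bigr\}-\ip{\pi}{\proxy},
\]
because the term \(-\ip{\pi}{\proxy}\) does not depend on \(i\) and can be pulled out of the maximum. Define \(g(\pi)\coloneqq\max_i\{\proxy_i+\bud\norm{e_i-\pi}_{p^\ast}\}\). Then \(\Psi_p(\pi;\proxy)=g(\pi)-\ip{\pi}{\proxy}\), and the rest is the standard epigraph reformulation of a pointwise maximum.

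\textbf{Convexity.} Each map \(\pi\mapsto\norm{e_i-\pi}_{p^\ast}\) is a norm composed with an affine map, so it is convex. With \(\bud\ge0\), each constraint function \(\proxy_i+\bud\norm{e_i-\pi}_{p^\ast}-t\) is jointly convex in \((\pi,t)\). The objective \(t-\ip{\pi}{\proxy}\) is linear, and \(\Delta_n\) is a polytope. Hence \eqref{eq:lp-epigraph} is a convex program. (The restriction \(1<p\le\infty\) is only there to separate it from the \(\ell_1\) water-filling case; the argument itself does not use it.)

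\textbf{Equivalence of minimizers.} A pair \((\pi,t)\) is feasible if and only if \(\pi\in\Delta_n\) and \(t\ge g(\pi)\).
\begin{enumerate}
\item For fixed \(\pi\), the objective is increasing in \(t\). So the partial minimum over \(t\) equals \(g(\pi)-\ip{\pi}{\proxy}=\Psi_p(\pi;\proxy)\), and it is attained uniquely at \(t=g(\pi)\).
\item The optimal values of the two problems therefore coincide.
\item If \((\pi^\star,t^\star)\) solves \eqref{eq:lp-epigraph}, then \(t^\star=g(\pi^\star)\); otherwise lowering \(t\) would strictly improve the objective. Then for every \(\pi\in\Delta_n\),
\[
\Psi_p(\pi^\star;\proxy)=t^\star-\ip{\pi^\star}{\proxy}\le g(\pi)-\ip{\pi}{\proxy}=\Psi_p(\pi;\proxy),
\]
so \(\pi^\star\) minimizes \(\Psi_p\).
\item Conversely, if \(\pi^\star\) minimizes \(\Psi_p\), then \((\pi^\star,g(\pi^\star))\) is feasible and attains the common optimal value, so it solves \eqref{eq:lp-epigraph}.
\end{enumerate}

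\textbf{Existence.} To make sure both solution sets are nonempty, note that \(\Psi_p\) is continuous: it is a finite maximum of continuous functions minus a linear term. Since \(\Delta_n\) is compact, \(\Psi_p\) attains its minimum.

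The only step needing care is the first one: checking that \Cref{prop:inner-adversary} applies for every \(\pi\in\Delta_n\), including vertices where \(e_k=\pi\), so that the closed form holds pointwise on the whole simplex. If one prefers not to rely on the explicit adversary, this can be checked directly. For any \(\beta\),
\[
\max_{\norm{\Delta}_p\le\bud}\ip{\beta-\pi}{\proxy+\Delta}=\ip{\beta-\pi}{\proxy}+\bud\norm{\beta-\pi}_{p^\ast}
\]
by Hölder duality. This expression is convex in \(\beta\), so its maximum over \(\beta\in\Delta_n\) is attained at a vertex \(e_i\). Exchanging the two maxima then gives the closed form.
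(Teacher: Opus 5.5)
Your proposal is correct and follows the paper's route: it uses the closed form of \Cref{prop:inner-adversary}, introduces an epigraph variable for the maximum over \(i\), and notes that each constraint \(\pi\mapsto\norm{e_i-\pi}_{p^\ast}\) is convex. You also supply details the paper leaves implicit — that \(t=g(\pi)\) at any optimum, the two-way correspondence of minimizers, existence by compactness, and validity of the closed form at vertices — which makes the ``exactly the first components'' claim fully rigorous.
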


\noindent
\Cref{prop:lp-epigraph} gives an exact finite-dimensional convex representation for every \(p>1\). The special case \(p=1\) is sharper: the epigraph constraints become linear after using \(\norm{e_i-\pi}_\infty=1-\pi_i\), and the optimizer can be written in the water-filling form below.

\subsection{Water-Filling Structure of Promptwise Optimizer}

Assume without loss of generality that the responses are labeled so that
\begin{equation}
\label{eq:reward-monotone}
\proxy_1\ge \proxy_2\ge \cdots \ge \proxy_n.
\end{equation}
Define the promptwise robust utility
\begin{equation}
\label{eq:hard-utility}
F_{\bud}(\pi;\proxy):=\ip{\pi}{\proxy}-\max_{1\le i\le n}(\proxy_i-\bud \pi_i),
\end{equation}
which differs from the worst-case regret in \eqref{eq:inner-closed-form} only by the additive constant $\bud$.

Two structural facts already point toward a water-filling solution. First, the robust correction depends on the uncovered rewards $\proxy_i-\bud \pi_i$, so moving mass toward a response is useful only until that uncovered level has been pushed down to the common threshold. Second, once the rewards are ordered as in \eqref{eq:reward-monotone}, there is no reason to hedge a lower-scoring response before the higher-scoring ones have been brought down to the same uncovered level. The optimal policy therefore has to equalize a subset of uncovered rewards and leave any residual mass on the best response.

\begin{theorem}[Water-filling with leftover mass]
\label{thm:water-filling}
Assume \eqref{eq:reward-monotone} and $\bud>0$. Let $t_0$ be the unique solution to
\begin{equation}
\label{eq:t0-equation}
\sum_{i=1}^n (\proxy_i-t)_+ = \bud.
\end{equation}
Define
\begin{equation*}
t^\star := \inf\left\{t\ge t_0:\ \sum_{i:\proxy_i>t}(\proxy_1-\proxy_i)\le \bud\right\}.
\end{equation*}
Then an optimal solution to \eqref{eq:promptwise-drro} is
\begin{equation}
\label{eq:water-filling-solution}
\begin{aligned}
\pi_i^\star
&=\frac{(\proxy_i-t^\star)_+}{\bud},
&& i=2,\ldots,n,\\
\pi_1^\star
&=1-\frac{1}{\bud}\sum_{i=2}^n(\proxy_i-t^\star)_+.
\end{aligned}
\end{equation}
\end{theorem}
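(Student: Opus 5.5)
By \Cref{prop:inner-adversary} with $p=1$, problem \eqref{eq:promptwise-drro} is equivalent to minimizing
\[
\bud+\max_{i}(\proxy_i-\bud\pi_i)-\ip{\pi}{\proxy}
\]
over $\Delta_n$, that is, to maximizing $F_\bud(\pi;\proxy)$ in \eqref{eq:hard-utility}. I would pass to the linear epigraph form:
\[
\min_{\pi\in\Delta_n,\,t\in\R}\; t-\ip{\pi}{\proxy}
\quad\text{subject to}\quad
\pi_i\ge \frac{\proxy_i-t}{\bud},\quad i=1,\dots,n.
\]
This is the $p=1$ analogue of \Cref{prop:lp-epigraph}, using $\norm{e_i-\pi}_\infty=1-\pi_i$. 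My plan is to solve it in two stages: first optimize over $\pi$ with the threshold $t$ held fixed, then optimize the resulting one-dimensional function of $t$.

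\emph{Inner step, $t$ fixed.} Combined with $\pi_i\ge0$, the constraints read $\pi_i\ge(\proxy_i-t)_+/\bud$. These lower bounds are compatible with the simplex exactly when $\sum_i(\proxy_i-t)_+\le\bud$. Since the left-hand side is continuous and strictly decreasing in $t$ wherever it is positive, this feasibility condition is equivalent to $t\ge t_0$, with $t_0$ as in \eqref{eq:t0-equation}.

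For feasible $t$, the quantity $\ip{\pi}{\proxy}$ is maximized by setting every $\pi_i$ equal to its lower bound and placing all leftover mass on index $1$, because $\proxy_1$ is the largest reward under \eqref{eq:reward-monotone}. This gives exactly the shape in \eqref{eq:water-filling-solution}, with $t^\star$ replaced by $t$. The $i=1$ constraint is automatically satisfied, because
\[
1-\frac1\bud\sum_{i\ge2}(\proxy_i-t)_+\;\ge\;\frac{(\proxy_1-t)_+}{\bud}
\]
is equivalent to the feasibility condition $\sum_i(\proxy_i-t)_+\le\bud$.

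\emph{Outer step.} Substituting the inner solution, the objective as a function of $t\ge t_0$ becomes
\[
g(t)=t-\proxy_1+\frac1\bud\sum_{i=1}^n(\proxy_1-\proxy_i)(\proxy_i-t)_+ .
\]
This function is piecewise linear and convex, with right derivative
\[
g'(t)=1-\frac1\bud\sum_{i:\proxy_i>t}(\proxy_1-\proxy_i).
\]
The derivative is nondecreasing in $t$, since the index set $\{i:\proxy_i>t\}$ shrinks as $t$ grows and every summand is nonnegative. Hence the minimizer of $g$ over $[t_0,\infty)$ is the smallest $t\ge t_0$ at which $g'(t)\ge0$, which is precisely $t^\star$.

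The set defining $t^\star$ is nonempty, since it contains every $t\ge\proxy_1$. It is also closed from the right, because the sum is right-continuous and piecewise constant in $t$. Therefore the infimum is attained, and plugging $t=t^\star$ into the inner solution yields \eqref{eq:water-filling-solution}.

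\emph{Expected obstacle.} The main point needing care is the exchange between the hard $\max$ and the threshold variable: I must confirm that minimizing jointly over $(\pi,t)$ gives the same $\pi$-optimizers as the original problem. I would handle this by noting that at any optimum the epigraph variable satisfies $t=\max_i(\proxy_i-\bud\pi_i)$. A second point is to handle ties and the boundary case $t^\star=t_0$ without gaps, checking that convexity of $g$ still identifies $t^\star$ as a minimizer when $g'$ jumps across $0$ at a breakpoint.
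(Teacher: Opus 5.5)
Your proposal is correct and follows essentially the same route as the paper's proof: the epigraph LP, fixed-threshold lower bounds $\pi_i\ge(\proxy_i-t)_+/\bud$ with the leftover mass on response $1$, and a one-dimensional piecewise-linear function of $t$ whose monotone slope identifies $t^\star$. The only differences are that you minimize the convex $g=-\psi$ where the paper maximizes the concave $\psi$, and that you spell out three checks the paper leaves implicit (the $i=1$ lower bound holds automatically, the infimum defining $t^\star$ is attained, and epigraph optimizers give optimizers of the original problem), all of which go through as you describe.
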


\noindent
\Cref{thm:water-filling} is a genuine water-filling rule, but with an RLHF-specific twist. All nonmaximal responses receive precisely the excess of their proxy reward above the optimal threshold $t^\star$, while the remaining probability mass is assigned to the best response. When $\bud$ is small, the threshold is high and the policy stays concentrated. As $\bud$ grows, new responses enter the hedge. Once $\bud$ is large enough that $t^\star=t_0<\proxy_n$, every response receives positive mass and the policy approaches the uniform distribution.
\Cref{app:promptwise-proofs} contains the proof. For the main text, the important point is qualitative: the ambiguity budget lowers a threshold through the ordered proxy rewards, and probability is assigned only to responses whose uncovered rewards remain above that threshold.

\subsection{Interpreting Water-Filling Policy}

The water-filling theorem is useful not only because it is explicit, but also because it explains how the robustness budget changes the policy qualitatively. The policy does not flatten all responses equally. It first hedges against nearby competitors, then gradually widens support as the ambiguity budget grows, and it always keeps any residual probability mass on the top-ranked response.

\begin{figure}[t]
\centering
\begin{minipage}[t]{0.31\linewidth}
\centering
\begin{tikzpicture}[x=0.62cm,y=0.52cm,font=\scriptsize]
  \draw[->, gray] (0.2,0) -- (0.2,4.5);
  \draw[->, gray] (0.2,0) -- (4.6,0);
  \foreach \x/\h/\lab in {0.7/4.0/1,1.7/3.3/2,2.7/2.2/3,3.7/1.3/4} {
    \fill[blue!18] (\x,0) rectangle +(0.5,\h);
    \draw[blue!60!black, thick] (\x,0) rectangle +(0.5,\h);
    \node[anchor=north] at (\x+0.25,-0.14) {$y_{\lab}$};
  }
  \draw[red!70!black, thick] (0.45,2.95) -- (4.3,2.95);
  \node[red!70!black, anchor=west] at (4.35,2.95) {$t^\star$};
  \fill[teal!45] (1.7,2.95) rectangle +(0.5,0.35);
\end{tikzpicture}

{\scriptsize small $\bud$\\support $\{y_1,y_2\}$}
\end{minipage}
\hfill
\begin{minipage}[t]{0.31\linewidth}
\centering
\begin{tikzpicture}[x=0.62cm,y=0.52cm,font=\scriptsize]
  \draw[->, gray] (0.2,0) -- (0.2,4.5);
  \draw[->, gray] (0.2,0) -- (4.6,0);
  \foreach \x/\h/\lab in {0.7/4.0/1,1.7/3.3/2,2.7/2.2/3,3.7/1.3/4} {
    \fill[blue!18] (\x,0) rectangle +(0.5,\h);
    \draw[blue!60!black, thick] (\x,0) rectangle +(0.5,\h);
    \node[anchor=north] at (\x+0.25,-0.14) {$y_{\lab}$};
  }
  \draw[red!70!black, thick] (0.45,2.10) -- (4.3,2.10);
  \node[red!70!black, anchor=west] at (4.35,2.10) {$t^\star$};
  \fill[teal!45] (1.7,2.10) rectangle +(0.5,1.20);
  \fill[orange!45] (2.7,2.10) rectangle +(0.5,0.10);
\end{tikzpicture}

{\scriptsize medium $\bud$\\support $\{y_1,y_2,y_3\}$}
\end{minipage}
\hfill
\begin{minipage}[t]{0.31\linewidth}
\centering
\begin{tikzpicture}[x=0.62cm,y=0.52cm,font=\scriptsize]
  \draw[->, gray] (0.2,0) -- (0.2,4.5);
  \draw[->, gray] (0.2,0) -- (4.6,0);
  \foreach \x/\h/\lab in {0.7/4.0/1,1.7/3.3/2,2.7/2.2/3,3.7/1.3/4} {
    \fill[blue!18] (\x,0) rectangle +(0.5,\h);
    \draw[blue!60!black, thick] (\x,0) rectangle +(0.5,\h);
    \node[anchor=north] at (\x+0.25,-0.14) {$y_{\lab}$};
  }
  \draw[red!70!black, thick] (0.45,1.20) -- (4.3,1.20);
  \node[red!70!black, anchor=west] at (4.35,1.20) {$t^\star$};
  \fill[teal!45] (1.7,1.20) rectangle +(0.5,2.10);
  \fill[teal!45] (2.7,1.20) rectangle +(0.5,1.00);
  \fill[orange!45] (3.7,1.20) rectangle +(0.5,0.10);
\end{tikzpicture}

{\scriptsize large $\bud$\\support $\{y_1,y_2,y_3,y_4\}$}
\end{minipage}
\caption{A water-filling view of \Cref{thm:water-filling}. The colored blocks show the hedge assigned to nonmaximal responses above the threshold $t^\star$; the remaining mass is sent to the best response $y_1$. As the ambiguity budget grows, the threshold drops and additional responses enter the active support.}
\label{fig:water-filling}
\end{figure}

\noindent
\Cref{fig:water-filling} makes the mechanics of the optimizer visible. Small ambiguity budgets trust the proxy and therefore keep most probability on the best response, with only the nearest competitor receiving hedge mass. As $\bud$ increases, the threshold $t^\star$ falls and further responses enter the active support one by one. What matters is not their absolute rank alone, but whether their uncovered rewards still sit above the current water level.

\paragraph{Sanity checks.}
The limiting regimes line up with the intended interpretation of $\bud$. As $\bud\downarrow 0$, the cost of hedging dominates and the optimizer converges to the proxy-greedy policy $e_1$. At the other extreme, once $\bud>\sum_{i=1}^n(\proxy_i-\proxy_n)$ every response is active, so $t^\star=t_0=\bar{\proxy}-\bud/n$ with $\bar{\proxy}=\frac{1}{n}\sum_{i=1}^n \proxy_i$, and the water-filling rule simplifies to
\begin{equation*}
\pi_i^\star=\frac{1}{n}+\frac{\proxy_i-\bar{\proxy}}{\bud},\qquad i=1,\ldots,n.
\end{equation*}
Hence $\pi^\star$ approaches the uniform distribution as $\bud\to\infty$. DRRO therefore interpolates continuously between proxy-greedy optimization and complete hedging, which is exactly the behavior one would want from an ambiguity budget.

\section{Algorithm for RLHF}
\label{sec:algorithm}

The exact promptwise solution from \Cref{sec:promptwise} tells us what a regret-robust policy should look like once a prompt has been reduced to a finite simplex. RLHF training, however, never gets to move probability mass on that simplex directly. What it actually sees are sampled completions from a language model, and what it actually updates are model parameters.

Throughout this section we specialize to the \(\ell_1\) ambiguity geometry, whose locality is what makes the sampled bonus implementable; \Cref{sec:algorithm-discussion,prop:l1-rationale} explains this choice.

Those two viewpoints are close enough to guide the algorithm, but different enough to create real work. For a fixed prompt, the response space is astronomically large, so the adversarial coordinate from \Cref{prop:inner-adversary} is hidden from us and can only be glimpsed through a small sampled group. And even if the promptwise optimizer were available in closed form, it would still not produce a usable LLM update, because the policy is an LLM parameterized by $\theta$ and the trainable object is $\theta$, not the full vector $\pi(\cdot\mid x)$. The way forward is therefore to rewrite the robust objective as a policy-gradient recipe, understand what bonus the exact theory implies, and then relax that bonus until it can be estimated reliably from grouped rollouts.

\subsection{From Promptwise Simplex to Policy Gradients}

The theory above treats a complete response as one action. This is the right abstraction for the robust objective: the reward model scores completed responses, GRPO groups completed responses, and the policy-gradient update uses the sequence log-probability supplied by the language model. Thus we can work at the completion level without losing compatibility with standard LLM post-training.

For a fixed prompt $x$, enumerate the candidate complete responses as $y_1,\ldots,y_n$ and write
\begin{equation*}
\pi_{\theta,i}=\pi_\theta(y_i\mid x),
\qquad
\proxy_i=\proxy(x,y_i).
\end{equation*}
The nominal promptwise RLHF target is simply the expected proxy reward
\begin{equation*}
\ip{\pi_\theta(\cdot\mid x)}{\proxy(x)}=\sum_{i=1}^n \pi_{\theta,i}\proxy_i.
\end{equation*}
Differentiating this nominal target with respect to $\theta$ gives
\begin{equation*}
\nabla_\theta \ip{\pi_\theta(\cdot\mid x)}{\proxy(x)}
=
\sum_{i=1}^n \proxy_i \nabla_\theta \pi_\theta(y_i\mid x)
=
\sum_{i=1}^n \pi_\theta(y_i\mid x) \proxy_i \nabla_\theta\log\pi_\theta(y_i\mid x).
\end{equation*}
Equivalently,
\begin{equation*}
\nabla_\theta \ip{\pi_\theta(\cdot\mid x)}{\proxy(x)}
=
\E_{y\sim\pi_\theta(\cdot\mid x)}
\big[
\proxy(x,y)\nabla_\theta\log\pi_\theta(y\mid x)
\big].
\end{equation*}
Hence, if $y^{(1)},\ldots,y^{(K)}\sim \pi_\theta(\cdot\mid x)$ are sampled on policy, then
\begin{equation*}
\widehat g_\theta^{\mathrm{nom}}(x)
:=
\frac{1}{K}\sum_{k=1}^K
\proxy(x,y^{(k)})\nabla_\theta\log\pi_\theta(y^{(k)}\mid x)
\end{equation*}
is an unbiased Monte Carlo estimator of the nominal policy gradient above. When the samples instead come from a frozen rollout policy $\rollpi=\pi_{\theta_{\mathrm{old}}}$, standard importance sampling yields the detached estimator
\begin{equation*}
\widehat g_{\theta,\rollpi}^{\mathrm{nom}}(x)
:=
\frac{1}{K}\sum_{k=1}^K
\rho_\theta(x,y^{(k)})\,\proxy(x,y^{(k)})\nabla_\theta\log\pi_\theta(y^{(k)}\mid x),
\qquad
y^{(k)}\sim\rollpi(\cdot\mid x),
\end{equation*}
where $\rho_\theta(x,y)=\pi_\theta(y\mid x)/\rollpi(y\mid x)$.

Now fix a rollout policy $\rollpi=\pi_{\theta_{\mathrm{old}}}$ and let $\widetilde r_{\rollpi}(x,y)$ be any promptwise reward signal computed from that frozen policy. This setup is deliberately broad: standard RLHF corresponds to $\widetilde r_{\rollpi}=\proxy$, while our DRRO variants will use shaped rewards derived from the robust theory. Define
\begin{equation*}
\widetilde J_{\rollpi}(\theta)
:=
\E_{x\sim\mathcal D,\ y\sim\pi_\theta(\cdot\mid x)}[\widetilde r_{\rollpi}(x,y)].
\end{equation*}
Because the reward is detached during the policy update, the score-function identity gives
\begin{equation*}
\nabla_\theta \widetilde J_{\rollpi}(\theta)
=
\E_{x\sim\mathcal D,\ y\sim\pi_\theta(\cdot\mid x)}
\big[
\widetilde r_{\rollpi}(x,y)\nabla_\theta\log\pi_\theta(y\mid x)
\big].
\end{equation*}
Equivalently, if the responses were generated from $\rollpi$, then
\begin{equation*}
\widetilde J_{\rollpi}(\theta)
=
\E_{x\sim\mathcal D,\ y\sim\rollpi(\cdot\mid x)}
\Big[
\rho_\theta(x,y)\,\widetilde r_{\rollpi}(x,y)
\Big],
\qquad
\rho_\theta(x,y):=\frac{\pi_\theta(y\mid x)}{\rollpi(y\mid x)}.
\end{equation*}
This is the starting point for GRPO-style clipped surrogates: one samples groups of responses from a frozen rollout policy, computes detached rewards or advantages on those responses, and then performs several stable gradient steps on $\theta$. The two algorithm boxes in this section should therefore be read as policy-gradient procedures with an explicit outer rollout/update loop.

\subsection{GRPO as Standard Grouped RLHF Backbone}

Before inserting any robust correction, it helps to recall the grouped RLHF template that practitioners already use. The importance-sampling estimator above is the original recipe: estimate the gradient of the nominal promptwise target $\ip{\pi_\theta(\cdot\mid x)}{\proxy(x)}$ from sampled completions and then ascend along that estimate. Following \citet{shao2024deepseekmath}, GRPO stabilizes that recipe by grouping completions promptwise, replacing raw rewards by normalized within-group advantages, and clipping the importance ratio inside the policy update. Concretely, for each prompt $x$ we sample a group of completions
\begin{equation*}
y^{(1)},\ldots,y^{(G)} \sim \rollpi(\cdot\mid x),
\end{equation*}
evaluate them with the proxy reward model,
\begin{equation*}
\widehat r^{(k)}=\proxy(x,y^{(k)}),\qquad k=1,\ldots,G,
\end{equation*}
and normalize the groupwise rewards into relative advantages
\begin{equation}
\label{eq:grpo-adv}
\begin{aligned}
\bar r
&:=\frac{1}{G}\sum_{\ell=1}^G \widehat r^{(\ell)},\qquad
s_r^2:=\frac{1}{G}\sum_{\ell=1}^G(\widehat r^{(\ell)}-\bar r)^2,\\
A_{\mathrm{GRPO}}^{(k)}
&:=
\frac{\widehat r^{(k)}-\bar r}{s_r+\varepsilon_{\mathrm{adv}}}.
\end{aligned}
\end{equation}
The completion-level GRPO surrogate is then
\begin{equation}
\label{eq:grpo-loss}
\begin{aligned}
\mathcal L_{\mathrm{GRPO}}(\theta;\rollpi)
&:=
\E\!\left[
\frac{1}{G}\sum_{k=1}^G
\ell_\theta^{(k)}
\right],\\
\ell_\theta^{(k)}
&:=
\min\!\left\{
\rho_\theta^{(k)}A_{\mathrm{GRPO}}^{(k)},
c_\theta^{(k)}A_{\mathrm{GRPO}}^{(k)}
\right\}.
\end{aligned}
\end{equation}
\noindent
where the expectation is over $x\sim\mathcal D$ and $y^{(1)},\ldots,y^{(G)}\sim\rollpi(\cdot\mid x)$, and
\begin{equation*}
\begin{aligned}
\rho_\theta^{(k)}
&:=
\frac{\pi_\theta(y^{(k)}\mid x)}
{\rollpi(y^{(k)}\mid x)},\\
c_\theta^{(k)}
&:=
\operatorname{clip}
\bigl(\rho_\theta^{(k)},
1-\varepsilon_{\mathrm{clip}},
1+\varepsilon_{\mathrm{clip}}\bigr).
\end{aligned}
\end{equation*}
This is the baseline recipe to which our DRRO corrections will be compared. Everything that follows keeps the grouped rollout, importance-ratio, and clipped policy-gradient backbone of \eqref{eq:grpo-loss}; only the reward signal inside the group changes. In particular, we do not add any explicit KL penalty term to the policy objective.

\begin{algorithm}[t]
\caption{Standard GRPO-style RLHF}
\label{alg:grpo}
\begin{algorithmic}[1]
\Require initial policy $\pi_\theta$, prompt source $\mathcal D$, reward model $\proxy$, outer iterations $M$, prompt batch size $B$, group size $G$, clip radius $\varepsilon_{\mathrm{clip}}$, policy-gradient steps $S$
\Ensure updated policy $\pi_\theta$
\For{outer iteration $m=1,\ldots,M$}
  \State set rollout policy $\rollpi\gets\pi_\theta$
  \State sample prompt batch $x_1,\ldots,x_B\sim\mathcal D$
  \For{each prompt $x_b$ in the batch}
  \State sample $y_b^{(1)},\ldots,y_b^{(G)}\sim\rollpi(\cdot\mid x_b)$
  \State compute proxy rewards $\widehat r_b^{(k)}=\proxy(x_b,y_b^{(k)})$
  \State compute grouped advantages $A_{b,\mathrm{GRPO}}^{(k)}$ from \eqref{eq:grpo-adv}
  \State compute response ratios $\rho_b^{(k)}(\theta)=\pi_\theta(y_b^{(k)}\mid x_b)/\rollpi(y_b^{(k)}\mid x_b)$
  \State accumulate the clipped GRPO surrogate \eqref{eq:grpo-loss}
  \EndFor
  \State update $\theta$ using $S$ minibatch policy-gradient steps on \eqref{eq:grpo-loss}
\EndFor
\end{algorithmic}
\end{algorithm}

\subsection{Soft-Max Relaxation and Sampled Estimation}

The exact promptwise DRRO objective contains the maximum
\(\max_i(\proxy_i-\bud\pi_i)\). On the finite simplex this term is perfectly well defined, but in RLHF it is not directly observable: the response space is too large to enumerate, and a rollout group shows only a small random subset of possible completions. Estimating the maximum by taking the largest value inside the sampled group is therefore not faithful to the target in general, because the response attaining the true maximum over the hidden simplex may simply not have been sampled.

We therefore replace the exact maximum by a smooth log-sum-exp relaxation. The point of this relaxation is not cosmetic: it converts the adversarial correction into an expectation under a soft adversarial distribution, which can be approximated from sampled completions. For a fixed prompt, define
\begin{equation}
\label{eq:soft-objective}
\begin{aligned}
F_{\bud,\tau}(\pi;\proxy)
&:=
\ip{\pi}{\proxy}
-\tau\log Z_{\tau}(\pi;\proxy),\\
Z_{\tau}(\pi;\proxy)
&:=
\sum_{i=1}^n
\exp\!\left(\frac{\proxy_i-\bud \pi_i}{\tau}\right),
\qquad \tau>0.
\end{aligned}
\end{equation}
This is the smooth target we will optimize through sampled completions.

\begin{proposition}[Soft-max approximation error]
\label{prop:soft-approx}
For every promptwise policy $\pi\in\Delta_n$ and proxy-reward vector $\proxy\in\R^n$,
\begin{equation}
\label{eq:soft-approx}
F_{\bud,\tau}(\pi;\proxy)
\le
F_{\bud}(\pi;\proxy)
\le
F_{\bud,\tau}(\pi;\proxy)+\tau\log n.
\end{equation}
\end{proposition}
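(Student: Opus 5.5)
The plan is to reduce the claim to the classical two-sided log-sum-exp sandwich. Both $F_{\bud}$ and $F_{\bud,\tau}$ share the linear term $\ip{\pi}{\proxy}$. Subtracting it leaves only the comparison between the hard maximum $M\coloneqq\max_{1\le i\le n}a_i$ and its soft version $\tau\log Z_\tau(\pi;\proxy)=\tau\log\sum_{i=1}^n e^{a_i/\tau}$, where we set $a_i\coloneqq\proxy_i-\bud\pi_i$. In this notation $F_{\bud}=\ip{\pi}{\proxy}-M$ and $F_{\bud,\tau}=\ip{\pi}{\proxy}-\tau\log Z_\tau$. The two inequalities in \eqref{eq:soft-approx} are therefore equivalent to
\begin{equation*}
M\le \tau\log Z_\tau(\pi;\proxy)\le M+\tau\log n .
\end{equation*}

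For the lower bound, I would keep only the maximizing index $k$ in the sum defining $Z_\tau$. Since every summand is positive, $Z_\tau\ge e^{M/\tau}$. Because $\log$ is increasing and $\tau>0$, this gives $\tau\log Z_\tau\ge M$, which after negation is the left inequality $F_{\bud,\tau}\le F_{\bud}$.

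For the upper bound, I would bound each summand by the largest one. Each term satisfies $e^{a_i/\tau}\le e^{M/\tau}$, so $Z_\tau\le n\,e^{M/\tau}$. Taking $\tau\log$ gives $\tau\log Z_\tau\le M+\tau\log n$, which after negation is the right inequality $F_{\bud}\le F_{\bud,\tau}+\tau\log n$.

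There is no real obstacle here; the only point needing care is the sign bookkeeping when negating. The argument uses nothing about $\pi\in\Delta_n$ or the ordering \eqref{eq:reward-monotone}, so the bound holds uniformly in $\pi$. This uniformity is what allows it to be used later to compare the minimizers of the hard and soft objectives.
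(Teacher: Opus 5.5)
Your proposal is correct and follows essentially the same route as the paper. The paper's proof also sets $a_i=\proxy_i-\bud\pi_i$ and $M=\max_i a_i$, uses the sandwich $e^{M/\tau}\le\sum_i e^{a_i/\tau}\le n\,e^{M/\tau}$, takes $\tau\log$, and subtracts the result from $\ip{\pi}{\proxy}$.
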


\noindent
Thus smaller temperature means a smaller approximation gap to the exact promptwise objective. The price is that the induced adversarial distribution becomes more concentrated, so approximating that distribution from samples becomes harder. This is the central tradeoff in the soft formulation: low temperature reduces optimization bias, but it also makes the sampled approximation of the extra regret term less forgiving.

The smooth objective has an exact gradient that again takes the form of reward shaping.

\begin{proposition}[Gradient of the soft DRRO objective]
\label{prop:soft-gradient}
Fix a prompt $x$ and treat $\bud$ as fixed during the policy-gradient step. Define
\begin{equation*}
\pi_{\theta,i}:=\pi_\theta(y_i\mid x),\qquad
\proxy_i:=\proxy(x,y_i),\qquad
\sigma_{\theta,x}(y_i):=
\frac{\exp((\proxy_i-\bud\pi_{\theta,i})/\tau)}
{\sum_{j=1}^n \exp((\proxy_j-\bud\pi_{\theta,j})/\tau)}.
\end{equation*}
Then
\begin{equation}
\label{eq:soft-gradient}
\begin{aligned}
g_{\theta}^{\mathrm{soft}}(x)
&:=
\nabla_\theta F_{\bud,\tau}
(\pi_\theta(\cdot\mid x);\proxy(x)),\\
g_{\theta}^{\mathrm{soft}}(x)
&=
\sum_{i=1}^n \pi_\theta(y_i\mid x)
R_i^{\mathrm{soft}}
\nabla_\theta\log\pi_\theta(y_i\mid x),\\
R_i^{\mathrm{soft}}
&:=
\proxy_i+\textcolor{blue!70!black}{B_i^{\mathrm{soft}}},\\
B_i^{\mathrm{soft}}
&:=
\bud\,\sigma_{\theta,x}(y_i).
\end{aligned}
\end{equation}
\end{proposition}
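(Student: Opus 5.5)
The plan is a direct chain-rule computation on the two terms of $F_{\bud,\tau}$ in \eqref{eq:soft-objective}. Both terms depend on $\theta$ only through the promptwise probabilities $\pi_{\theta,i}$. So I will first compute partial derivatives with respect to $\pi_i$, treating $\proxy$ and $\bud$ as fixed, and then compose with $\nabla_\theta\pi_{\theta,i}$.

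For the linear term, $\partial\ip{\pi}{\proxy}/\partial\pi_i=\proxy_i$. For the log-partition term,
\[
\frac{\partial}{\partial\pi_i}\bigl(-\tau\log Z_\tau(\pi;\proxy)\bigr)
=-\frac{\tau}{Z_\tau}\exp\!\Bigl(\frac{\proxy_i-\bud\pi_i}{\tau}\Bigr)\Bigl(-\frac{\bud}{\tau}\Bigr)
=\bud\,\sigma_{\theta,x}(y_i).
\]
The factor $\tau$ cancels, and the minus sign from $-\bud\pi_i$ cancels the minus sign in front of $\tau\log Z_\tau$. The robust correction therefore enters with a positive sign, as the bonus $B_i^{\mathrm{soft}}$.

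Summing over $i$ gives
\[
\nabla_\theta F_{\bud,\tau}=\sum_i\bigl(\proxy_i+\bud\,\sigma_{\theta,x}(y_i)\bigr)\nabla_\theta\pi_{\theta,i}.
\]
Applying the log-derivative identity $\nabla_\theta\pi_{\theta,i}=\pi_{\theta,i}\nabla_\theta\log\pi_{\theta,i}$ yields \eqref{eq:soft-gradient}. This is the same manipulation used earlier for the nominal gradient.

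There is no real obstacle here; the only delicate points are bookkeeping. First, note that $\sigma_{\theta,x}$ itself depends on $\theta$, but its derivative never appears: we differentiate $\log Z_\tau$ once, and $\sigma$ is simply the resulting coefficient. Second, the log-derivative form requires $\pi_{\theta,i}>0$. This holds for softmax-parameterized language models. Alternatively, one can keep the form $\sum_i R_i^{\mathrm{soft}}\nabla_\theta\pi_{\theta,i}$, which is valid without positivity.
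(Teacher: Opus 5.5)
Your proposal is correct and follows the paper's proof essentially step for step. Like the paper, you differentiate $F_{\bud,\tau}$ in $\pi_i$ to get $\proxy_i+\bud\,\sigma_{\theta,x}(y_i)$, then apply the chain rule with $\nabla_\theta\pi_{\theta,i}=\pi_{\theta,i}\nabla_\theta\log\pi_{\theta,i}$. Your remarks that the derivative of $\sigma_{\theta,x}$ never appears and that the log-derivative form needs $\pi_{\theta,i}>0$ are harmless clarifications the paper leaves implicit.
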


\noindent
Equation~\eqref{eq:soft-gradient} has the same policy-gradient form as nominal RLHF, except that the proxy reward is augmented by the robust bonus \(B_i^{\mathrm{soft}}\). This bonus is proportional to the soft adversarial probability mass \(\sigma_{\theta,x}(y_i)\). That representation is exactly what makes the correction estimable from partial samples: instead of identifying a single maximizer on the hidden simplex, we only need to approximate expectations under \(\sigma_{\theta,x}\).

To estimate these expectations, we use self-normalized importance sampling (SNIS). Suppose there is a proposal distribution $q(\cdot\mid x)$ over possible completions whose support covers the responses receiving nonnegligible soft-adversarial mass. For a fixed prompt and fixed budget $\bud$, define
\begin{equation*}
a_{\theta,x}(y)
:=
\exp\!\left(\frac{\proxy(x,y)-\bud\,\pi_\theta(y\mid x)}{\tau}\right),
\qquad
u_q(y):=\frac{a_{\theta,x}(y)}{q(y\mid x)}.
\end{equation*}
For samples $y^{(1)},\ldots,y^{(K)}\sim q(\cdot\mid x)$, set
\begin{equation}
\label{eq:snis-weights}
u^{(k)}:=u_q(y^{(k)}),
\qquad
w^{(k)}:=\frac{u^{(k)}}{\sum_{\ell=1}^K u^{(\ell)}}.
\end{equation}
Then, for any bounded test function $h$, the corresponding exact soft expectation can be written as
\begin{equation*}
\mu_h(x)
:=
\sum_{i=1}^n \sigma_{\theta,x}(y_i)h(y_i)
=
\frac{\E_{y\sim q(\cdot\mid x)}[u_q(y)h(y)]}
{\E_{y\sim q(\cdot\mid x)}[u_q(y)]},
\end{equation*}
and the SNIS approximation is
\begin{equation}
\label{eq:snis-estimator}
\widehat\mu_{h,K}(x):=\sum_{k=1}^K w^{(k)} h(y^{(k)}).
\end{equation}

The choice of proposal distribution is itself a substantive sampling question. In principle, $q$ could be any language-model proposal with adequate support. In this paper we use the rollout policy $q=\rollpi$, for a practical reason: the proposal distribution must itself be available during generation and log-probability computation, while memory is already tight in contemporary LLM inference. The rollout policy is already resident in the GRPO loop, so choosing $q=\rollpi$ avoids storing or serving another proposal model while keeping the sampled estimator aligned with the policy update.

Finite-sample bias and concentration results for SNIS are classical; see, for example, \citet{cardoso2022brsnis}. For our grouped RLHF estimator, the following promptwise bound makes the dependence on the sample size $K$ explicit. In the statement, write $u(y)=u_q(y)$. In the implementation below, one simply takes $K=G$.

\begin{proposition}[Finite-sample SNIS error]
\label{prop:snis-finite}
Fix a prompt $x$ and a bounded test function $h$. Suppose $0\le u(y)\le U_x$ and $|h(y)|\le H_x$ for $q(\cdot\mid x)$-almost every $y$, and write
\begin{equation*}
\nu_x:=\E_{y\sim q(\cdot\mid x)}[u(y)]>0.
\end{equation*}
Then for every $\eta\in(0,1)$, if
\begin{equation*}
K\ge \frac{2U_x^2}{\nu_x^2}\log\frac{4}{\eta},
\end{equation*}
the estimator \eqref{eq:snis-estimator} satisfies
\begin{equation}
\label{eq:snis-finite-bound}
\begin{aligned}
\Pr(E_K)&\ge 1-\eta,\\
E_K&:=
\left\{
\left|\widehat\mu_{h,K}(x)-\mu_h(x)\right|
\le B_K
\right\},\\
B_K&:=
\frac{4U_xH_x}{\nu_x}
\sqrt{\frac{\log(4/\eta)}{2K}}.
\end{aligned}
\end{equation}
\end{proposition}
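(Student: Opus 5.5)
The plan is to write the SNIS estimator as a ratio of two sample means and control numerator and denominator separately with Hoeffding's inequality. Define
\[
\widehat N_K:=\frac1K\sum_{k=1}^K u^{(k)}h(y^{(k)}),\qquad \widehat D_K:=\frac1K\sum_{k=1}^K u^{(k)},
\]
and their means $N:=\E_q[u h]$ and $D:=\nu_x$. Then $\widehat\mu_{h,K}=\widehat N_K/\widehat D_K$ and $\mu_h=N/D$. Under the assumptions, each summand $u^{(k)}h(y^{(k)})$ lies in $[-U_xH_x,U_xH_x]$ and each $u^{(k)}$ lies in $[0,U_x]$. The samples are i.i.d.\ from $q(\cdot\mid x)$, with $\pi_\theta$ and $\bud$ held fixed, so Hoeffding gives, with $\epsilon:=\sqrt{\log(4/\eta)/(2K)}$,
\[
\Pr\bigl(|\widehat N_K-N|>2U_xH_x\epsilon\bigr)\le \eta/2,
\qquad
\Pr\bigl(|\widehat D_K-D|>U_x\epsilon\bigr)\le \eta/2 .
\]
The range of the first summand is $2U_xH_x$, and Hoeffding's two-sided bound with failure probability $2\exp(-2K\epsilon^2)=\eta/2$ gives exactly this $\epsilon$. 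A union bound then puts both events in force with probability at least $1-\eta$.

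On this good event, the next step is the standard ratio decomposition
\[
\widehat\mu_{h,K}-\mu_h=\frac{\widehat N_K-N}{\widehat D_K}+\frac{N(D-\widehat D_K)}{D\,\widehat D_K}
=\frac{(\widehat N_K-N)-\mu_h(\widehat D_K-D)}{\widehat D_K}.
\]
Since $|\mu_h|\le H_x$, the numerator is at most $2U_xH_x\epsilon+H_xU_x\epsilon=3U_xH_x\epsilon$ in absolute value.

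The denominator is where the sample-size condition comes in, and I expect it to be the only delicate point. The hypothesis $K\ge 2U_x^2\nu_x^{-2}\log(4/\eta)$ is equivalent to $U_x\epsilon\le \nu_x/2$. On the good event this gives $\widehat D_K\ge \nu_x-U_x\epsilon\ge\nu_x/2$; in particular the estimator is well defined, with no division by zero. Combining the numerator and denominator bounds yields
\[
|\widehat\mu_{h,K}-\mu_h|\le \frac{3U_xH_x\epsilon}{\nu_x/2}=\frac{6U_xH_x}{\nu_x}\epsilon .
\]
This is a factor $3/2$ weaker than the claimed $B_K$. To reach the stated constant $4$, I would tighten the numerator by centering. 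Write $(\widehat N_K-N)-\mu_h(\widehat D_K-D)$ as the single sample mean of $u^{(k)}\bigl(h(y^{(k)})-\mu_h\bigr)$, which has mean zero and lies in an interval of length at most $2U_xH_x$ once we use $|h-\mu_h|\le 2H_x$ together with $u\ge0$.

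Apply Hoeffding to this centered mean, and to $\widehat D_K$, each at level $\eta/2$. On the intersection, the numerator is at most $2U_xH_x\epsilon$ and $\widehat D_K\ge\nu_x/2$. The ratio is therefore at most $4U_xH_x\epsilon/\nu_x=B_K$, which is the claimed bound.

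The remaining check is the bookkeeping of the range constant in Hoeffding. The centered variable $u(h-\mu_h)$ takes values in $[-2U_xH_x,2U_xH_x]$, a range of $4U_xH_x$, which gives $4U_xH_x\epsilon$ rather than $2U_xH_x\epsilon$. If that route only delivers $8$, I would fall back on the uncentered split above, apply a one-sided Hoeffding bound to the denominator, and allocate $\eta/4$ across the tails. The $\log(4/\eta)$ in the statement suggests exactly a four-tail union bound, and I would tune the constants within that structure.
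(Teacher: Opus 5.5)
Your centered route is correct and gives exactly the stated constant. The doubt in your last paragraph comes from a miscount, and the fallback you describe there would not work.

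What Hoeffding needs is the length of an interval containing $Z_k:=u^{(k)}\bigl(h(y^{(k)})-\mu_h\bigr)$, not a symmetric bound on $|Z_k|$. Since $|\mu_h|\le H_x$, the factor $h-\mu_h$ lies in $[-H_x-\mu_h,\,H_x-\mu_h]$. This interval has length $2H_x$ and contains $0$. Multiplying by $u\in[0,U_x]$ therefore keeps $Z_k$ in $[U_x(-H_x-\mu_h),\,U_x(H_x-\mu_h)]$, which has length $2U_xH_x$. The justification you wrote, $|h-\mu_h|\le 2H_x$, only yields length $4U_xH_x$, and that is where the spurious factor came from.

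The rest goes through as you set it up. With $\E Z_k=0$ and $\bar Z_K:=K^{-1}\sum_k Z_k$, Hoeffding gives $|\bar Z_K|\le 2U_xH_x\epsilon$ with probability at least $1-\eta/2$. Hoeffding for the denominator plus the sample-size condition gives $\widehat D_K\ge\nu_x-U_x\epsilon\ge\nu_x/2$. Since $\widehat\mu_{h,K}-\mu_h=\bar Z_K/\widehat D_K$ exactly, you get $B_K$.

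By contrast, the uncentered split cannot be tuned to $4$. With $\widehat D_K\ge\nu_x/2$, the term $|\widehat N_K-N|/\widehat D_K$ alone already uses up all of $B_K$ at level $\eta/2$, because the Hoeffding range of $uh$ is $2U_xH_x$. Reallocating $\eta$ across tails only moves logarithms around, so drop that fallback.

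The paper's proof is exactly your first, uncentered decomposition. It reaches the constant $4$ only by asserting $\Pr(|\widehat A_K-A|>U_xH_xt_K)\le 2e^{-2Kt_K^2}$. That amounts to applying Hoeffding to $uh$ as if its range were $U_xH_x$. For signed $h$ the range is $2U_xH_x$, so that step is off by a factor of $2$. Carried out correctly, the paper's argument yields the constant $6$ you computed.

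Centering is what repairs this. It folds the $\mu_h(\widehat D_K-D)$ correction into a single mean-zero average whose range is still only $2U_xH_x$. Your centered argument is therefore the one that actually proves the proposition with the stated $B_K$.
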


\noindent
The dependence on the temperature is now visible in two places. By \Cref{prop:soft-approx}, lower $\tau$ makes the optimization target closer to the exact promptwise DRRO objective. But in \eqref{eq:snis-finite-bound}, the constant $U_x$ grows with the dynamic range of $\exp((\proxy-\bud \pi)/\tau)$, so low temperature also increases the variance and decreases the effective sample size of the SNIS weights. The temperature therefore trades approximation bias against sampling stability.

For the rollout choice $q=\rollpi$, we evaluate the soft adversarial weights at the frozen rollout policy and write
\[
\widehat r^{(k)}:=\proxy(x,y^{(k)}),
\qquad
\bar\pi^{(k)}:=\rollpi(y^{(k)}\mid x).
\]
The soft reward-shaping gradient implied by \eqref{eq:soft-gradient} is then approximated by
\begin{equation}
\label{eq:on-policy-estimator}
\widehat g_\theta(x)
=
\frac{1}{G}\sum_{k=1}^G
\widetilde r_{\mathrm{raw}}^{(k)}
\nabla_\theta\log\pi_\theta(y^{(k)}\mid x).
\end{equation}
Here the SNIS weights are computed from \eqref{eq:snis-weights} with $q=\rollpi$ and $\pi_\theta$ in $a_{\theta,x}$ frozen at $\rollpi$. Equivalently, one may define the soft DRRO-shaped rewards
\begin{equation}
\label{eq:soft-bonus-raw}
\widetilde r_{\mathrm{raw}}^{(k)}
=
\widehat r^{(k)}+\textcolor{blue!70!black}{\bud\,G w^{(k)}\bar\pi^{(k)}}.
\end{equation}
Here the first term is again the ordinary proxy reward, while the second term is the robust bonus. The factor $G w^{(k)}\bar\pi^{(k)}$ is the SNIS approximation to the soft adversarial mass assigned to response $y^{(k)}$.

The raw expression above is the conceptual soft-SNIS estimator. The practical implementation uses the group-normalized stabilization and scaled budget introduced in \Cref{sec:delta-selection}; this changes the numerical coordinate system of the sampled group, not the underlying soft DRRO target. In other words, we are now optimizing a grouped policy-gradient surrogate for the smooth promptwise target
\begin{equation*}
J_{\bud,\tau}^{\mathrm{soft}}(\pi)
:=
\E_{x\sim\mathcal D}
\big[
F_{\bud,\tau}(\pi(\cdot\mid x);\proxy(x,\cdot))
\big].
\end{equation*}

\subsection{\texorpdfstring{Selection of Ambiguity $\delta$ and Final Implemented Algorithm}{Selection of Ambiguity delta and Final Implemented Algorithm}}
\label{sec:delta-selection}

At this point the mechanics are almost complete. The remaining design choice is how much ambiguity to grant the adversary at different stages of training.

\paragraph{Numerical stabilization.}
Before selecting the budget, we first translate the raw sampled soft bonus in \eqref{eq:soft-bonus-raw} into the coordinate system used by a finite rollout group. Raw completion probabilities $\bar\pi^{(k)}$ can be extremely small, so the implementation uses group-normalized probabilities. Within each sampled group we set
\begin{equation*}
\widetilde\pi^{(k)}
\coloneqq
\frac{\bar\pi^{(k)}}{\sum_{\ell=1}^G\bar\pi^{(\ell)}}.
\end{equation*}
Under the heuristic scale $\bar\pi^{(k)}\approx 1/n$, this normalization magnifies probabilities by about $n/G$. At the same time, the original Wasserstein budget $\bud$ scales with the dimension $n$ of the conceptual reward vector, so the sampled implementation uses the scaled budget $\widetilde{\bud}\coloneqq(G/n)\bud$. The practical soft DRRO bonus is
\begin{equation}
\label{eq:practical-soft-bonus}
\begin{aligned}
\widetilde u^{(k)}
&\coloneqq
\frac{1}{\widetilde\pi^{(k)}}
\exp\!\left(
\frac{\widehat r^{(k)}
-\widetilde{\bud}\,\widetilde\pi^{(k)}}{\tau}
\right),\\
\widetilde w^{(k)}
&\coloneqq
\frac{\widetilde u^{(k)}}
{\sum_{\ell=1}^G\widetilde u^{(\ell)}},\\
\widetilde r_{\mathrm{DRRO}}^{(k)}
&\coloneqq
\widehat r^{(k)}
+\textcolor{blue!70!black}{
\widetilde{\bud}\,G\,\widetilde w^{(k)}
\widetilde\pi^{(k)}}.
\end{aligned}
\end{equation}
This is an implementation-level stabilization of the same soft DRRO target, not a different robust objective: the full-simplex theory still supplies the soft adversarial distribution, while the sampled update rescales the tiny rollout probabilities into a numerically usable group-level coordinate system. We then plug the group-normalized versions of $\widetilde r_{\mathrm{DRRO}}^{(k)}$ into the same clipped GRPO surrogate \eqref{eq:grpo-loss}.

\paragraph{Dynamic ambiguity budget.}
Keeping $\bud$ fixed throughout training is unnecessarily rigid because reward-model error typically grows as the policy drifts away from the reference model. The Donsker--Varadhan variational formula gives a clean way to formalize that idea.

\begin{proposition}[Donsker--Varadhan control of reward misspecification]
\label{prop:dv-budget}
Fix a prompt $x$ and define
\begin{equation*}
h_x(y):=\big|\proxy(x,y)-\true(x,y)\big|.
\end{equation*}
Then for every $\lambda>0$,
\begin{equation}
\label{eq:dv-bound}
\E_{y\sim\pi_\theta(\cdot\mid x)}[h_x(y)]
\le
\frac{1}{\lambda}
\left[
\KL\big(\pi_\theta(\cdot\mid x)\,\|\,\pi_0(\cdot\mid x)\big)
+
\log\E_{y\sim\pi_0(\cdot\mid x)}\!\big[e^{\lambda h_x(y)}\big]
\right].
\end{equation}
\end{proposition}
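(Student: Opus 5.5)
This is a direct application of the Donsker--Varadhan variational formula, $\KL(Q\|P)=\sup_g\{\E_Q[g]-\log\E_P[e^g]\}$, on the finite response set for the fixed prompt $x$. I would take $Q=\pi_\theta(\cdot\mid x)$, $P=\pi_0(\cdot\mid x)$ and the test function $g=\lambda h_x$.

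\textbf{Degenerate cases.} If $\KL(\pi_\theta(\cdot\mid x)\,\|\,\pi_0(\cdot\mid x))=\infty$, the right-hand side of \eqref{eq:dv-bound} is $+\infty$. The moment-generating term is never $-\infty$, because $h_x\ge0$ gives $\E_{\pi_0}[e^{\lambda h_x}]\ge1$. So the bound is trivial in this case.

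If the KL divergence is finite, then $\pi_\theta(\cdot\mid x)\ll\pi_0(\cdot\mid x)$. Since $\mathcal Y$ is finite, $h_x$ is bounded, so every expectation in \eqref{eq:dv-bound} is finite.

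\textbf{Main step.} I would prove the Gibbs inequality directly rather than cite the variational formula. Define the tilted distribution
\[
\nu(y)\coloneqq\frac{\pi_0(y\mid x)\,e^{\lambda h_x(y)}}{\E_{\pi_0}[e^{\lambda h_x}]}.
\]
A one-line computation on the support of $\pi_\theta(\cdot\mid x)$ gives
\[
0\le\KL\big(\pi_\theta(\cdot\mid x)\,\|\,\nu\big)
=\KL\big(\pi_\theta(\cdot\mid x)\,\|\,\pi_0(\cdot\mid x)\big)-\lambda\,\E_{\pi_\theta}[h_x]+\log\E_{\pi_0}[e^{\lambda h_x}].
\]
Rearranging gives $\lambda\,\E_{\pi_\theta}[h_x]\le\KL+\log\E_{\pi_0}[e^{\lambda h_x}]$. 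Dividing by $\lambda>0$ yields \eqref{eq:dv-bound}.

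\textbf{Obstacle.} There is no substantive obstacle. The only care needed is the support and absolute-continuity bookkeeping when some responses have zero probability, and the finite-$\mathcal Y$ view handles this cleanly.
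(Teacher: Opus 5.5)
Your proposal is correct and follows the same route as the paper, which invokes the Donsker--Varadhan inequality $\E_P[f]\le\KL(P\,\|\,Q)+\log\E_Q[e^f]$ with $P=\pi_\theta(\cdot\mid x)$, $Q=\pi_0(\cdot\mid x)$ and $f=\lambda h_x$, then divides by $\lambda$. The only differences are that your tilted-measure computation proves that inequality inline rather than citing it, and that you explicitly handle the case where $\pi_\theta(\cdot\mid x)$ is not absolutely continuous with respect to $\pi_0(\cdot\mid x)$ (there the right-hand side is $+\infty$ and the log-moment term is nonnegative), whereas the paper simply states the formula under $P\ll Q$.
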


\noindent
The formal inequality still contains the unknown log-moment term under $\pi_0$, but its three terms explain why KL drift is a sensible observable proxy for reward uncertainty. The left-hand side is an analogue of the promptwise $\ell_1$ error between the proxy reward and the certainty-equivalent true reward; the log-moment term can be treated as a baseline constant because it is evaluated under the fixed reference policy rather than the current policy; and the KL term is the observable policy-drift statistic that remains, with the tunable coefficient $\alpha$ in \eqref{eq:dynamic-budget} playing the role of the multiplier $1/\lambda$. In the implementation we therefore use the scaled sampled budget
\begin{equation}
\label{eq:dynamic-budget}
\widetilde{\bud}_\theta(x)=\widetilde{\bud}_0+\alpha\,\KL\big(\rollpi(\cdot\mid x)\,\|\,\pi_0(\cdot\mid x)\big),
\end{equation}
where $\widetilde{\bud}_0\ge 0$ is a pilot-calibrated base budget and $\alpha>0$ is a scaling coefficient. This KL quantity is used only to calibrate the ambiguity budget; it is not added as an explicit penalty to the policy objective.

In experiments, the KL term is estimated on sampled completions with the well-known $k_3$ estimator of \citet{schulman2020kl}: for $z^{(k)}=\log\pi_0(y^{(k)}\mid x)-\log\rollpi(y^{(k)}\mid x)$, average $\exp(z^{(k)})-z^{(k)}-1$ over the group. This statistic is nonnegative samplewise, has lower variance than the raw log-ratio estimator, and is differentiated through neither the budget nor the policy loss.

The calibration is deliberately pragmatic. We first draw a small pilot set of prompts, generate a modest number of responses from the initial policy, and score those responses with both the proxy reward model and a stronger held-out gold reward model. The empirical promptwise $\ell_1$ discrepancy between the two provides an initial guess for the unscaled budget and hence for $\widetilde{\bud}_0$ after group normalization. After that, the remaining hyperparameters---notably $\alpha$, $\tau$, the group size $G$, and the clipping radius $\varepsilon_{\mathrm{clip}}$---are tuned on a standard held-out validation set evaluated by the stronger reward model.

The final implementation uses the soft bonus, the dynamic budget, and the unchanged GRPO backbone. Once the scaled promptwise budgets $\widetilde{\bud}_b$ have been instantiated from \eqref{eq:dynamic-budget}, the only code-level modification inside the GRPO update is still the reward-shaping block.

\begin{algorithm}[t]
\caption{Final DRRO-RLHF implementation (blue lines differ from \Cref{alg:grpo})}
\label{alg:drro-rlhf}
\begin{algorithmic}[1]
\Require initial policy $\pi_\theta$, prompt source $\mathcal D$, reward model $\proxy$, reference policy $\pi_0$, scaled base budget $\widetilde{\bud}_0$, scaling coefficient $\alpha$, outer iterations $M$, prompt batch size $B$, group size $G$, temperature $\tau$, clip radius $\varepsilon_{\mathrm{clip}}$, policy-gradient steps $S$
\Ensure updated policy $\pi_\theta$
\For{outer iteration $m=1,\ldots,M$}
  \State set rollout policy $\rollpi\gets\pi_\theta$
  \State sample prompt batch $x_1,\ldots,x_B\sim\mathcal D$
  \For{each prompt $x_b$ in the batch}
  \State sample $y_b^{(1)},\ldots,y_b^{(G)}\sim\rollpi(\cdot\mid x_b)$
  \State compute proxy rewards $\widehat r_b^{(k)}=\proxy(x_b,y_b^{(k)})$
  \State instantiate scaled budget $\widetilde{\bud}_b=\widetilde{\bud}_0+\alpha\,\KL(\rollpi(\cdot\mid x_b)\,\|\,\pi_0(\cdot\mid x_b))$
  \State \algchange{compute rollout probabilities $\bar\pi_b^{(k)}=\rollpi(y_b^{(k)}\mid x_b)$ and normalized probabilities $\widetilde\pi_b^{(k)}$}
  \State \algchange{set shaped rewards $\widetilde r_{b,\mathrm{DRRO}}^{(k)}$ according to \eqref{eq:practical-soft-bonus}}
  \State normalize \algchange{$\widetilde r_{b,\mathrm{DRRO}}^{(k)}$} within the prompt group to form advantages
  \State compute response ratios $\rho_b^{(k)}(\theta)=\pi_\theta(y_b^{(k)}\mid x_b)/\rollpi(y_b^{(k)}\mid x_b)$
  \State accumulate the clipped GRPO surrogate \eqref{eq:grpo-loss}
  \EndFor
  \State update $\theta$ using $S$ minibatch policy-gradient steps on \eqref{eq:grpo-loss}
\EndFor
\end{algorithmic}
\end{algorithm}

\subsection{Discussion}
\label{sec:algorithm-discussion}

Two practical points deserve a final discussion: how much extra computation DRRO introduces on top of GRPO, and why the $\ell_1$ ambiguity set is the geometry that makes that extra computation tractable. Both points reinforce the same message of this section, namely that the robust correction should be viewed as a light reward-shaping layer rather than as a new RLHF training stack.

\paragraph{Computational overhead.}
The additional burden is small. During autoregressive rollout, the model already returns the tokenwise conditional probabilities needed to form each completion probability $\bar\pi^{(k)}=\rollpi(y^{(k)}\mid x)$, so no extra model query is required; one only accumulates the emitted token probabilities along the sampled response. Once the proxy rewards are available, soft DRRO adds $G$ scalar exponentials, one normalization of the sampled rollout probabilities, and the corresponding reshaped rewards. Relative to sequence generation, reward-model scoring, and the usual minibatch policy-gradient updates, these operations are negligible. This is the operational reason why the DRRO modification fits naturally into mature GRPO codebases.

\paragraph{Choice of ambiguity geometry.}
The choice of the $\ell_1$ ambiguity set is equally practical. The promptwise adversary has a dual-norm form for general $\ell_p$ geometry, but $\ell_1$ is the case that turns this structure into a water-filling policy and a sampled RLHF bonus. In the algorithm, its key advantage is locality: the robust correction can be written in terms of the probability assigned to the threatening sampled response rather than the entire hidden simplex. The next proposition makes this statement precise by comparing $\ell_p$ ambiguity geometries through the bonus factor attached to a vertex response.

\begin{proposition}[Endpoint norms and coordinate-local sampled bonuses]
\label{prop:l1-rationale}
Fix $n\ge3$. Suppose the promptwise ambiguity set is $\norm{\Delta}_p\le \bud$ and let $p^\ast$ be the dual exponent satisfying $1/p+1/p^\ast=1$. Then
\begin{equation}
\label{eq:lp-dual-rewrite}
\begin{aligned}
\Psi_p(\pi;\proxy)
&:=
\max_{\norm{\Delta}_p\le \bud}
\Reg(\pi,\proxy+\Delta),\\
\Psi_p(\pi;\proxy)
&=
\max_i
\left\{
\proxy_i-\ip{\pi}{\proxy}
+\bud b_{p,i}(\pi)
\right\},\\
b_{p,i}(\pi)
&\coloneqq\norm{e_i-\pi}_{p^\ast} .
\end{aligned}
\end{equation}
The vertex bonus factor $b_{p,i}(\pi)$ depends only on the sampled-coordinate probability $\pi_i$ for all $\pi\in\Delta_n$ if and only if $p\in\{1,\infty\}$. In those endpoint cases,
\begin{equation*}
b_{1,i}(\pi)=1-\pi_i,
\qquad
b_{\infty,i}(\pi)=2(1-\pi_i).
\end{equation*}
Consequently, $\ell_\infty$ ambiguity induces the same promptwise policy dependence as $\ell_1$ ambiguity after doubling the budget:
\begin{equation*}
\max_{\norm{\Delta}_\infty\le\bud}\Reg(\pi,\proxy+\Delta)
=
2\bud+\max_i\{\proxy_i-2\bud\pi_i\}-\ip{\pi}{\proxy}.
\end{equation*}
\end{proposition}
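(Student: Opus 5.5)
The dual-norm rewrite \eqref{eq:lp-dual-rewrite} is exactly \Cref{prop:inner-adversary}, so I would simply cite it. For completeness, the argument is short. Write $\Reg(\pi,\proxy+\Delta)=\max_i\ip{e_i-\pi}{\proxy+\Delta}$, since a linear function on the simplex is maximized at a vertex. Then exchange the two maxima over $i$ and over $\Delta$. For fixed $i$, the inner maximum $\max_{\norm{\Delta}_p\le\bud}\ip{e_i-\pi}{\Delta}$ equals $\bud\norm{e_i-\pi}_{p^\ast}$ by the definition of the dual norm. The real content of the proposition is the characterization of when $b_{p,i}$ is coordinate-local, together with the $\ell_\infty$ consequence.

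\emph{Endpoint computations.} Note that $(e_i-\pi)_i=1-\pi_i\ge0$ and $(e_i-\pi)_j=-\pi_j\le0$ for $j\ne i$.
\begin{itemize}
\item For $p=1$ we have $p^\ast=\infty$, so $b_{1,i}=\max\{1-\pi_i,\max_{j\ne i}\pi_j\}$. Since $\pi_j\le\sum_{l\ne i}\pi_l=1-\pi_i$, this equals $1-\pi_i$.
\item For $p=\infty$ we have $p^\ast=1$, so $b_{\infty,i}=(1-\pi_i)+\sum_{j\ne i}\pi_j=2(1-\pi_i)$.
\end{itemize}
Substituting $b_{\infty,i}$ into \eqref{eq:lp-dual-rewrite} gives $\Psi_\infty=\max_i\{\proxy_i-2\bud\pi_i\}+2\bud-\ip{\pi}{\proxy}$. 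This is the $p=1$ formula \eqref{eq:inner-closed-form} with $\bud$ replaced by $2\bud$, which proves the doubling claim.

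\emph{Necessity for $1<p<\infty$.} Here $1<p^\ast<\infty$, and
\[
b_{p,i}(\pi)^{p^\ast}=(1-\pi_i)^{p^\ast}+\sum_{j\ne i}\pi_j^{p^\ast}.
\]
Fix $\pi_i=c\in[0,1)$ and use $n\ge3$, so there are at least two other coordinates. The plan is to compare two allocations of the remaining mass $1-c$:
\begin{itemize}
\item all of it on a single coordinate $j\ne i$, giving $\sum_{j\ne i}\pi_j^{p^\ast}=(1-c)^{p^\ast}$;
\item split evenly between two coordinates $j,l\ne i$, giving $2\bigl((1-c)/2\bigr)^{p^\ast}=2^{1-p^\ast}(1-c)^{p^\ast}$.
\end{itemize}
Because $p^\ast>1$ we have $2^{1-p^\ast}<1$, so the two values differ whenever $c<1$. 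Hence $b_{p,i}$ takes different values at two points of $\Delta_n$ with the same $\pi_i$, so it is not a function of $\pi_i$ alone.

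This necessity direction is the main step requiring care. Two points need attention: the hypothesis $n\ge3$ is used exactly here, since with $n=2$ the remaining mass has only one place to go; and the statement "depends only on $\pi_i$" should be read as "$b_{p,i}$ is constant on each level set $\{\pi\in\Delta_n:\pi_i=c\}$." Producing a single level set $c<1$ with two distinct values, as above, suffices.
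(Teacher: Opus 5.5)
Your proof is correct and takes essentially the same route as the paper: you exchange the maxima and apply dual-norm duality, you make the same endpoint computations of $\norm{e_i-\pi}_\infty$ and $\norm{e_i-\pi}_1$, and for $1<p<\infty$ you exhibit two policies with equal $\pi_i$ but different $\norm{e_i-\pi}_{p^\ast}$. The only cosmetic differences are that you reduce to vertices before exchanging the maxima (using linearity in $\beta$ instead of the paper's convexity argument), and that your second policy splits the leftover mass over two coordinates rather than uniformly over all $n-1$ of them, which works equally well when $n\ge3$.
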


\noindent
The proposition has a useful interpretation. For any nonendpoint norm $1<p<\infty$, a sampled response cannot be assigned a robust bonus from its own rollout probability alone: the bonus also depends on how the policy distributes probability across all other, mostly unobserved responses. The two endpoint geometries are exceptional. The case $p=\infty$ is coordinate-local too, but it is essentially the same policy correction as $\ell_1$ after multiplying the budget by two, while its adversarial perturbation is dense rather than sparse. We therefore use $\ell_1$: it gives the local sampled bonus needed for implementation, keeps the worst-case reward perturbation concentrated on the threatening response, and gives the budget the direct meaning of total promptwise reward misspecification mass. Once the response space is only partially observed through grouped rollouts, this sparse and local structure is exactly what makes the DRRO correction estimable.

\section{Experiments}
\label{sec:experiments}

This section reports the empirical study of DRRO for RLHF. The experiments are designed to isolate the decision-theoretic question studied in the paper: when a policy is optimized against an imperfect proxy reward, does regret robustness preserve useful improvement better than value robustness and other mitigation strategies?

\paragraph{Setup.}
All methods use the same VERL-based RLHF stack \citep{sheng2024hybridflow} with Ray workers \citep{moritz2018ray} and vLLM rollout \citep{kwon2023efficient}. We use \path{HuggingFaceH4/hh-rlhf} \citep{bai2022training} as prompts, Qwen2.5-0.5B-Instruct \citep{qwen2024qwen25} as the policy, a fine-tuned OpenAssistant DeBERTa-v3-base reward model \citep{he2021debertav3,openassistant2023rewardbase} as the proxy reward, and the Tasksource DeBERTa-v3-large RLHF reward model \citep{sileod2023tasksource} as the held-out evaluator. Performance is reward improvement over the initial policy: for evaluator $r_{\mathrm{eval}}$, we report $\E[r_{\mathrm{eval}}(x,Y_\pi)]-\E[r_{\mathrm{eval}}(x,Y_{\pi_0})]$ on held-out prompts. ``Proxy'' is the training reward model; ``gold'' is the separate held-out reward model. The gold evaluator is never used for policy updates. \Cref{app:experimental-details} gives fine-tuning, proxy-quality, implementation, and hyperparameter details.

The main benchmark sits at intermediate proxy quality: the fine-tuned proxy has 85.7\% pairwise agreement with the held-out evaluator, between the raw-proxy stress test (73.6\%) and large-proxy sanity check (95.2\%) in \Cref{app:experimental-details}. All reported main curves average five seeds, with shaded regions showing one standard deviation across seeds. Rewards are reported as improvements relative to the step-0 initial policy; proxy and gold improvements are each re-centered by their own step-0 baseline. The horizontal axis is sequence-level KL from the frozen initial policy, estimated on sampled completions from the current policy; the exact estimator is given in \Cref{app:experimental-details}. This KL is logged only for diagnostics and plotting and is not an explicit penalty in the training loss.

\paragraph{Benchmark methods.}
The comparison isolates value-versus-regret robustness while including common over-optimization mitigations. PPO is the standard clipped actor-critic baseline \citep{schulman2017ppo}; GRPO removes the critic and uses grouped within-prompt reward normalization \citep{shao2024deepseekmath}. Ensemble-Mean and Ensemble-UWO follow reward-model ensemble mitigation \citep{coste2024ensembles,eisenstein2024helping}: the former averages reward models, while the latter subtracts ensemble disagreement. BSPO discourages movement into out-of-distribution responses \citep{dai2025behavior}, and InfoRM represents reward-proxy improvement through information-theoretic reward modeling \citep{miao2024inform}. DRO-RLHF implements the standard value-robust benchmark \citep{ben2013robust,wiesemann2014distributionally,esfahani2018data} with the same rollout and budget interface as DRRO. DRRO-RLHF is evaluated in a hard sampled-bonus variant, which gives the robust bonus to the sampled completion with largest $\widehat r^{(k)}-\widetilde\delta\,\widetilde\pi^{(k)}$, and in the soft dynamic-budget variant from \Cref{sec:algorithm}.

For the robust methods, DRO and DRRO receive the same budget information; only the robust benchmark changes. The default fixed scaled budget is $\widetilde\delta=2.5G$, and the displayed soft-dynamic DRRO setting uses dynamic coefficient $\alpha=10$ and soft-assignment temperature $\tau=2.0$. Dynamic-budget runs use \eqref{eq:dynamic-budget}.

\begin{figure}[t]
\centering
\includegraphics[width=0.82\linewidth]{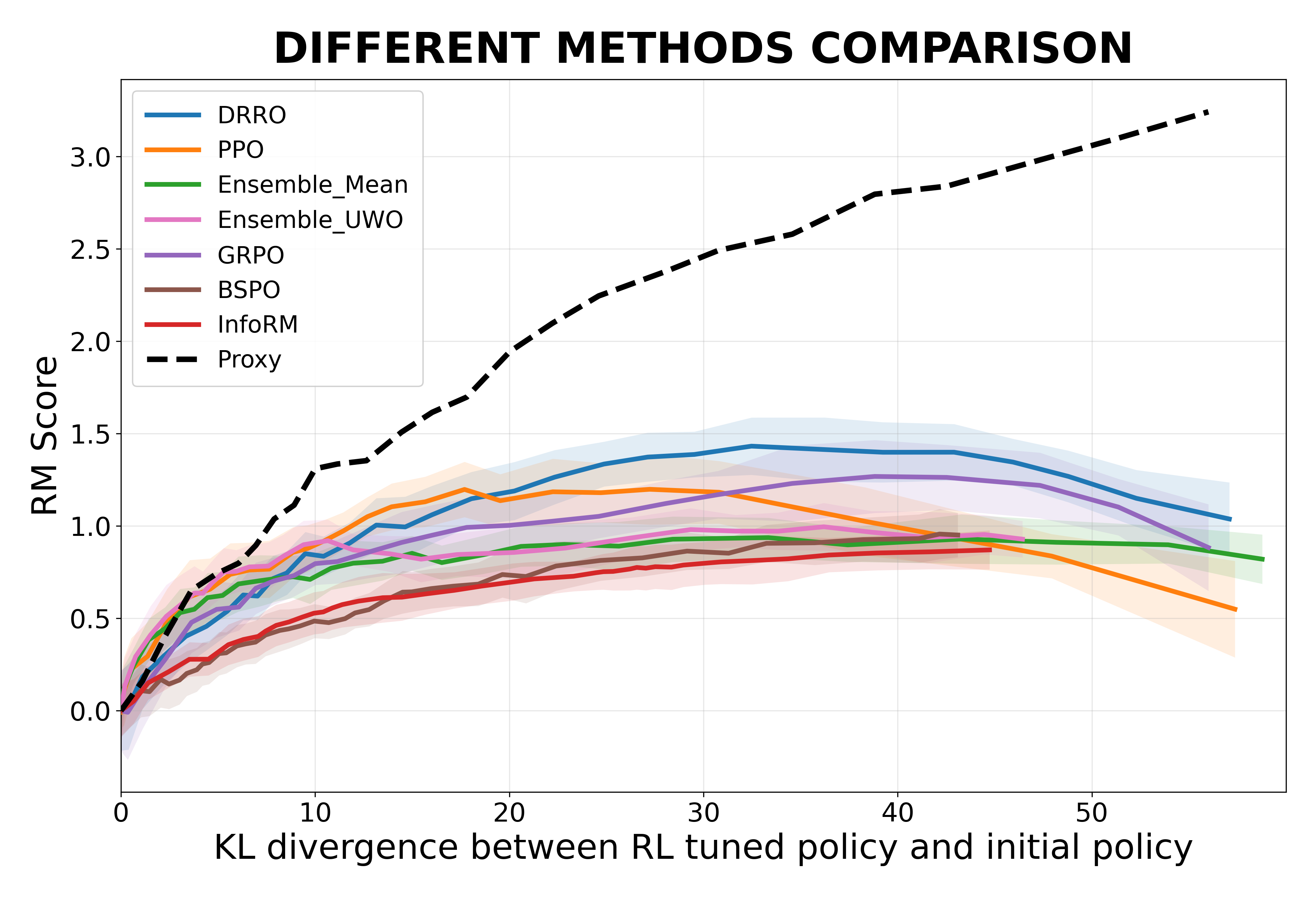}
\caption{Main benchmark across RLHF update rules and mitigation baselines. Rewards are improvements over the initial policy, with proxy and gold rewards each re-centered by their own step-0 baseline. Solid curves use the held-out gold reward model. The dashed proxy curve uses the DRRO proxy scores. Curves are averaged over five seeds, and shaded regions show mean $\pm$ one standard deviation.}
\label{fig:main-benchmark}
\end{figure}

\begin{table}[t]
\centering
\caption{Peak held-out gold reward and diagnostics. The proxy column is measured at the same KL as peak gold.}
\label{tab:main-results}
\small
\setlength{\tabcolsep}{4.2pt}
\begin{tabular}{lrrrr}
\toprule
Method & Peak gold & Proxy at peak & Gold-proxy gap & Peak KL \\
\midrule
PPO & 1.20 & 2.35 & -1.15 & 27.23 \\
GRPO & 1.27 & 2.80 & -1.53 & 38.82 \\
Ensemble-Mean & 0.94 & 2.55 & -1.61 & 33.34 \\
Ensemble-UWO & 1.00 & 2.66 & -1.67 & 36.19 \\
BSPO & 0.96 & 2.83 & -1.88 & 42.15 \\
InfoRM & 0.87 & 2.90 & -2.03 & 44.73 \\
DRO-RLHF & 0.79 & 2.55 & -1.76 & 33.26 \\
DRRO-RLHF (hard) & 1.17 & 2.42 & -1.25 & 29.09 \\
DRRO-RLHF (soft + dynamic) & \textbf{1.43} & 2.53 & -1.10 & 32.47 \\
\bottomrule
\end{tabular}
\end{table}

\paragraph{Main comparison.}
\Cref{fig:main-benchmark,tab:main-results} show the central pattern. The figure gives reward-versus-KL trajectories, and the table reports each method's best held-out gold point with its proxy value, gold-proxy gap, and KL from the initial policy. PPO and GRPO make early progress, but proxy and gold separate as KL grows; DRRO attains the largest peak gold reward while maintaining a smaller gold-proxy gap than the strongest proxy-chasing baselines.

Numerically, soft dynamic DRRO reaches peak gold reward $1.43$, above GRPO ($1.27$), PPO ($1.20$), and hard DRRO ($1.17$), at moderate KL $32.47$ and with the smallest strong-method gold-proxy gap ($-1.10$). GRPO, BSPO, and InfoRM can have larger proxy-at-peak values but larger negative gaps, so DRRO's improvement is not proxy chasing. DRO has similar KL to DRRO ($33.26$ versus $32.47$) but only $0.79$ peak gold, illustrating value-robust over-pessimism. Thus \Cref{fig:main-benchmark} should be read as a frontier plot, while \Cref{fig:dro-vs-drro} isolates the value-robust baseline.

\begin{figure}[t]
\centering
\begin{subfigure}[t]{0.48\linewidth}
\centering
\includegraphics[width=\linewidth]{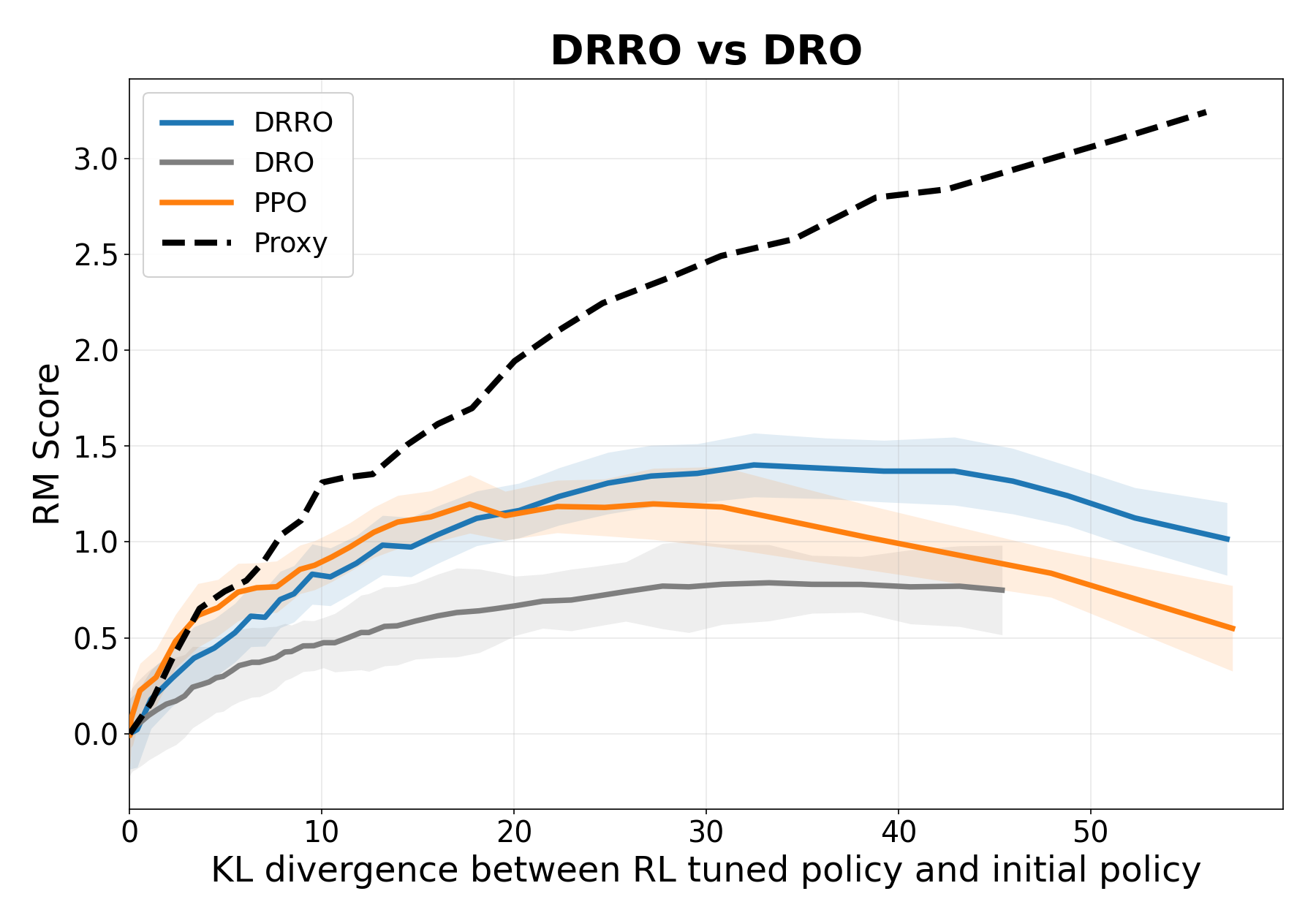}
\caption{Matched DRO/DRRO comparison.}
\label{fig:dro-vs-drro}
\end{subfigure}
\hfill
\begin{subfigure}[t]{0.48\linewidth}
\centering
\includegraphics[width=\linewidth]{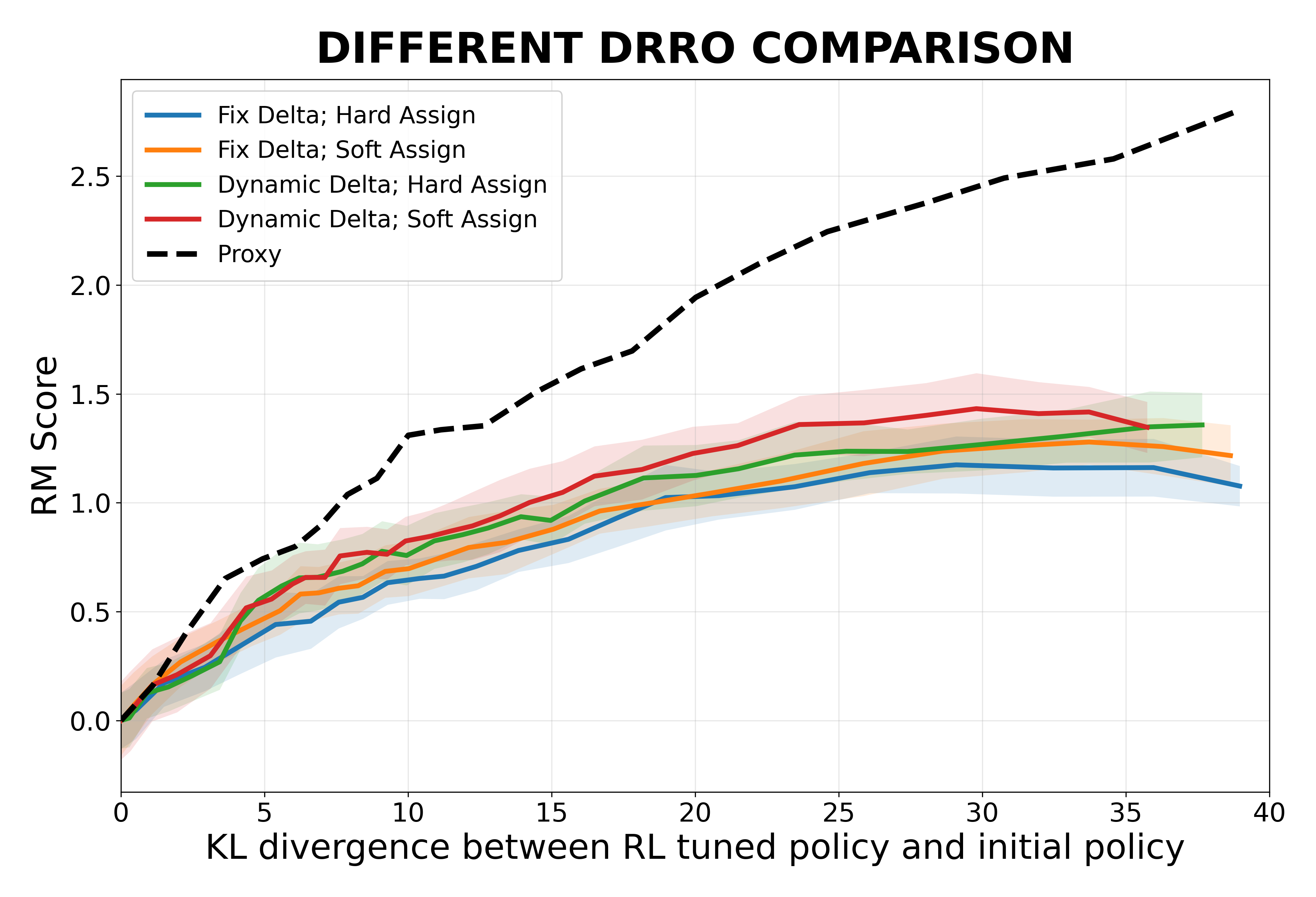}
\caption{DRRO ablation.}
\label{fig:ablation}
\end{subfigure}
\caption{Left: DRRO improves the reward-conservatism tradeoff over DRO. Right: dynamic budgeting and soft assignment both improve DRRO.}
\label{fig:extra-experiments}
\end{figure}

\paragraph{DRO and ablations.}
\Cref{fig:dro-vs-drro} isolates the value-versus-regret benchmark. DRO and DRRO use the same rollout stack and budget information, but DRO pessimizes absolute value while DRRO pessimizes regret. This empirical gap anticipates the theoretical comparison in \Cref{sec:coverage}: value robustness can be too conservative because it protects a policy in isolation, whereas regret robustness protects the comparison that actually drives the decision. \Cref{fig:ablation} shows that both DRRO design choices matter: soft assignment avoids committing to one noisy completion, while dynamic budgeting grows ambiguity with KL drift. The four DRRO variants in \Cref{fig:ablation} use hyperparameters selected by the grid search described in \Cref{app:experimental-details}; the hard-max ablation algorithm is detailed in \Cref{app:hard-max-ablation}.

\section{Comparing DRO and DRRO}
\label{sec:coverage}

The experiments make the value-versus-regret distinction visible before the theory names it. DRO protects absolute value and can therefore become conservative even when the proxy still contains useful comparative information. DRRO protects regret, so its pessimism is relative to the policies that compete under the same plausible reward perturbation. This section formalizes that difference through two lenses: rank preservation and local coverage.

\subsection{Rank-Preserving Proxy Rewards}

Pairwise reward-model training is fundamentally ordinal. Bradley--Terry-style objectives and modern RLHF reward-model pipelines learn scores by assigning larger values to preferred responses than to rejected ones \citep{bradley1952rank,christiano2017deep,stiennon2020learning,ouyang2022training}. This is why held-out pairwise accuracy and ranking correlation remain standard diagnostics \citep{lambert2025rewardbench}, even though accuracy alone does not determine downstream RLHF performance \citep{chen2024accuracy}. A natural test is therefore whether the robust objective can exploit a proxy ranking that is correct but not perfectly calibrated.

For the fixed-prompt $\ell_1$ specialization used by the water-filling rule and the algorithm, promptwise DRO solves
\begin{equation*}
\max_{\pi\in\Delta_n}\ \ip{\pi}{\proxy} - \bud\norm{\pi}_\infty.
\end{equation*}
The penalty is blind to which responses the proxy ranks highest. DRRO keeps that ordering inside the regret term. The next theorem shows that, under rank preservation, this difference translates into higher true value.

\begin{theorem}[DRRO dominates DRO under rank-preserving truth]
\label{thm:drro-dominates-dro}
Assume the proxy reward is strictly ordered,
\begin{equation*}
\proxy_1>\proxy_2>\cdots>\proxy_n,
\end{equation*}
and that the true reward is a rank-preserving transform of the proxy reward,
\begin{equation*}
t_i=\phi(\proxy_i),\qquad i=1,\ldots,n,
\end{equation*}
for some increasing function $\phi$. Let
\begin{equation*}
\pi^{\mathrm{DRO}}(\bud)\in\argmax_{\pi\in\Delta_n}\left\{\ip{\pi}{\proxy}-\bud\norm{\pi}_\infty\right\},
\end{equation*}
and
\begin{equation*}
\pi^{\mathrm{DRRO}}(\bud)\in\argmax_{\pi\in\Delta_n}\left\{\ip{\pi}{\proxy}-\max_i(\proxy_i-\bud \pi_i)\right\}.
\end{equation*}
Then
\begin{equation*}
\sum_{i=1}^n \pi_i^{\mathrm{DRRO}}(\bud)\,t_i
\ge
\sum_{i=1}^n \pi_i^{\mathrm{DRO}}(\bud)\,t_i.
\end{equation*}
If $\phi$ is strictly increasing and the two optimizers differ, then the displayed inequality is strict.
\end{theorem}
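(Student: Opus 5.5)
The plan is to prove the inequality by \emph{first-order stochastic dominance in rank order}. Write $S_j(\pi)\coloneqq\sum_{i\le j}\pi_i$ for the cumulative mass on the top $j$ proxy-ranked responses. Abel summation gives
\[
\sum_i \pi_i t_i = t_n + \sum_{j=1}^{n-1} S_j(\pi)\,(t_j-t_{j+1}),
\]
and $t_j-t_{j+1}=\phi(\proxy_j)-\phi(\proxy_{j+1})\ge0$. So it suffices to show
\[
S_j(\pi^{\mathrm{DRRO}})\ge S_j(\pi^{\mathrm{DRO}})\quad\text{for all } j.
\]
If $\phi$ is strictly increasing, every gap $t_j-t_{j+1}$ is positive. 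If moreover the two optimizers differ, some $S_j$ differs, and then that inequality is strict, which gives the strict claim.

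\textbf{Step 1: structure of the DRO optimizers.} Fix the level $m=\norm{\pi}_\infty\in[1/n,1]$. Maximizing $\ip{\pi}{\proxy}$ subject to $\pi_i\le m$ is a greedy fill: put mass $m$ on $y_1,y_2,\dots$ until mass runs out. The resulting objective is piecewise linear in $m$ with breakpoints at $m=1/k$. Hence the DRO problem is an LP whose vertex optimizers are the flat policies $u^{(k)}=\frac1k\sum_{i\le k}e_i$. The maximizing $k$ solves $\max_k\{\frac1k\sum_{i\le k}\proxy_i-\bud/k\}$, and every optimizer is a convex combination of optimal flat $u^{(k)}$. Since $S_j$ is linear in $\pi$, it suffices to compare DRRO with each optimal $u^{(k)}$. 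Let $k^\star$ denote the largest optimal index.

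\textbf{Step 2: structure of the DRRO optimizer, and the reduction.} Here I use \Cref{thm:water-filling}. For $i\ge2$, $\pi^{\mathrm{DRRO}}_i=(\proxy_i-t^\star)_+/\bud$ is nonincreasing in $i$, and $\pi^{\mathrm{DRRO}}_1\ge\pi^{\mathrm{DRRO}}_2$ because residual mass goes to the top response. So $\pi^{\mathrm{DRRO}}$ is nonincreasing with support $\{1,\dots,s\}$, where $s=\#\{i:\proxy_i>t^\star\}$ (at least $1$). A nonincreasing profile on $s$ atoms satisfies $S_j\ge \min(j/s,1)$, and this dominates $S_j(u^{(k)})=\min(j/k,1)$ whenever $s\le k$. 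So everything reduces to the support comparison $s\le k$ for every DRO-optimal $k$, i.e.\ $s\le$ the smallest optimal $k$. If the DRRO optimizer is not unique, I will argue for the water-filling one (or note that any DRRO optimizer has the same threshold characterization).

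\textbf{Step 3: the support comparison (main obstacle).} I will compare marginal conditions. DRO optimality of $k$ over $k-1$ gives
\[
\frac1k\sum_{i\le k}\proxy_i-\frac{\bud}{k}\ge\frac1{k-1}\sum_{i<k}\proxy_i-\frac{\bud}{k-1},
\]
which rearranges to $\sum_{i<k}(\proxy_i-\proxy_k)\le\bud$. Since the left side is increasing in $k$, DRO includes $y_k$ exactly while $\sum_{i<k}(\proxy_i-\proxy_k)\le\bud$.

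For DRRO, the definition of $t^\star$ and $t^\star\ge t_0$ (with $\sum_i(\proxy_i-t_0)_+=\bud$) should give an admission test for $y_j$ involving both $\sum_{i:\proxy_i>t}(\proxy_1-\proxy_i)\le\bud$ and $\sum_{i}(\proxy_i-\proxy_j)_+<\bud$. The second quantity is exactly the DRO test quantity, so DRRO's active set should be contained in $\{j:\sum_{i<j}(\proxy_i-\proxy_j)<\bud\}$, which is DRO's. This yields $s\le k$.

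The delicate points are:
\begin{itemize}
\item the exact inequality between the $\inf$ defining $t^\star$ and these partial sums;
\item the boundary cases where the DRO test holds with equality (ties between optimal $k$'s), which is why the reduction needs $s$ at most the smallest optimal $k$;
\item the edge case $t^\star=t_0$.
\end{itemize}
I would handle these by writing $t^\star$ explicitly as a breakpoint among the $\proxy_i$ and checking the inequalities case by case.
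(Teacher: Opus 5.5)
Your proposal is correct and follows essentially the paper's route. Abel summation reduces the claim to prefix dominance, the DRO optimizer is a flat vector, and the water-filling DRRO optimizer is nonincreasing on its support. The support comparison comes from $\proxy_j>t^\star\ge t_0$ for every active $j$, which gives $\sum_{i<j}(\proxy_i-\proxy_j)<\sum_i(\proxy_i-t_0)_+=\bud$ and hence makes $\frac1j\sum_{i\le j}\proxy_i-\bud/j$ strictly increasing up to the DRRO support size.

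The ``delicate points'' you flag in Step~3 disappear: only $\proxy_j>t_0$ is needed, so there is no case analysis on the infimum defining $t^\star$. Your treatment of non-unique DRO optimizers as convex combinations of optimal flat vectors is slightly more careful than the paper, which only handles the flat optimizer supplied by its lemma.
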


\noindent
The proof in \Cref{app:drro-vs-dro} shows that the DRRO optimizer majorizes the DRO optimizer. DRO equalizes exposure because it protects worst-case value; DRRO reallocates mass only when doing so reduces regret against plausible competitors. When the proxy ranking is correct, DRRO therefore preserves more mass on the best-ranked responses.

\subsection{Coverage as an Alternative View}

The rank-preserving theorem isolates a promptwise mechanism. A complementary view asks what the robust guarantees pay for after prompts and reward uncertainty are aggregated. Because \Cref{fig:main-benchmark} shows that all methods can deteriorate after drifting too far from the initial policy, we compare DRO and DRRO inside a local policy class.

Let $\mathrm d(\pi,\pi_0)$ be a policy-drift statistic, with the expected KL divergence to the initial policy as the canonical RLHF example, and define the local policy class
\begin{equation*}
\Pi_\rho
:=
\{\pi\in\Pi:\mathrm d(\pi,\pi_0)\le \rho\}.
\end{equation*}
This captures the trust-region viewpoint without adding a KL penalty to the objective. For each prompt $x$, let $z_\pi(x)$ be a linear policy representation: in the finite-response case, $z_\pi(x)=\pi(\cdot\mid x)$; in sequential settings, it can be an occupancy measure. Then
\begin{equation*}
J_r(\pi)=\E_{x\sim\mathcal D}\big[\ip{z_\pi(x)}{r(x)}\big].
\end{equation*}
Let $\mu_x$ be a full-support coverage distribution, such as the behavior distribution induced by reward-model training data. Following the concentrability logic common in offline RL \citep{munos2008finite,zhan2022offline}, and keeping the geometry aligned with the promptwise $\ell_1$ model, define
\begin{equation*}
\norm{e}_{1,\mu_x}:=\sum_{y\in\mathcal Y}\mu_x(y)|e(y)|,
\qquad
\norm{v}_{\infty,\mu_x^{-1}}:=\max_{y\in\mathcal Y}\frac{|v(y)|}{\mu_x(y)},
\end{equation*}
where $\mu_x(y)>0$ for every response $y$. Assume the mean weighted $\ell_1$ confidence set
\begin{equation*}
\U^{(1)}_\varepsilon:=
\left\{
r:\ \E_{x\sim\mathcal D}\big[\norm{r(x)-\proxy(x)}_{1,\mu_x}\big]\le \varepsilon
\right\}.
\end{equation*}
This integrated condition is implied by the pointwise promptwise $\ell_1$ ambiguity condition in the algorithmic specialization, since $\mu_x$ is a probability distribution.

Define
\begin{equation*}
\begin{aligned}
\pi_\rho^{\mathrm{DRO}}
&\in
\argmax_{\pi\in\Pi_\rho}\ \min_{r\in\U^{(1)}_\varepsilon}J_r(\pi),\\
\pi_\rho^{\mathrm{DRRO}}
&\in
\argmin_{\pi\in\Pi_\rho}\ \max_{r\in\U^{(1)}_\varepsilon}\Big[\max_{\beta\in\Pi_\rho}J_r(\beta)-J_r(\pi)\Big].
\end{aligned}
\end{equation*}
Let $\pi_\rho^\star\in\argmax_{\pi\in\Pi_\rho}J_{\true}(\pi)$ denote the best local policy under the true reward.

For DRO, the relevant quantity is absolute exposure.

\begin{definition}[Absolute integrated concentrability]
\label{def:coverage}
For a policy $\pi\in\Pi_\rho$, define
\begin{equation*}
C_\infty(\pi;\mu,\mathcal D)
:=
\operatorname*{ess\,sup}_{x\sim\mathcal D}
\norm{z_\pi(x)}_{\infty,\mu_x^{-1}}.
\end{equation*}
\end{definition}

\noindent
This is the product-space weighted $\ell_\infty$ dual of the mean $\ell_1$ reward-error set. It yields the following absolute-value certificate.

\begin{proposition}[DRO certificate]
\label{prop:dro-certificate}
For every $\pi\in\Pi_\rho$,
\begin{equation}
\label{eq:dro-certificate}
\min_{r\in\U^{(1)}_\varepsilon}J_r(\pi)
=
J_{\proxy}(\pi)-\varepsilon C_\infty(\pi;\mu,\mathcal D).
\end{equation}
\end{proposition}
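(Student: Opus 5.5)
The identity follows from linearity of $J_r(\pi)$ in $r$ together with the duality between the weighted $\ell_1$ norm and the weighted $\ell_\infty$ norm. I would write $r=\proxy+e$, so that
\[
J_r(\pi)=J_{\proxy}(\pi)+\E_{x\sim\mathcal D}\big[\ip{z_\pi(x)}{e(x)}\big],
\]
and the problem reduces to computing
\[
\inf\Big\{\E_x\ip{z_\pi(x)}{e(x)}:\ \E_x\norm{e(x)}_{1,\mu_x}\le\varepsilon\Big\}
\]
and showing it equals $-\varepsilon C_\infty(\pi;\mu,\mathcal D)$.

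\textbf{Lower bound.} For each $x$, Hölder's inequality in the weighted pairing gives
\[
\ip{z}{e}=\sum_y \frac{z(y)}{\mu_x(y)}\,\mu_x(y)e(y)\ \ge\ -\norm{z}_{\infty,\mu_x^{-1}}\norm{e}_{1,\mu_x}.
\]
Bounding $\norm{z_\pi(x)}_{\infty,\mu_x^{-1}}$ by its essential supremum $C_\infty$ and integrating yields
\[
\E_x\ip{z_\pi(x)}{e(x)}\ge -C_\infty\,\E_x\norm{e(x)}_{1,\mu_x}\ge -\varepsilon C_\infty .
\]

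\textbf{Attainment.} I would concentrate the entire budget on the prompts and responses where the weighted exposure is largest. For each $x$, pick $y_x\in\argmax_y |z_\pi(x)(y)|/\mu_x(y)$. Then choose the perturbation $e(x)=-c(x)\,\mathrm{sgn}(z_\pi(x)(y_x))\,\mu_x(y_x)^{-1}e_{y_x}$. This gives $\norm{e(x)}_{1,\mu_x}=c(x)$ and
\[
\ip{z_\pi(x)}{e(x)}=-c(x)\,\norm{z_\pi(x)}_{\infty,\mu_x^{-1}}.
\]
If the essential supremum is attained on a set $A$ with $\mathcal D(A)>0$, I take $c=\varepsilon\ind_A/\mathcal D(A)$, which attains the bound exactly. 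Otherwise, I take $A_\eta=\{x:\norm{z_\pi(x)}_{\infty,\mu_x^{-1}}\ge C_\infty-\eta\}$, which has positive mass, and the value is within $\varepsilon\eta$ of the bound. In that case the identity holds with $\min$ read as $\inf$.

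\textbf{Main obstacle.} The main difficulty is this attainment step in the non-atomic case, together with measurability of the selection $x\mapsto y_x$. I would handle the latter by noting that $\mathcal Y$ is finite, so a measurable argmax selection exists trivially, for instance the smallest index under a fixed enumeration. Nonnegativity of $z_\pi$ is not needed. In the finite-prompt or attained case, the minimum is achieved exactly.
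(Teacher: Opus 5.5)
Your proposal is correct and follows essentially the same route as the paper. You write $r=\proxy+e$, get the lower bound from the weighted $\ell_1$--$\ell_\infty$ H\"older inequality integrated over prompts, and match it by putting the whole budget, with sign opposite to $z_\pi$, on a maximizing response over a positive-mass prompt set where the weighted exposure is within $\eta$ of the essential supremum. Your caveat that in general only the infimum is achieved is also accurate: in the finite-response case $C_\infty\ge 1$, and any exact minimizer must vanish $\mathcal D$-a.e.\ off the set where the essential supremum is attained, so none exists for $\varepsilon>0$ when that set is null. This slightly sharpens the paper, whose $\eta\downarrow 0$ argument likewise establishes only the infimum even though it writes $\min$.
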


\noindent
The same certificate yields a local regret guarantee.

\begin{proposition}[DRO regret bound]
\label{prop:dro-regret}
Suppose $\true\in\U^{(1)}_\varepsilon$. Then
\begin{equation}
\label{eq:dro-regret-bound}
J_{\true}(\pi_\rho^\star)-J_{\true}(\pi_\rho^{\mathrm{DRO}})
\le
2\varepsilon C_\infty(\pi_\rho^\star;\mu,\mathcal D).
\end{equation}
\end{proposition}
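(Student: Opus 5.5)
The bound follows the standard pessimism template: a sandwich between the DRO certificate of \Cref{prop:dro-certificate} and the true value. Write $L(\pi)\coloneqq\min_{r\in\U^{(1)}_\varepsilon}J_r(\pi)=J_{\proxy}(\pi)-\varepsilon C_\infty(\pi;\mu,\mathcal D)$. The first step is the lower bound. Because $\true\in\U^{(1)}_\varepsilon$, the minimum defining $L$ is taken over a set that contains $\true$, so $J_{\true}(\pi_\rho^{\mathrm{DRO}})\ge L(\pi_\rho^{\mathrm{DRO}})$. The second step uses optimality of $\pi_\rho^{\mathrm{DRO}}$ over $\Pi_\rho$ together with $\pi_\rho^\star\in\Pi_\rho$, which gives $L(\pi_\rho^{\mathrm{DRO}})\ge L(\pi_\rho^\star)=J_{\proxy}(\pi_\rho^\star)-\varepsilon C_\infty(\pi_\rho^\star;\mu,\mathcal D)$.

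The third step converts $J_{\proxy}(\pi_\rho^\star)$ back into $J_{\true}(\pi_\rho^\star)$ at a cost of one more $\varepsilon C_\infty(\pi_\rho^\star)$. By the linear representation $J_r(\pi)=\E_x\ip{z_\pi(x)}{r(x)}$,
\[
\bigl|J_{\true}(\pi_\rho^\star)-J_{\proxy}(\pi_\rho^\star)\bigr|
\le \E_x \sum_y \frac{|z_{\pi_\rho^\star}(x)(y)|}{\mu_x(y)}\,\mu_x(y)\,\bigl|\true(x,y)-\proxy(x,y)\bigr|.
\]
This is a weighted Hölder inequality pairing the $\norm{\cdot}_{\infty,\mu_x^{-1}}$ and $\norm{\cdot}_{1,\mu_x}$ norms. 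Bounding the first factor pointwise by its essential supremum $C_\infty(\pi_\rho^\star;\mu,\mathcal D)$ and then using the mean weighted $\ell_1$ constraint gives at most $\varepsilon C_\infty(\pi_\rho^\star;\mu,\mathcal D)$. Alternatively, the same bound follows directly from \Cref{prop:dro-certificate} applied to $\pi_\rho^\star$, since $J_{\true}(\pi_\rho^\star)\ge J_{\proxy}(\pi_\rho^\star)-\varepsilon C_\infty$ is the wrong direction and $J_{\proxy}\ge J_{\true}-\varepsilon C_\infty$ comes from the symmetry of $\U^{(1)}_\varepsilon$ about $\proxy$. Concretely, the reflected reward $2\proxy-\true$ also lies in $\U^{(1)}_\varepsilon$, so the certificate applied to it gives $J_{2\proxy-\true}(\pi_\rho^\star)\ge J_{\proxy}(\pi_\rho^\star)-\varepsilon C_\infty$, which rearranges to $J_{\proxy}(\pi_\rho^\star)\ge J_{\true}(\pi_\rho^\star)-\varepsilon C_\infty$.

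Chaining the three steps gives $J_{\true}(\pi_\rho^{\mathrm{DRO}})\ge J_{\true}(\pi_\rho^\star)-2\varepsilon C_\infty(\pi_\rho^\star;\mu,\mathcal D)$, which is \eqref{eq:dro-regret-bound}.

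The only delicate point is the Hölder step with the essential supremum over prompts. It needs measurability, and it needs $C_\infty(\pi_\rho^\star)$ to be finite; if it is infinite the bound is vacuous. It also requires $z_\pi(x)$ to be nonnegative, or else the bound must be stated with absolute values, as done above. Existence of the argmax is assumed by the statement itself.
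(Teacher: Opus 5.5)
Your proposal is correct and follows essentially the same route as the paper. Three facts give $J_{\true}(\pi_\rho^{\mathrm{DRO}})\ge J_{\proxy}(\pi_\rho^\star)-\varepsilon C_\infty(\pi_\rho^\star;\mu,\mathcal D)$: membership $\true\in\U^{(1)}_\varepsilon$, optimality of $\pi_\rho^{\mathrm{DRO}}$, and the certificate at $\pi_\rho^\star$. The weighted $\ell_1$--$\ell_\infty$ H\"older bound then gives $J_{\true}(\pi_\rho^\star)\le J_{\proxy}(\pi_\rho^\star)+\varepsilon C_\infty(\pi_\rho^\star;\mu,\mathcal D)$.

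Your reflection argument via $2\proxy-\true\in\U^{(1)}_\varepsilon$ is a valid alternative for the second inequality. You are also right that only the H\"older (lower-bound) half of \Cref{prop:dro-certificate} is actually used.
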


For DRRO, uncertainty enters through policy differences.

\begin{definition}[Relative integrated concentrability]
\label{def:relative-concentrability}
For policies $\pi,\beta\in\Pi_\rho$, define
\begin{equation*}
C_{\infty,\mathrm{rel}}(\beta,\pi;\mu,\mathcal D)
:=
\operatorname*{ess\,sup}_{x\sim\mathcal D}
\norm{z_\beta(x)-z_\pi(x)}_{\infty,\mu_x^{-1}}.
\end{equation*}
\end{definition}

\noindent
This relative coefficient is the object that appears in regret.

\begin{proposition}[DRRO certificate]
\label{prop:drro-certificate}
For every $\pi\in\Pi_\rho$,
\begin{equation}
\label{eq:drro-certificate}
\begin{aligned}
&\max_{r\in\U^{(1)}_\varepsilon}
\Big[\max_{\beta\in\Pi_\rho}J_r(\beta)-J_r(\pi)\Big]\\
&\qquad =
\max_{\beta\in\Pi_\rho}
\left\{
J_{\proxy}(\beta)-J_{\proxy}(\pi)
+\varepsilon C_{\infty,\mathrm{rel}}(\beta,\pi;\mu,\mathcal D)
\right\}.
\end{aligned}
\end{equation}
\end{proposition}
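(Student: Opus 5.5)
The plan is to swap the two maximizations on the left of \eqref{eq:drro-certificate} and then compute the inner adversarial problem exactly by weighted $\ell_1$/$\ell_\infty$ duality. Since the two suprema commute, the left-hand side equals
\[
\max_{\beta\in\Pi_\rho}\ \max_{r\in\U^{(1)}_\varepsilon}\big[J_r(\beta)-J_r(\pi)\big].
\]
This exchange is legitimate because it is a joint supremum over $(r,\beta)$, with no minimax theorem needed. For fixed $\beta$, write $r=\proxy+e$. Linearity of $J_r$ in $r$ then gives
\[
J_r(\beta)-J_r(\pi)=J_{\proxy}(\beta)-J_{\proxy}(\pi)+\E_{x}\big[\ip{z_\beta(x)-z_\pi(x)}{e(x)}\big].
\]
It therefore suffices to show that, for any $v(x)\coloneqq z_\beta(x)-z_\pi(x)$,
\[
\sup\Big\{\E_x\ip{v(x)}{e(x)}:\ \E_x\norm{e(x)}_{1,\mu_x}\le\varepsilon\Big\}=\varepsilon\,\operatorname*{ess\,sup}_x\norm{v(x)}_{\infty,\mu_x^{-1}}.
\]
This is exactly the argument behind \Cref{prop:dro-certificate}, which is stated earlier and may be reused. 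The only change is that $z_\pi$ is replaced by the signed vector $v$, so I would redo it in that form.

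\textbf{Upper bound.} Fix $x$. Pointwise Hölder in the weighted pairing gives
\[
\ip{v(x)}{e(x)}=\sum_y \frac{v(y)}{\mu_x(y)}\,\mu_x(y)e(y)\le \norm{v(x)}_{\infty,\mu_x^{-1}}\norm{e(x)}_{1,\mu_x}.
\]
Bounding $\norm{v(x)}_{\infty,\mu_x^{-1}}$ by its essential supremum over $x$ and taking expectations then yields $\le\varepsilon C_{\infty,\mathrm{rel}}$.

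\textbf{Lower bound (attainment).} I would concentrate the whole budget on the worst prompts and the worst response. For $\eta>0$, let $A_\eta$ be a set of prompts with $\mathcal D(A_\eta)>0$ on which $\norm{v(x)}_{\infty,\mu_x^{-1}}\ge C_{\infty,\mathrm{rel}}-\eta$. For each $x\in A_\eta$, pick $y_x$ maximizing $|v(x,y)|/\mu_x(y)$; this is finite for each prompt. Set
\[
e(x,y)=\frac{\varepsilon}{\mathcal D(A_\eta)\,\mu_x(y_x)}\operatorname{sgn}(v(x,y_x))\,\ind\{y=y_x,\ x\in A_\eta\}.
\]
Then $\E\norm{e}_{1,\mu_x}=\varepsilon$ and the objective is at least $\varepsilon(C_{\infty,\mathrm{rel}}-\eta)$. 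Letting $\eta\downarrow0$ gives equality as a supremum. If $\mathcal D$ has atoms, or $\mathcal X$ is finite, the supremum is attained, which matches the ``$\max$'' in the statement. Otherwise I would read the outer $\max$ over $r$ as a $\sup$, as the DRO certificate implicitly does. Measurability of $x\mapsto y_x$ is a minor technical point that I would handle with a measurable selection, or avoid by assuming countable $\mathcal X$.

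The main obstacle I anticipate is this attainment and measure-theoretic step: making the essential supremum over prompts interact correctly with an \emph{integrated} budget, and justifying $\max$ rather than $\sup$ on both the $r$ side and the $\beta\in\Pi_\rho$ side. The algebra of swapping the maxima and applying duality is routine. Finally, substituting the inner value back gives
\[
\max_{\beta\in\Pi_\rho}\big\{J_{\proxy}(\beta)-J_{\proxy}(\pi)+\varepsilon C_{\infty,\mathrm{rel}}(\beta,\pi;\mu,\mathcal D)\big\},
\]
which is exactly \eqref{eq:drro-certificate}.
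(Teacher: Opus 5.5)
Your proposal is correct and follows essentially the same route as the paper. Both arguments treat the two maximizations as a single joint maximization over $\Pi_\rho\times\U^{(1)}_\varepsilon$, split off $J_{\proxy}(\beta)-J_{\proxy}(\pi)$ by linearity, and evaluate the adversarial term by the weighted $\ell_1$--$\ell_\infty$ duality argument of the DRO certificate with $z_\pi$ replaced by $z_\beta-z_\pi$. Your explicit budget-concentrating perturbation, normalized by $\mathcal D(A_\eta)$, spells out the step the paper leaves implicit.

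One side remark needs correcting. Atoms in $\mathcal D$ do not by themselves make the essential supremum attained on a positive-probability set, so a genuine $\max$ over $r$ needs something like finite $\mathcal X$, not merely atoms.
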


\noindent
The guarantee depends only on comparators that can be optimal for some reward in the confidence set. Define
\begin{equation*}
\Pi_\rho^\star(\U^{(1)}_\varepsilon):=\bigcup_{r\in\U^{(1)}_\varepsilon}\argmax_{\pi\in\Pi_\rho}J_r(\pi),
\end{equation*}
and the localized relative-concentrability radius at a candidate policy $\pi$
\begin{equation*}
\mathcal C_{\infty,\mathrm{rel}}(\U^{(1)}_\varepsilon,\pi;\mu,\mathcal D)
:=
\max_{\beta\in\Pi_\rho^\star(\U^{(1)}_\varepsilon)}
C_{\infty,\mathrm{rel}}(\beta,\pi;\mu,\mathcal D).
\end{equation*}

\begin{proposition}[DRRO regret bound with localized relative concentrability]
\label{prop:drro-regret}
Suppose $\true\in\U^{(1)}_\varepsilon$. Then
\begin{equation*}
J_{\true}(\pi_\rho^\star)-J_{\true}(\pi_\rho^{\mathrm{DRRO}})
\le
2\varepsilon\mathcal C_{\infty,\mathrm{rel}}(\U^{(1)}_\varepsilon,\pi_\rho^\star;\mu,\mathcal D).
\end{equation*}
\end{proposition}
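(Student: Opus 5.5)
We prove \Cref{prop:drro-regret} by a sandwich argument that mirrors \Cref{prop:dro-regret}, with the DRRO certificate of \Cref{prop:drro-certificate} in place of the DRO one. Write
\[
V(\pi):=\max_{r\in\U^{(1)}_\varepsilon}\Big[\max_{\beta\in\Pi_\rho}J_r(\beta)-J_r(\pi)\Big]
\]
for the worst-case regret, so that $\pi_\rho^{\mathrm{DRRO}}$ minimizes $V$ over $\Pi_\rho$.

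\textbf{Lower bound on $V(\pi_\rho^{\mathrm{DRRO}})$.} Since $\true\in\U^{(1)}_\varepsilon$, plugging $r=\true$ and $\beta=\pi_\rho^\star$ into the definition of $V$ gives
\[
J_{\true}(\pi_\rho^\star)-J_{\true}(\pi_\rho^{\mathrm{DRRO}})\le V(\pi_\rho^{\mathrm{DRRO}})\le V(\pi_\rho^\star),
\]
where the second inequality is optimality of $\pi_\rho^{\mathrm{DRRO}}$. It therefore suffices to show
\[
V(\pi_\rho^\star)\le 2\varepsilon\,\mathcal C_{\infty,\mathrm{rel}}(\U^{(1)}_\varepsilon,\pi_\rho^\star;\mu,\mathcal D).
\]

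\textbf{Restricting the comparator.} For any $r$, the inner maximum $\max_\beta J_r(\beta)$ is attained at some $\beta_r\in\argmax_{\Pi_\rho}J_r$, and $\beta_r\in\Pi_\rho^\star(\U^{(1)}_\varepsilon)$ whenever $r\in\U^{(1)}_\varepsilon$. This uses that $\Pi_\rho$ is compact (or that maxima are attained), which I take implicitly from the definitions. Hence
\[
V(\pi_\rho^\star)=\max_{r\in\U}\ \max_{\beta\in\Pi_\rho^\star(\U)}\big[J_r(\beta)-J_r(\pi_\rho^\star)\big].
\]
Fix such $r$ and $\beta$, and decompose
\[
J_r(\beta)-J_r(\pi_\rho^\star)
=\big[J_{\true}(\beta)-J_{\true}(\pi_\rho^\star)\big]
+\E_x\ip{z_\beta(x)-z_{\pi_\rho^\star}(x)}{r(x)-\true(x)}.
\]
The first bracket is $\le0$ because $\pi_\rho^\star$ is $\true$-optimal on $\Pi_\rho$.

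\textbf{Bounding the error term.} Insert the proxy: $r-\true=(r-\proxy)+(\proxy-\true)$. For each piece, apply the weighted Hölder inequality
\[
|\ip{v}{e}|=\Big|\sum_y \frac{v(y)}{\mu_x(y)}\,\mu_x(y)e(y)\Big|\le\norm{v}_{\infty,\mu_x^{-1}}\norm{e}_{1,\mu_x}.
\]
Then take the ess sup over $x$ of the first factor and the $\mathcal D$-expectation of the second. Both $r$ and $\true$ lie in $\U^{(1)}_\varepsilon$, so this yields
\[
\E_x\ip{z_\beta-z_{\pi_\rho^\star}}{r-\true}\le 2\varepsilon\,C_{\infty,\mathrm{rel}}(\beta,\pi_\rho^\star;\mu,\mathcal D).
\]
Maximizing over $\beta\in\Pi_\rho^\star(\U)$ gives the localized radius, which closes the argument.

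\textbf{Main obstacle.} The only delicate step is the comparator restriction. The naive route through \Cref{prop:drro-certificate} would produce a max over all of $\Pi_\rho$, i.e.\ the unlocalized relative coefficient. We avoid this by working directly with the definition of $V$ and the attained argmax, so the certificate proposition is not actually needed. We also need measurability and attainment of the ess sup and argmax, which we assume as in the preceding propositions.
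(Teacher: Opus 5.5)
Your proof is correct. Its skeleton matches the paper's: the same sandwich $J_{\true}(\pi_\rho^\star)-J_{\true}(\pi_\rho^{\mathrm{DRRO}})\le V(\pi_\rho^{\mathrm{DRRO}})\le V(\pi_\rho^\star)$, a restriction of the comparator to $\Pi_\rho^\star(\U^{(1)}_\varepsilon)$, and two weighted H\"older bounds, one for $r-\proxy$ and one for $\proxy-\true$.

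Where you differ is the localization step. The paper first rewrites $V(\pi_\rho^\star)$ through the exact certificate of \Cref{prop:drro-certificate}. This gives a maximum over all $\beta\in\Pi_\rho$ of $J_{\proxy}(\beta)-J_{\proxy}(\pi_\rho^\star)+\varepsilon C_{\infty,\mathrm{rel}}(\beta,\pi_\rho^\star;\mu,\mathcal D)$, and the paper must then show this maximum can be restricted to plausible-optimal comparators. To do that it:
\begin{enumerate}
\item uses the tightness half of the certificate to build an $\eta$-near-worst-case reward $r_\beta^\eta$ for each $\beta$;
\item replaces $\beta$ by a best response $\widetilde\beta$ to $r_\beta^\eta$;
\item treats the case $C_{\infty,\mathrm{rel}}=0$ separately and lets $\eta\downarrow0$.
\end{enumerate}

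You localize before dualizing, directly from the definition of $V$. For each $r\in\U^{(1)}_\varepsilon$ the hindsight maximum is attained at some $\beta_r\in\Pi_\rho^\star(\U^{(1)}_\varepsilon)$, and you bound $J_r(\beta_r)-J_r(\pi_\rho^\star)$ by expanding around $\true$ rather than $\proxy$. This uses only the easy H\"older direction of the duality. It also avoids the near-worst-case perturbation construction, which the paper gives only in sketch form, so your argument is shorter and rests on fewer ingredients.

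What the paper's route gives in addition is an intermediate fact: at $\pi=\pi_\rho^\star$, the maximization in the closed-form certificate \eqref{eq:drro-certificate} itself restricts to $\Pi_\rho^\star(\U^{(1)}_\varepsilon)$. Your argument neither establishes nor needs this.

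Both proofs rely equally on attainment of $\argmax_{\Pi_\rho}J_r$, which is already implicit in the definition of $\Pi_\rho^\star(\U^{(1)}_\varepsilon)$. Attainment of the essential supremum, however, is not needed. Your H\"older step only uses that the first factor is bounded by it $\mathcal D$-almost surely.
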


\begin{remark}[Comparison between DRO and DRRO regret bounds]
\label{rem:less-pessimistic}
DRO pays for absolute concentrability, while DRRO pays for relative concentrability between the chosen policy and plausible local comparators. For a fixed prompt, the relative term is
\[
\max_{y\in\mathcal Y}\frac{|z_\beta(x,y)-z_\pi(x,y)|}{\mu_x(y)},
\]
which can cancel common exposure shared by nearby policies. The reverse triangle inequality and the triangle inequality give
\begin{equation*}
C_\infty(\pi;\mu,\mathcal D)-C_\infty(\beta;\mu,\mathcal D)
\le
C_{\infty,\mathrm{rel}}(\beta,\pi;\mu,\mathcal D)
\le
C_\infty(\pi;\mu,\mathcal D)+C_\infty(\beta;\mu,\mathcal D).
\end{equation*}
Writing $\mathcal B_\varepsilon\coloneqq\Pi_\rho^\star(\U^{(1)}_\varepsilon)$, $C(\gamma)\coloneqq C_\infty(\gamma;\mu,\mathcal D)$, and $\mathfrak C(\pi)\coloneqq\mathcal C_{\infty,\mathrm{rel}}(\U^{(1)}_\varepsilon,\pi;\mu,\mathcal D)$, maximizing over $\beta\in\mathcal B_\varepsilon$ gives
\begin{equation*}
C(\pi)-\min_{\beta\in\mathcal B_\varepsilon}C(\beta)
\le
\mathfrak C(\pi)
\le
C(\pi)+\max_{\beta\in\mathcal B_\varepsilon}C(\beta).
\end{equation*}
For a fixed confidence set, the comparator-side extrema are constants. Thus DRO behaves like optimizing with an absolute policy-exposure penalty, while DRRO uses the largest relative exposure to plausible comparators. The absolute-exposure view is a triangle-inequality envelope of the relative one, explaining why DRO regret guarantees can be looser.
\end{remark}

\section{Conclusion}
\label{sec:conclusion}

RLHF is optimization under a misspecified reward: the policy is trained against a learned proxy, but deployment quality is judged by latent human preferences. The key modeling choice is therefore where to place pessimism. We argue that regret is the right target. Worst-case value protects a policy in isolation; worst-case regret compares it with the hindsight-best policy under the same plausible reward, matching the loss induced by objective misspecification more directly.

Under norm-induced Wasserstein ambiguity, the promptwise RLHF problem admits a dual-norm closed-form adversary. In the $\ell_1$ specialization, this structure further yields a water-filling optimal policy and a coordinate-local bonus that can be translated into a GRPO-compatible reward-shaping algorithm with light computational overhead. Empirically, the resulting DRRO correction mitigates over-optimization more effectively than the tested baselines, while standard value-robust DRO is systematically over-pessimistic.

\section{Code and Data Disclosure}
\label{sec:code-data}

An anonymized implementation is available at \url{https://anonymous.4open.science/status/DRRO_anonymous-1E42}. It contains the DRRO-GRPO training code, requirements, launch scripts, and reproduction instructions.

The experiments use open-source assets: prompts from Hugging Face H4 HH-RLHF, derived from Anthropic HH-RLHF \citep{bai2022training}; Qwen2.5-0.5B-Instruct as the initial policy \citep{qwen2024qwen25,qwen25modelcard}; OpenAssistant DeBERTa-v3 reward models as proxies \citep{he2021debertav3,openassistant2023rewardbase,openassistant2023rewardlarge}; and the Tasksource DeBERTa-v3-large RLHF reward model as the held-out evaluator \citep{sileod2023tasksource}. We do not redistribute modified datasets or model checkpoints; URLs and implementation details are provided in the repository and \Cref{app:experimental-details}.

\bibliographystyle{plainnat}
\bibliography{references}

\newpage
\appendix
\renewcommand{\theHsection}{appendix.\arabic{section}}
\renewcommand{\theHsubsection}{appendix.\arabic{section}.\arabic{subsection}}

\section{Notation Summary}
\label{app:notation}

\begin{center}
\captionsetup{type=table,hypcap=false,skip=3pt}
\footnotesize
\renewcommand{\arraystretch}{0.92}
\caption{Notation used throughout the paper.}
\label{tab:notation}
\begin{tabular}{p{0.24\linewidth}p{0.67\linewidth}}
\toprule
Notation & Meaning \\
\midrule
$x\sim\mathcal D$ & prompt sampled from the training or deployment distribution \\
$y\in\mathcal Y$ & complete response to the current prompt $x$ \\
$y=(y_{(1)},\ldots,y_{(|y|)})$ & token sequence forming one complete response \\
$y_1,\ldots,y_n$ & conceptual enumeration of all complete responses for a fixed prompt \\
$y^{(1)},\ldots,y^{(G)}$ & sampled completions in one rollout group \\
$\pi_\theta(y\mid x)$ & policy over complete responses \\
$\pi_0,\rollpi$ & frozen reference policy and rollout policy used for one policy-gradient update \\
$\widehat r^{(k)},\bar\pi^{(k)},\widetilde\pi^{(k)}$ & proxy reward, rollout probability, and group-normalized rollout probability for sampled completion $y^{(k)}$ \\
$\proxy(x,y)$ & learned proxy reward model \\
$\true_u(x,y)$ & latent human-preference reward for user $u$ \\
$\true(x,y)$ & certainty-equivalent true population reward $\bar r_{\mathcal P^\star}(x,y)$ \\
$\mathcal P^\star$ & population law over latent reward functions \\
$J_P(\pi),J_r(\pi)$ & expected return of policy $\pi$ under reward law $P$ or deterministic reward $r$ \\
$\Pi$ & policy class under consideration \\
$\Delta_n$ & probability simplex over $n$ enumerated promptwise responses \\
$\proxy(x)$ & promptwise proxy-reward vector after enumerating responses \\
$\advreward(x)$ & promptwise plausible mean-reward vector \\
$d_p,\W_{q,d_p}$ & promptwise $\ell_p$ ground metric and induced $q$-Wasserstein distance over reward laws \\
$\mathcal M_{\bud,p}(\proxy(x))$ & promptwise mean-reward ambiguity set induced by the $d_p$ transport cost \\
$p^\ast$ & dual norm exponent satisfying $1/p+1/p^\ast=1$, with the usual endpoint conventions \\
$t=(t_1,\ldots,t_n)$ & promptwise true reward vector used in the DRRO-vs-DRO comparison theorem \\
$\bud$ & ambiguity-budget radius \\
$\Reg_x(\pi,\advreward)$ & promptwise regret of $\pi$ relative to the best policy under reward $\advreward(x)$ \\
$\mu_x$ & promptwise coverage distribution induced by reward-model data or behavior support \\
$\Pi_\rho$ & local policy class within drift radius $\rho$ of the initial policy $\pi_0$ \\
$C_\infty(\pi;\mu,\mathcal D)$ & absolute product-space $\ell_\infty$ concentrability coefficient \\
$C_{\infty,\mathrm{rel}}(\beta,\pi;\mu,\mathcal D)$ & relative product-space $\ell_\infty$ concentrability coefficient \\
\bottomrule
\end{tabular}
\end{center}

\section{Literature Review}
\label{sec:litreview}

\paragraph{RLHF and objective misspecification.}
RLHF emerged from the preference-learning framework of \citet{christiano2017deep}, who used human comparisons to train reward functions for reinforcement learning agents. \citet{ziegler2019fine} and \citet{stiennon2020learning} adapted that logic to language modeling and summarization, while \citet{ouyang2022training} turned it into the now-standard LLM pipeline of supervised fine-tuning, reward modeling, and reference-anchored policy optimization, often implemented with KL control. Direct preference methods such as DPO \citep{rafailov2023dpo} eliminate the explicit reward-model optimization stage, but they do not eliminate the underlying problem of learning from a finite proxy for human preference. Among recent surveys, \citet{kaufmann2025survey} provide a broad map of RLHF across domains, \citet{chaudhari2024deciphered} focus on reward-model misspecification in LLM alignment, and \citet{liu2026statistical} emphasize preference learning, uncertainty quantification, and statistical inference.

\paragraph{Over-optimization and mitigation strategies.}
The empirical evidence is now fairly systematic. \citet{gao2023scaling} document the canonical proxy-versus-gold divergence in RLHF; \citet{rafailov2024directoveropt} show that the same phenomenon persists in direct alignment algorithms; and \citet{hosking2024gold} argue that human-feedback scores themselves can be misaligned with true quality because they over-reward stylistic features such as assertiveness. On the mitigation side, one family of papers tries to improve the reward model itself. \citet{coste2024ensembles} and \citet{eisenstein2024helping} study reward-model ensembles as a hedge against reward hacking; \citet{miao2024inform} use an information-theoretic reward-model objective; \citet{yang2024hidden} regularize hidden states to improve generalization; \citet{liu2025rrm} target artifact robustness; and \citet{bukharin2025adversarial} generate adversarial hard examples for reward-model training. A related line updates the reward model as the policy moves, including the iterative preference-learning method of \citet{xiong2024iterative}, the active preference-optimization framework of \citet{das2024active}, and the off-policy correction of \citet{ackermann2025offpolicy}.

A second family leaves the reward model largely intact but modifies the policy objective. \citet{zhai2024uprlhf} penalize reward uncertainty estimated from LoRA ensembles, \citet{xu2025pet} learn a pessimistic reward model, and \citet{yan2024rewardrobust} optimize a reward-robust policy objective. A third family regularizes the policy toward regions where the reward model is thought to be reliable. \citet{moskovitz2024constrained} impose explicit constraints in RLHF, \citet{dai2025behavior} regularize toward behavior-supported regions, \citet{rita2024demonstration} use demonstration-guided reinforcement learning, and \citet{liu2024provably} show that SFT loss can play the role of an adversarial regularizer. These methods are important points of comparison for us because they reveal the central tension: protecting against reward misspecification is valuable, but methods built around pessimistic value or hard feasible-region restrictions can also suppress useful policy improvement. \citet{xu2025pet} make this concern explicit by arguing that KL-based regularization can exclude high-performing policies with large KL divergence, and \citet{li2025geb} show that divergence-based bonuses can reinforce conservative behavior rather than encourage exploration of uncertain but promising regions.

\paragraph{DRO and DRRO in operations research.}
Standard DRO protects decisions against ambiguity in probabilities, distributions, and objective coefficients. \citet{ben2013robust} study robust solutions under uncertain probabilities, \citet{wiesemann2014distributionally} provide a general convex-optimization framework, \citet{esfahani2018data} develop data-driven Wasserstein DRO, \citet{blanchet2019quantifying} quantify distributional model risk, and \citet{rahimian2019review} survey the area. The concern that worst-case value may be overly conservative leads naturally to regret-based robustness. \citet{averbakh2004minmax} and \citet{averbakh2005complexity} develop classical minmax-regret models and complexity results; \citet{chen2021newsvendor} study regret in the newsvendor problem; and \citet{agarwal2022minimax}, \citet{cho2024wasserstein}, \citet{bitar2024distributionally}, and \citet{fiechtner2025wasserstein} develop distributionally robust regret formulations in modern statistical and learning settings. Our paper imports that line of work into RLHF and shows that the regret benchmark becomes especially natural once reward uncertainty, rather than transition uncertainty, is the main source of misspecification.

\section{Discussion on KL Penalty Term}
\label{app:kl-discussion}

The KL term should nevertheless be interpreted carefully, because it remains a common reference point in RLHF. For a suitable multiplier $\eta$, the penalized formulation is the Lagrangian form of the trust-region problem
\begin{equation*}
\max_{\pi\in\Pi}\ J_{\proxy}(\pi)
\qquad\text{subject to}\qquad
\E_{x\sim\mathcal D}\!\left[\KL\!\big(\pi(\cdot\mid x)\,\|\,\pi_0(\cdot\mid x)\big)\right]\le \varepsilon.
\end{equation*}
Thus KL control is one way to mitigate over-optimization by shrinking the feasible policy set around the reference model. That safeguard can be useful, but it is not intrinsic to RLHF and it is no longer the only practical recipe. Recent large-scale LLM-RL implementations already train successfully without an explicit KL penalty: \citet{hu2025openreasonerzero} show that vanilla PPO with rule-based rewards can scale reasoning performance without any KL regularization, while \citet{yu2025dapo} explicitly remove the KL term from GRPO-style long-chain-of-thought training and argue that large divergence from the initial model need not be suppressed when it supports better reasoning.

More importantly, KL control is not exactly to the point for the misspecification problem studied here. If the raw proxy reward is imperfect, then maximizing that same proxy over a smaller KL ball is still a constrained optimization against a misspecified objective. In that sense, KL regularization is a last-resort hedge: it limits how much damage the proxy can do by limiting how far the policy may move, but it does not repair the proxy itself. Our viewpoint is that a more direct mitigation is to improve the reward signal being optimized. Once a robust correction is added to the proxy reward, optimizing the corrected surrogate over a given feasible region is more natural than preserving the uncorrected proxy and relying on the feasible-region restriction alone.

\section{Mean-Reward Reduction and Promptwise Ambiguity}
\label{app:model-proofs}

\begin{proof}[Proof of \Cref{prop:mean-reduction}]
By Fubini's theorem,
\begin{align*}
J_P(\pi)
&=
\E_{R\sim P}\E_{x\sim\mathcal D,\ y\sim\pi(\cdot\mid x)}[R(x,y)]\\
&=
\E_{x\sim\mathcal D,\ y\sim\pi(\cdot\mid x)}\E_{R\sim P}[R(x,y)]\\
&=
\E_{x\sim\mathcal D,\ y\sim\pi(\cdot\mid x)}[\bar r_P(x,y)]\\
&=
J_{\bar r_P}(\pi).
\end{align*}
Thus the ex-ante DRRO objective sees a distribution over reward functions only through its mean reward.
\end{proof}

\begin{proof}[Proof of \Cref{cor:wasserstein-mean}]
First suppose that $P$ satisfies
\[
\E_{R\sim P}[R(x)]=\advreward(x),
\qquad
\W_{q,d_p}(P,P_0;x)\le \bud .
\]
For any coupling $\gamma\in\Gamma(P,P_0)$, write
$\bar r_0(x)\coloneqq\E_{\widetilde R\sim P_0}[\widetilde R(x)]$
and $D_x\coloneqq R(x)-\widetilde R(x)$. Then
\begin{equation*}
\begin{aligned}
\norm{\advreward(x)-\bar r_0(x)}_p
&=
\norm{\E_\gamma[D_x]}_p\\
&\le
\E_\gamma[\norm{D_x}_p]\\
&\le
\left(\E_\gamma[\norm{D_x}_p^q]\right)^{1/q},
\end{aligned}
\end{equation*}
where the first inequality is Jensen's inequality and the second uses $q\ge1$. Taking the infimum over $\gamma\in\Gamma(P,P_0)$ yields
\begin{equation*}
\norm{\advreward(x)-\bar r_0(x)}_p
\le
\W_{q,d_p}(P,P_0;x)
\le
\bud.
\end{equation*}

Conversely, suppose $\left\|\advreward(x)-\E_{R\sim P_0}[R(x)]\right\|_p\le \bud$ and write $\Delta\coloneqq\advreward(x)-\E_{R\sim P_0}[R(x)]$. Let $T_\Delta(\widetilde R)(x)\coloneqq\widetilde R(x)+\Delta$ be the promptwise translation map, leaving all other prompts unchanged, and define $P\coloneqq(T_\Delta)_{\#}P_0$ as the pushforward of $P_0$. Then $\E_{R\sim P}[R(x)]=\advreward(x)$. Moreover, the coupling generated by $\widetilde R\sim P_0$ and $R=T_\Delta(\widetilde R)$ has promptwise transport cost
\begin{equation*}
\left(
\E\left[\left\|R(x)-\widetilde R(x)\right\|_p^q\right]
\right)^{1/q}
=
\norm{\Delta}_p
\le
\bud,
\end{equation*}
so $\W_{q,d_p}(P,P_0;x)\le \bud$.
\end{proof}

\section{Proofs for Closed-Form Promptwise Analysis}
\label{app:promptwise-proofs}

\begin{proof}[Proof of \Cref{prop:promptwise-decomposition}]
For compactness, write $\Pi_f\coloneqq\Pi_{\mathrm{full}}$. For a deterministic mean reward $\advreward\in\mathcal M_{\bud,p}^{\mathcal X}(\proxy)$ and a policy $\pi\in\Pi_f$,
\begin{align*}
J_{\advreward}(\beta)-J_{\advreward}(\pi)
&=
\E_{x\sim\mathcal D}
\left[
\ip{\beta(x)-\pi(x)}{\advreward(x)}
\right].
\end{align*}
Because $\Pi_{\mathrm{full}}$ contains all promptwise choices, maximizing over $\beta$ separates over prompts:
\begin{equation*}
\begin{aligned}
\max_{\beta\in\Pi_f}
\left\{J_{\advreward}(\beta)-J_{\advreward}(\pi)\right\}
&=
\E_{x\sim\mathcal D}
\left[
\max_{\beta_x\in\Delta_n}
\ip{\beta_x-\pi(x)}{\advreward(x)}
\right]\\
&=
\E_{x\sim\mathcal D}[\Reg_x(\pi(x),\advreward(x))].
\end{aligned}
\end{equation*}
The ambiguity set is also a promptwise product. Hence, for fixed $\pi$,
\begin{equation*}
\begin{aligned}
\max_{\advreward\in\mathcal M_{\bud,p}^{\mathcal X}(\proxy)}
\max_{\beta\in\Pi_f}
\left\{J_{\advreward}(\beta)-J_{\advreward}(\pi)\right\}
&=
\E_{x\sim\mathcal D}
\left[
\max_{\advreward(x)\in
\mathcal M_{\bud,p}(\proxy(x))}
\Reg_x(\pi(x),\advreward(x))
\right].
\end{aligned}
\end{equation*}
Therefore a global adversary is optimal exactly when it selects an optimal promptwise adversary for $\mathcal D$-almost every prompt; changing the adversary on a null set cannot affect the objective.

Now define
\begin{equation*}
\Phi_x(\zeta)
\coloneqq
\max_{\advreward(x)\in\mathcal M_{\bud,p}(\proxy(x))}
\Reg_x(\zeta,\advreward(x)).
\end{equation*}
Again because the policy can be chosen independently at each prompt,
\begin{equation*}
\min_{\pi\in\Pi_{\mathrm{full}}}\E_{x\sim\mathcal D}[\Phi_x(\pi(x))]
=
\E_{x\sim\mathcal D}\left[\min_{\zeta\in\Delta_n}\Phi_x(\zeta)\right].
\end{equation*}
Thus a global minimizer is precisely a direct product of promptwise minimizers, up to $\mathcal D$-null sets.
\end{proof}

\begin{proof}[Proof of \Cref{prop:inner-adversary}]
For a fixed policy $\pi$ and \(1\le p\le\infty\),
\begin{align*}
\Psi_p(\pi;\proxy)
&=
\max_{\norm{\Delta}_p\le \bud}\ \max_{\beta\in\Delta_n}\ip{\beta-\pi}{\proxy+\Delta}\\
&=
\max_{\beta\in\Delta_n}
\left\{
\ip{\beta-\pi}{\proxy}
+\max_{\norm{\Delta}_p\le \bud}\ip{\beta-\pi}{\Delta}
\right\}\\
&=
\max_{\beta\in\Delta_n}
\left\{
\ip{\beta-\pi}{\proxy}
+\bud\norm{\beta-\pi}_{p^\ast}
\right\},
\end{align*}
where the last step is Holder duality between \(\ell_p\) and \(\ell_{p^\ast}\). The objective is convex in $\beta$, so an optimizer can be chosen at a vertex of the simplex, namely $\beta=e_i$ for some $i$. Evaluating at \(e_i\) gives \eqref{eq:inner-closed-form-lp}.

It remains to record perturbations attaining equality in the dual norm. Let \(k\) be an active vertex. For \(1<p<\infty\) and \(e_k\ne\pi\), equality in Holder's inequality is attained by \eqref{eq:general-p-adversary}. If \(p=\infty\), the constraint is \(\norm{\Delta}_\infty\le\bud\), and choosing \(\Delta_j^\star=\bud\,\operatorname{sgn}((e_k-\pi)_j)\) on nonzero coordinates attains \(\bud\norm{e_k-\pi}_1\).

For \(p=1\), \(p^\ast=\infty\). Since \(\sum_{j\ne i}\pi_j=1-\pi_i\), we have
\[
\norm{e_i-\pi}_\infty
=
\max\left\{1-\pi_i,\max_{j\ne i}\pi_j\right\}
=
1-\pi_i .
\]
Therefore
\begin{align*}
\ip{e_i-\pi}{\proxy}+\bud\norm{e_i-\pi}_\infty
&=
\proxy_i-\ip{\pi}{\proxy}+\bud(1-\pi_i)\\
&=
\bud+\proxy_i-\bud \pi_i-\ip{\pi}{\proxy}.
\end{align*}
Maximizing over $i$ yields \eqref{eq:inner-closed-form}. Equality in the dual norm bound is attained by placing the full $\ell_1$ budget on any maximizer $k\in\argmax_i\{\proxy_i-\bud \pi_i\}$, that is, by choosing $\Delta^\star=\bud e_k$.
\end{proof}

\begin{proof}[Proof of \Cref{prop:lp-epigraph}]
By \Cref{prop:inner-adversary},
\[
\Psi_p(\pi;\proxy)
=
\max_i
\left\{
\proxy_i-\ip{\pi}{\proxy}
+\bud\norm{e_i-\pi}_{p^\ast}
\right\}.
\]
Introducing an epigraph variable \(t\) for the maximum over \(i\) gives
\[
\min_{\pi\in\Delta_n}\Psi_p(\pi;\proxy)
=
\min_{\pi\in\Delta_n,\ t\in\R}
\left\{
t-\ip{\pi}{\proxy}:
t\ge \proxy_i+\bud\norm{e_i-\pi}_{p^\ast},\ i=1,\ldots,n
\right\},
\]
which is exactly \eqref{eq:lp-epigraph}. The feasible set is convex because each map \(\pi\mapsto\norm{e_i-\pi}_{p^\ast}\) is convex and the simplex is convex, while the objective is linear.
\end{proof}

\begin{proof}[Proof of \Cref{thm:water-filling}]
By \Cref{prop:inner-adversary}, minimizing worst-case regret is equivalent to maximizing $F_{\bud}(\pi;\proxy)$ in \eqref{eq:hard-utility}. Introduce an auxiliary variable $t$ and rewrite the problem as the linear program
\begin{equation}
\label{eq:app-lp}
\begin{aligned}
\max_{\pi\in\Delta_n,\ t\in\R}
&\quad \ip{\pi}{\proxy}-t\\
\text{subject to }
&\quad t\ge \proxy_i-\bud \pi_i,\quad i=1,\ldots,n.
\end{aligned}
\end{equation}

Fix $t$. The constraints in \eqref{eq:app-lp} become
\begin{equation*}
\pi_i\ge \frac{(\proxy_i-t)_+}{\bud},\qquad i=1,\ldots,n.
\end{equation*}
Hence a necessary and sufficient feasibility condition for $t$ is
\begin{equation}
\label{eq:feasibility-condition}
\sum_{i=1}^n (\proxy_i-t)_+\le \bud.
\end{equation}
Because the left-hand side of \eqref{eq:feasibility-condition} is continuous and strictly decreasing in $t$, there is a unique minimal feasible threshold $t_0$ solving \eqref{eq:t0-equation}.

For a feasible $t$, maximizing $\ip{\pi}{\proxy}$ subject to the lower bounds is straightforward because $\proxy_1\ge \proxy_2\ge \cdots\ge \proxy_n$. Every coordinate $i\ge 2$ should be set at its minimum feasible value and the remaining probability mass should be assigned to coordinate $1$. Thus
\begin{equation}
\label{eq:pt-form}
\begin{aligned}
\pi_i(t)
&=\frac{(\proxy_i-t)_+}{\bud},
&& i=2,\ldots,n,\\
\pi_1(t)
&=1-\frac{S(t)}{\bud},\\
S(t)
&:=\sum_{i=2}^n(\proxy_i-t)_+.
\end{aligned}
\end{equation}
Substituting \eqref{eq:pt-form} into the objective yields
\begin{equation}
\psi(t)
:=
\ip{\pi(t)}{\proxy}-t
=
\proxy_1-t
-\frac{1}{\bud}\sum_{i=2}^n
(\proxy_1-\proxy_i)(\proxy_i-t)_+.
\end{equation}

The function $\psi$ is piecewise linear. At points $t$ distinct from all reward values,
\begin{equation}
\label{eq:psi-derivative}
\psi'(t)
=
-1+\frac{1}{\bud}\sum_{i:\proxy_i>t}(\proxy_1-\proxy_i).
\end{equation}
As $t$ increases, the active set $\{i:\proxy_i>t\}$ shrinks, so the derivative \eqref{eq:psi-derivative} is nonincreasing. Therefore $\psi$ is concave on $[t_0,\infty)$ and an optimizer is obtained at the smallest feasible threshold whose slope is nonpositive, namely
\begin{equation*}
t^\star=\inf\left\{t\ge t_0:\ \sum_{i:\proxy_i>t}(\proxy_1-\proxy_i)\le \bud\right\}.
\end{equation*}
The optimal policy is then $\pi(t^\star)$, which is exactly \eqref{eq:water-filling-solution}.
\end{proof}

\section{Proofs for Algorithm Section}
\label{app:algorithm-proofs}

\begin{proof}[Proof of \Cref{prop:soft-approx}]
Write
\begin{equation*}
a_i:=\proxy_i-\bud \pi_i,
\qquad
M:=\max_i a_i.
\end{equation*}
Then
\begin{equation*}
e^{M/\tau}
\le
\sum_{i=1}^n e^{a_i/\tau}
\le
n\,e^{M/\tau},
\end{equation*}
so after taking logarithms and multiplying by $\tau$,
\begin{equation*}
M
\le
\tau\log\sum_{i=1}^n e^{a_i/\tau}
\le
M+\tau\log n.
\end{equation*}
Subtracting this inequality from $\ip{\pi}{\proxy}$ gives \eqref{eq:soft-approx}.
\end{proof}

\begin{proof}[Proof of \Cref{prop:soft-gradient}]
Differentiate \eqref{eq:soft-objective} with respect to the promptwise probabilities $\pi_i$:
\begin{align*}
\frac{\partial F_{\bud,\tau}}{\partial \pi_i}
&=
\proxy_i
-\tau\sigma_{\theta,x}(y_i)
\left(-\frac{\bud}{\tau}\right)\\
&=
\proxy_i+\bud\sigma_{\theta,x}(y_i).
\end{align*}
Applying the chain rule and using
\begin{equation*}
\nabla_\theta \pi_{\theta,i}
=
\pi_\theta(y_i\mid x)\nabla_\theta\log\pi_\theta(y_i\mid x)
\end{equation*}
gives \eqref{eq:soft-gradient}.
\end{proof}

\begin{proof}[Proof of \Cref{prop:snis-finite}]
Let
\begin{equation*}
A:=\E_{q}[u(Y)h(Y)],
\qquad
B:=\E_{q}[u(Y)]=\nu_x,
\end{equation*}
so that $\mu_h(x)=A/B$. Define the sample means
\begin{equation*}
\widehat A_K:=\frac{1}{K}\sum_{k=1}^K u^{(k)} h(y^{(k)}),
\qquad
\widehat B_K:=\frac{1}{K}\sum_{k=1}^K u^{(k)},
\end{equation*}
for which $\widehat\mu_{h,K}(x)=\widehat A_K/\widehat B_K$.

Fix $\eta\in(0,1)$ and set
\begin{equation*}
t_K:=\sqrt{\frac{\log(4/\eta)}{2K}}.
\end{equation*}
Because $|u(y)h(y)|\le U_xH_x$ and $0\le u(y)\le U_x$, Hoeffding's inequality gives
\begin{align*}
\Pr\!\left(|\widehat A_K-A|>U_xH_x\,t_K\right)
&\le
2e^{-2Kt_K^2}
=
\frac{\eta}{2},\\
\Pr\!\left(|\widehat B_K-B|>U_x\,t_K\right)
&\le
2e^{-2Kt_K^2}
=
\frac{\eta}{2}.
\end{align*}
Hence, with probability at least $1-\eta$, both bounds hold simultaneously. Under the sample-size condition in the proposition,
\begin{equation*}
U_x t_K
\le
\frac{\nu_x}{2},
\end{equation*}
so on that same event,
\begin{equation*}
\widehat B_K
\ge
B-U_xt_K
\ge
\frac{B}{2}
>
0.
\end{equation*}
Moreover, $|A|\le H_x B$. Therefore
\begin{equation*}
D_K
\coloneqq
\left|\widehat\mu_{h,K}(x)-\mu_h(x)\right|.
\end{equation*}
\begin{align*}
D_K
&=
\left|
\frac{\widehat A_K}{\widehat B_K}
-
\frac{A}{B}
\right|\\
&\le
\frac{|\widehat A_K-A|}{|\widehat B_K|}
+
|A|
\left|
\frac{1}{\widehat B_K}-\frac{1}{B}
\right|\\
&=
\frac{|\widehat A_K-A|}{|\widehat B_K|}
+
\frac{|A|\,|\widehat B_K-B|}{B\,|\widehat B_K|}\\
&\le
\frac{2U_xH_xt_K}{B}
+
\frac{2H_xU_xt_K}{B}
=
\frac{4U_xH_x}{\nu_x}t_K.
\end{align*}
Substituting the definition of $t_K$ proves \eqref{eq:snis-finite-bound}.
\end{proof}

\begin{proof}[Proof of \Cref{prop:dv-budget}]
The Donsker--Varadhan variational formula states that for probability measures $P\ll Q$ and every measurable $f$,
\begin{equation*}
\E_P[f]
\le
\KL(P\,\|\,Q)+\log\E_Q[e^f].
\end{equation*}
Apply this with
\begin{equation*}
P=\pi_\theta(\cdot\mid x),
\qquad
Q=\pi_0(\cdot\mid x),
\qquad
f(y)=\lambda h_x(y),
\end{equation*}
where $\lambda>0$. Then
\begin{equation*}
\lambda\E_{y\sim\pi_\theta(\cdot\mid x)}[h_x(y)]
\le
\KL\big(\pi_\theta(\cdot\mid x)\,\|\,\pi_0(\cdot\mid x)\big)
+
\log\E_{y\sim\pi_0(\cdot\mid x)}[e^{\lambda h_x(y)}].
\end{equation*}
Dividing by $\lambda$ yields \eqref{eq:dv-bound}.
\end{proof}

\begin{proof}[Proof of \Cref{prop:l1-rationale}]
Starting from the definition of regret,
\begin{align*}
\max_{\norm{\Delta}_p\le \bud}\Reg(\pi,\proxy+\Delta)
&=
\max_{\norm{\Delta}_p\le \bud}\ \max_{\beta\in\Delta_n}\ip{\beta-\pi}{\proxy+\Delta}\\
&=
\max_{\beta\in\Delta_n}
\left\{
\ip{\beta-\pi}{\proxy}
+\max_{\norm{\Delta}_p\le \bud}\ip{\beta-\pi}{\Delta}
\right\}\\
&=
\max_{\beta\in\Delta_n}
\left\{
\ip{\beta-\pi}{\proxy}
+\bud\norm{\beta-\pi}_{p^\ast}
\right\},
\end{align*}
where the last step uses the dual norm relation. The map $\beta\mapsto \ip{\beta-\pi}{\proxy}+\bud\norm{\beta-\pi}_{p^\ast}$ is convex, so its maximum over the simplex can be attained at an extreme point $\beta=e_i$. Evaluating the objective at the vertices gives \eqref{eq:lp-dual-rewrite}.

It remains to characterize when $b_{p,i}(\pi)=\norm{e_i-\pi}_{p^\ast}$ is coordinate-local. If $p=1$, then $p^\ast=\infty$. Since $\sum_{j\neq i}\pi_j=1-\pi_i$ and $\pi_j\le 1-\pi_i$ for every $j\neq i$,
\begin{equation*}
\norm{e_i-\pi}_\infty
=
\max\{1-\pi_i,\max_{j\neq i}\pi_j\}
=
1-\pi_i .
\end{equation*}
If $p=\infty$, then $p^\ast=1$, and
\begin{equation*}
\norm{e_i-\pi}_1
=
|1-\pi_i|+\sum_{j\neq i}\pi_j
=
2(1-\pi_i).
\end{equation*}
Thus both endpoint norms yield a bonus depending only on $\pi_i$.

Now suppose $1<p<\infty$, so $1<p^\ast<\infty$. Fix $i=1$ and $a\in(0,1)$. Consider
\begin{equation*}
\pi=(a,1-a,0,\ldots,0)
\end{equation*}
and
\begin{equation*}
\pi'=\left(a,(1-a)/(n-1),\ldots,(1-a)/(n-1)\right).
\end{equation*}
Both policies have the same first coordinate. However,
\begin{equation*}
\norm{e_1-\pi}_{p^\ast}
=
2^{1/p^\ast}(1-a),
\end{equation*}
whereas
\begin{equation*}
\norm{e_1-\pi'}_{p^\ast}
=
(1-a)\left(1+(n-1)^{1-p^\ast}\right)^{1/p^\ast}.
\end{equation*}
For $n\ge3$ and finite $p^\ast>1$, these quantities differ. Hence no function of $\pi_i$ alone can represent $b_{p,i}(\pi)$ for all policies. The displayed $\ell_\infty$ robust-regret formula follows by substituting $\norm{e_i-\pi}_1=2(1-\pi_i)$ into \eqref{eq:lp-dual-rewrite}.
\end{proof}

\section{Additional Experimental Details}
\label{app:experimental-details}

All methods use the same VERL-based RLHF stack with Ray workers and vLLM rollout. We extract the first human turn from \texttt{HuggingFaceH4/hh-rlhf} and format each prompt with the fixed template \texttt{\textbackslash n\textbackslash nHuman: \{human\_turn0\}\textbackslash n\textbackslash nAssistant:}. The prompt pool is shuffled with a fixed seed; 512 prompts are held out for validation, and the remaining prompts are used for training. At validation, the current policy samples completions on the held-out prompts, the same generated responses are scored by the proxy and gold reward models, and the frozen reference policy is used only to compute sequence-level KL. The held-out reward model is used only for evaluation, never for policy updates.

For validation and plotting, the sequence-level KL at checkpoint $t$ is estimated as
\[
\widehat D_{\mathrm{KL},\mathrm{seq}}(\pi_t\|\pi_0)
=\E_{x\sim\mathcal D_{\mathrm{val}},\ y\sim\pi_t(\cdot\mid x)}\left[\sum_{m=1}^{|y|}\log \pi_t(y_m\mid x,y_{<m})-\sum_{m=1}^{|y|}\log \pi_0(y_m\mid x,y_{<m})\right],
\]
where token log-probabilities are summed over the generated response. We also log the corresponding per-token KL, but all reward-versus-KL figures use the sequence-level value.

\paragraph{Existing assets and licenses.}
\path{HuggingFaceH4/hh-rlhf} is a Hugging Face H4 processed prompt source derived from Anthropic HH-RLHF \citep{bai2022training}; its dataset card \url{https://huggingface.co/datasets/HuggingFaceH4/hh-rlhf} lists the MIT license, and the upstream Anthropic HH-RLHF dataset card \url{https://huggingface.co/datasets/Anthropic/hh-rlhf} is also MIT-licensed. We use this dataset only as a prompt distribution and do not redistribute a modified dataset. The policy is Qwen2.5-0.5B-Instruct \citep{qwen2024qwen25,qwen25modelcard}, whose model card lists Apache-2.0. The OpenAssistant proxy reward models are DeBERTaV3-based \citep{he2021debertav3} and their model cards list MIT licenses \citep{openassistant2023rewardbase,openassistant2023rewardlarge}. The held-out evaluator is the existing Tasksource RLHF reward model \citep{sileod2023tasksource}, which is fine-tuned on Anthropic HH-RLHF \citep{bai2022training}. No modified dataset or new model asset is redistributed.

All benchmark methods share the same prompt split, policy initialization, held-out evaluator, generation code path, and logging pipeline within a proxy-quality setting. This is important because the experiment is intended to test reward over-optimization during online policy improvement: the policy repeatedly samples new completions, scores them with the proxy reward model, and is evaluated by a stronger held-out reward model. We always set the explicit KL regularization used by the RLHF loss to zero; KL is computed only as a diagnostic and plotting axis against the frozen initial policy.

\paragraph{Training architecture and run management.}
The training system is built on VERL/HybridFlow \citep{sheng2024hybridflow} with Ray orchestration \citep{moritz2018ray} and vLLM rollout \citep{kwon2023efficient}. Each run has a Ray task runner that builds the prompt datasets, initializes the actor/rollout/reference worker group, and launches the reward-scoring manager. The actor is the only trainable policy model in GRPO-style runs. A frozen reference copy of the initial policy is kept for log-probability and KL diagnostics; it is not optimized. For PPO runs, we additionally enable the critic/value model and use GAE. Rollouts are generated by vLLM in asynchronous mode, while old-policy and reference log probabilities are recomputed by the VERL workers before the policy update. Reward models are not trained during policy optimization. They are loaded by a custom reward manager and scored in batches under no-gradient mode. The proxy reward is used to construct the training reward; the gold reward model is evaluated only in validation/logging and never enters the policy loss.

The later standardized reruns use two NVIDIA RTX A6000 48GB GPUs per run. vLLM uses tensor parallel size $2$ and GPU memory utilization $0.70$. Ray is used to place the actor, reference/log-probability computation, rollout server, and reward-scoring loop in the same run, and each experiment writes a self-contained output directory containing \texttt{config.json}, \texttt{log.csv}, train/validation prompt files, checkpoints, and any auxiliary state such as dynamic-delta estimates. The implementation uses PyTorch, Transformers, PEFT/LoRA, VERL/HybridFlow, Ray, and vLLM; standardized later runs use bfloat16 and \texttt{flash\_attention\_2} when available. The run name records the main experimental choices: method, proxy model, assignment rule, delta or dynamic coefficient, soft-assignment temperature, rollout count, batch/micro-batch size when nondefault, LoRA rank, and attention backend. Runs ending in \texttt{attn-fa2} use \texttt{flash\_attention\_2}; older ablation runs ending in \texttt{attn-sdpa} use PyTorch SDPA attention. This suffix changes the attention kernel used for speed and memory, but not the objective, reward model, or logging definition.

\begin{figure}[t]
\centering
\includegraphics[width=0.9\linewidth]{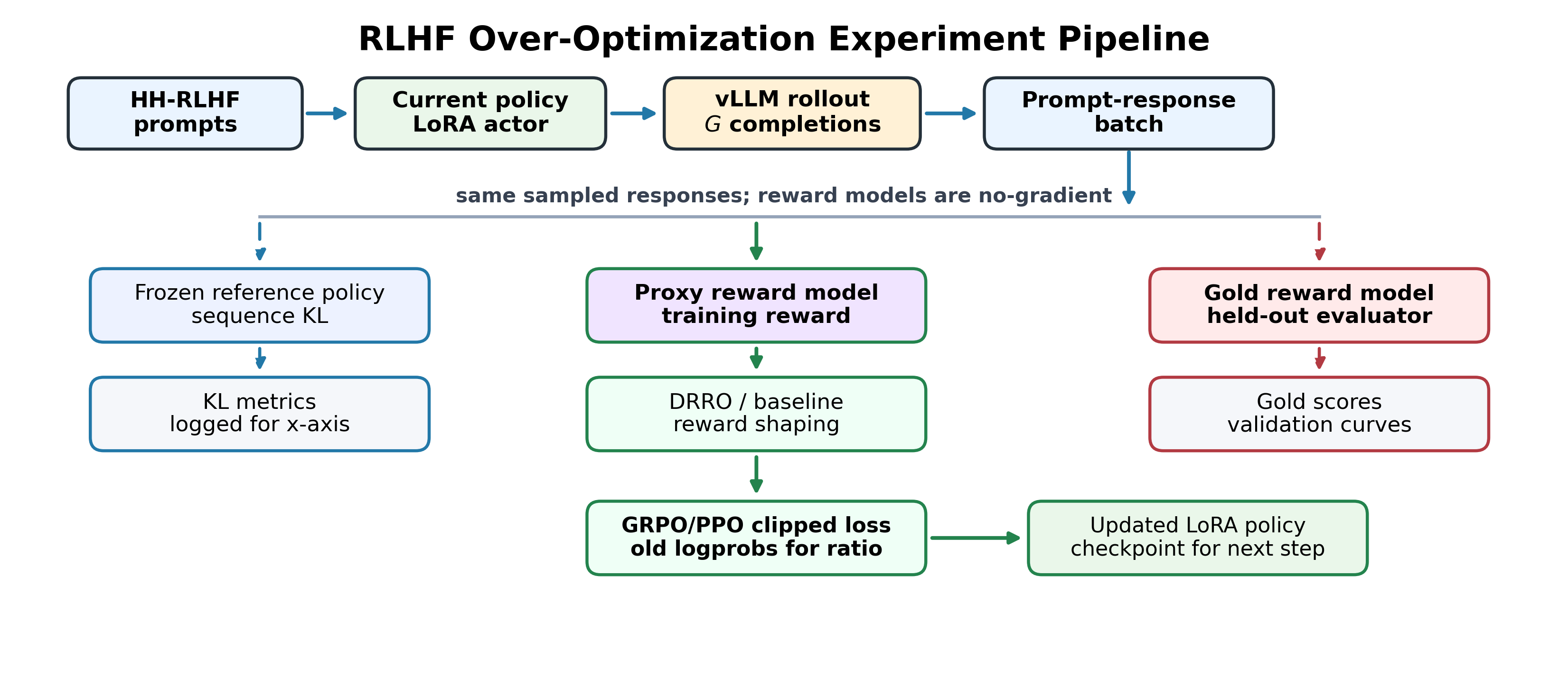}
\caption{Training and evaluation pipeline used in the experiments. VERL and Ray manage the policy, rollout, reference-logprobability, and reward-scoring workers; vLLM serves grouped rollouts, and FlashAttention-2 is used when available. The proxy reward model supplies the training reward and any DRRO/baseline shaping, while the gold reward model is used only for validation curves. The frozen reference policy is used to log sequence KL against the initial policy; no explicit KL penalty is added to the policy loss.}
\label{fig:training-pipeline}
\end{figure}

\paragraph{Policy-optimization hyperparameters.}
The standardized \texttt{attn-fa2} reruns use \path{Qwen/Qwen2.5-0.5B-Instruct} as the policy. We train LoRA adapters with rank $8$ and LoRA alpha $16$ on all linear layers, with gradient checkpointing enabled. The actor optimizer is AdamW with learning rate $10^{-5}$, weight decay $0.01$, betas $(0.9,0.999)$, a constant learning-rate schedule with no warmup, and gradient clipping at $1.0$. Actor and reference computations use FSDP with bfloat16, and rollout generation uses vLLM with bfloat16. Each prompt produces $G=16$ completions unless the run name explicitly states another rollout count. The maximum prompt length is $512$ tokens and the maximum response length is $128$ tokens. Sampling uses temperature $1.0$, top-$p=0.95$, and top-$k=-1$.

The standardized later runs use train and validation batch size $16$ prompts, actor PPO minibatch size $16$, and actor, old-log-probability, and reference-log-probability micro-batch size $8$ per GPU. Reward-model scoring uses batch size $16$ and maximum reward-model input length $512$. All policy-optimization runs use $300$ policy updates, validation before training, validation every $5$ updates, and checkpointing every $20$ updates. Each validation pass samples from the current policy on the held-out prompt split and records proxy reward, gold reward, sequence-level KL, and per-token KL on those validation completions. Logged quantities include raw proxy and gold scores, re-centered proxy and gold scores, sequence-level KL, and per-token KL. The score re-centering baseline is computed at step $0$ separately for proxy and gold, so every plotted reward curve is an improvement over the initial policy under that evaluator's own scale. The explicit KL regularization coefficient in the policy loss is always $0.0$; the KL values are kept only to measure distance from the frozen initial policy and to construct the reward-versus-KL plots. Some earlier exploratory runs used batch size $12$, rollout count $8$, or SDPA attention; these differences are encoded in the run names and were not used as the standardized later setting.

\paragraph{Baseline implementations.}
GRPO uses the proxy reward directly and converts the group of $G$ completions for a prompt into normalized advantages by subtracting the group mean and dividing by the group standard deviation. PPO uses the same proxy reward, but adds a value function and GAE with $\lambda=0.95$. The PPO critic is initialized from the same policy backbone with a value head and is trained by the VERL FSDP critic worker, using critic micro-batch size $1$ per GPU; it is used only for value estimation and advantage computation, while the reward still comes from the proxy reward model. Both PPO and GRPO use the clipped surrogate with clip radius $0.2$, one PPO epoch per rollout batch, entropy coefficient $0$, and no KL term in the reward or actor loss. For GRPO-style methods, the advantage estimator has $\lambda=1$ because the completion-level reward is broadcast over the generated response tokens after group normalization.

For ensemble baselines, we trained and loaded five proxy reward models and tested three aggregation rules. The five members are initialized from small encoder reward-model backbones, \path{microsoft/MiniLM-L12-H384-uncased}, \path{prajjwal1/bert-small}, \path{google/electra-small-discriminator}, \path{distilbert-base-uncased}, and \path{distilroberta-base}, and are trained on the same gold-labeled pair data with seeds $42,43,44,45,46$. Each member uses one epoch, learning rate $2\times10^{-5}$, batch size $16$, maximum sequence length $512$, no weight decay, and gradient accumulation $1$. Because the members have different scalar reward calibrations, we apply the running per-member z-score calibration implemented in the reward manager before aggregation. The mean ensemble uses $R_{\mu}(x,y)=k^{-1}\sum_{j=1}^k R_j(x,y)$. The worst-case ensemble uses $R_{\mathrm{WCO}}(x,y)=\min_j R_j(x,y)$. The uncertainty-weighted ensemble uses $R_{\mathrm{UWO}}(x,y)=k^{-1}\sum_j R_j(x,y)-\lambda\operatorname{Var}_j(R_j(x,y))$ with $\lambda=1.0$. The main comparison reports Ensemble-Mean and Ensemble-UWO because mean is the natural nonconservative ensemble baseline and UWO is the uncertainty-penalized baseline. WCO is implemented and was used as an additional conservative check. During policy optimization, the policy sees only the aggregated ensemble reward.

BSPO and InfoRM are included as reward/behavior-support baselines rather than as new proxy-quality settings. BSPO uses the same policy proxy reward as the corresponding experiment plus a trained ScoreLM model, \texttt{scorelm\_qwen0p5b\_lmcoef0.01}. The ScoreLM is initialized from \path{Qwen/Qwen2.5-0.5B-Instruct} and trained on the same preference-pair data with a Bradley--Terry score loss plus an auxiliary language-modeling loss, weighted by \texttt{lm\_coef}$=0.01$; its training uses two epochs, learning rate $2\times10^{-5}$, batch size $8$, maximum length $1024$, weight decay $0.1$, and seed $42$. In the policy run, the scalar reward remains the shared proxy RM score. ScoreLM supplies token-level behavior-support masks with maximum ScoreLM length $1024$ and support threshold $\epsilon_\beta=10^{-4}$; unsupported actions receive the pessimistic target value $-15$ in the BSPO advantage computation. InfoRM uses the same policy proxy reward plus an auxiliary InfoRM model with MiniLM-L12-H384 encoder, latent dimension $128$, $\beta=0.01$, and penalty coefficient $0.01$. The InfoRM model is trained for one epoch on the preference pairs with a Bradley--Terry loss plus the variational information-bottleneck KL term, learning rate $5\times10^{-6}$, batch size $16$, maximum length $512$, CLS pooling, and seed $42$. During policy optimization, InfoRM keeps the raw proxy score for logging but uses the shaped reward $r_{\mathrm{proxy}}-0.01\,\mathrm{KL}(q(z\mid x,y)\|N(0,I))$. Thus in both baselines the same proxy reward remains the main RLHF signal, while the auxiliary model changes the support or penalty component of the training signal.

\paragraph{DRRO, DRO, and grid search.}
For DRRO, hard assignment follows the stabilized rule in \eqref{eq:practical-hard-bonus}: after group-normalizing rollout probabilities, it selects the sampled completion with largest $\widehat r^{(k)}-\widetilde\delta\,\widetilde\pi^{(k)}$ and applies the robust correction to that completion before advantage normalization. Soft assignment replaces this single winner by the smooth SNIS-style bonus in \eqref{eq:practical-soft-bonus}. In the fixed-budget setting, the standardized value is $\delta=40$ when $G=16$, corresponding to $2.5G$; additional ablation runs use the fixed deltas stated in their run names. In the dynamic-budget setting, the budget is $\delta_t=\alpha\widehat{\KL}_t$, where $\widehat{\KL}_t$ is the $k_3$ KL estimate smoothed over a sliding window of $20$ updates. We do not clip the dynamic budget with an explicit minimum or maximum in the reported runs. DRO uses the same rollout, reward, and logging stack but applies the value-robust correction instead of the regret-robust correction; its fixed delta is the value stated in the run name.

Before selecting the reported DRRO configuration, we ran a small grid over the two design choices in the practical algorithm: whether the robust budget is fixed or dynamic, and whether the robust correction is hard-assigned to one sampled response or softly distributed across the sampled group. For dynamic-hard runs, we swept the dynamic coefficient $\alpha\in\{0.5,1,2,5,10\}$. For dynamic-soft runs, we swept the same $\alpha$ values and soft-assignment temperatures $\tau\in\{1,2,5,10\}$. For fixed-hard and fixed-soft runs, we swept fixed budgets including $\delta\in\{10,20\}$ in the grid, and also checked the larger standardized budget $\delta=40=2.5G$ used in the later matched reruns; fixed-soft additionally swept $\tau\in\{1,2,5,10\}$. The grid showed that dynamic budgeting is more stable than a single fixed budget across KL regimes, while soft assignment reduces the sensitivity of the update to one noisy sampled completion. Based on this sweep, the displayed standardized dynamic-soft setting uses $\alpha=10$ and $\tau=2.0$ for the later \texttt{attn-fa2} reruns. The ablation figure reports the qualitative effect of these choices: dynamic budgeting improves the frontier relative to fixed budgeting, and soft assignment gives the strongest version among the sampled DRRO variants. For the plotted ablation curves in \Cref{fig:ablation}, fixed-hard uses $\delta=40$ with hard assignment; fixed-soft uses $\delta=40$ and $\tau=2.0$; dynamic-hard uses $\alpha=10$ with hard assignment; and dynamic-soft uses $\alpha=10$ and $\tau=2.0$.

\paragraph{Proxy-reward agreement and fine-tuning.}
All proxy-quality comparisons use the same policy and held-out evaluator: Qwen2.5-0.5B-Instruct is the initial policy, and the Tasksource DeBERTa-v3-large RLHF reward model is the gold evaluator. The policy has roughly 0.5B parameters. The small proxy is an OpenAssistant DeBERTa-v3-base reward model with roughly 184M parameters, while the larger proxy is an OpenAssistant DeBERTa-v3-large-v2 reward model with roughly 435M parameters. The gold reward model is a separate Tasksource RLHF evaluator and is not used as a policy-training proxy. We run each policy-optimization setting with five random seeds, $100,200,300,400,500$, and use the same prompt split, rollout settings, and evaluation pipeline across seeds.

We measure proxy--gold agreement on an evaluation split of 5,000 held-out response pairs. For each pair $(y_a,y_b)$ under the same prompt $x$, both reward models score the two responses independently. Agreement is the pairwise ranking match
\[
\frac{1}{N}\sum_{i=1}^N
\ind\left\{
\begin{aligned}
&\operatorname{sign}\big(r_{\mathrm{proxy}}(x_i,y_{i,a})-r_{\mathrm{proxy}}(x_i,y_{i,b})\big)\\
&\qquad=
\operatorname{sign}\big(r_{\mathrm{gold}}(x_i,y_{i,a})-r_{\mathrm{gold}}(x_i,y_{i,b})\big)
\end{aligned}
\right\}.
\]
Thus agreement measures whether the proxy and gold reward models prefer the same response in a pair; it does not require the two scalar reward outputs to have the same calibration or numerical scale.

The fine-tuned proxy used in the main benchmark is initialized from the OpenAssistant DeBERTa-v3-base reward model and trained on gold-labeled preference pairs. We construct 20,000 pairwise comparisons from \path{HuggingFaceH4/hh-rlhf} prompts by sampling two candidate responses from the policy-generation pipeline, scoring both with the gold reward model, and assigning the higher-gold response as preferred. The pairs are split deterministically into train and validation subsets. The proxy is optimized for one epoch with a Bradley--Terry pairwise ranking loss, learning rate $2\times10^{-5}$, batch size 16, maximum sequence length 512, no weight decay, and gradient accumulation 1. The validation split is used only to report agreement. After this reward-model fine-tuning step, policy optimization still uses only the proxy reward; the gold reward model remains held out for evaluation.

\paragraph{Main benchmark: fine-tuned small proxy.}
\Cref{fig:main-benchmark} uses the fine-tuned small proxy, i.e., the DeBERTa-v3-base proxy after the gold-labeled preference fine-tuning described above. Its proxy--gold agreement on the 5,000-pair evaluation set is 85.7\%. This is the intermediate regime: the proxy is substantially better than the raw small proxy, but it is still imperfect. In the figure, the dashed proxy trajectory continues to increase as optimization proceeds, while several gold curves peak and then flatten or decline. This remaining proxy--gold gap is the regime where robust policy updates are meaningful. DRRO benefits from the cleaner proxy signal but still uses regret robustness to avoid blindly chasing high-proxy responses that are not well supported by the sampled policy distribution. The plotted shaded regions summarize the five-seed runs with seeds $100,200,300,400,500$.

\begin{figure}[t]
\centering
\includegraphics[width=0.86\linewidth]{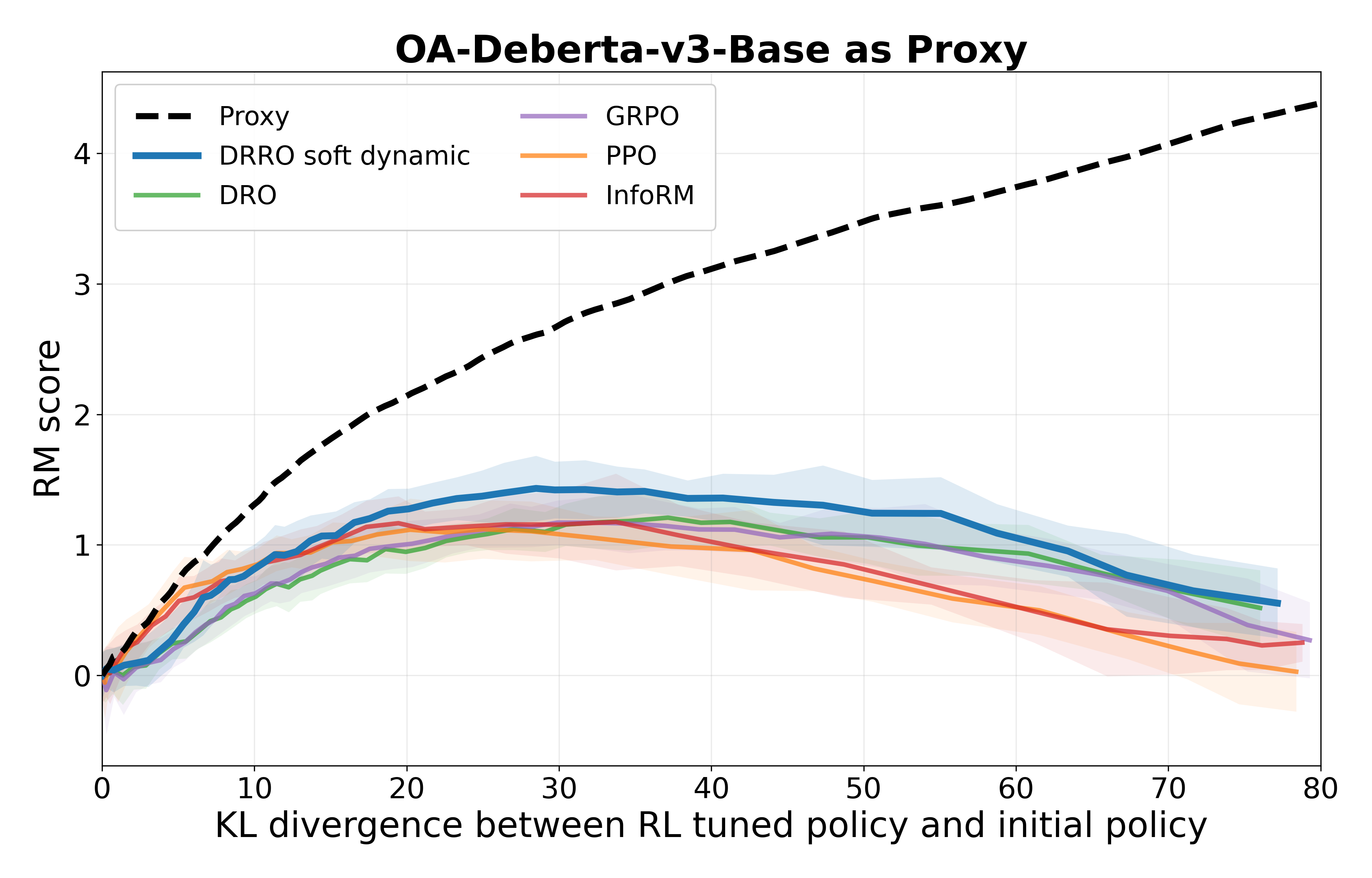}
\caption{Raw-proxy benchmark using the OpenAssistant DeBERTa-v3-base reward model without the proxy fine-tuning stage. The raw proxy has 73.6\% pairwise agreement with the held-out gold reward model on the 5,000-pair evaluation set, compared with 85.7\% for the fine-tuned proxy used in the main benchmark. The dashed curve is the averaged proxy trajectory, and shaded regions show mean $\pm$ one standard deviation across the five seeds.}
\label{fig:raw-proxy-appendix}
\end{figure}

\paragraph{Raw-proxy benchmark.}
\Cref{fig:raw-proxy-appendix} uses the same small proxy architecture as the main benchmark, OpenAssistant DeBERTa-v3-base, but without any gold-labeled fine-tuning. Its agreement with the gold evaluator drops to 73.6\%. This lower agreement means that the proxy ranking is much noisier: responses that look good to the proxy are often not preferred by the gold evaluator. The figure reflects this mismatch. The proxy curve remains optimistic, whereas the gold curves separate more sharply across methods and show stronger over-optimization behavior. In this low-agreement setting, DRRO has more room to help because the regret correction is explicitly designed for proxy misspecification. Compared with the main benchmark, the raw-proxy setting is therefore a harder stress test: it asks whether the policy update can remain useful when the reward signal is substantially less aligned with the held-out evaluator.

\begin{figure}[t]
\centering
\includegraphics[width=0.86\linewidth]{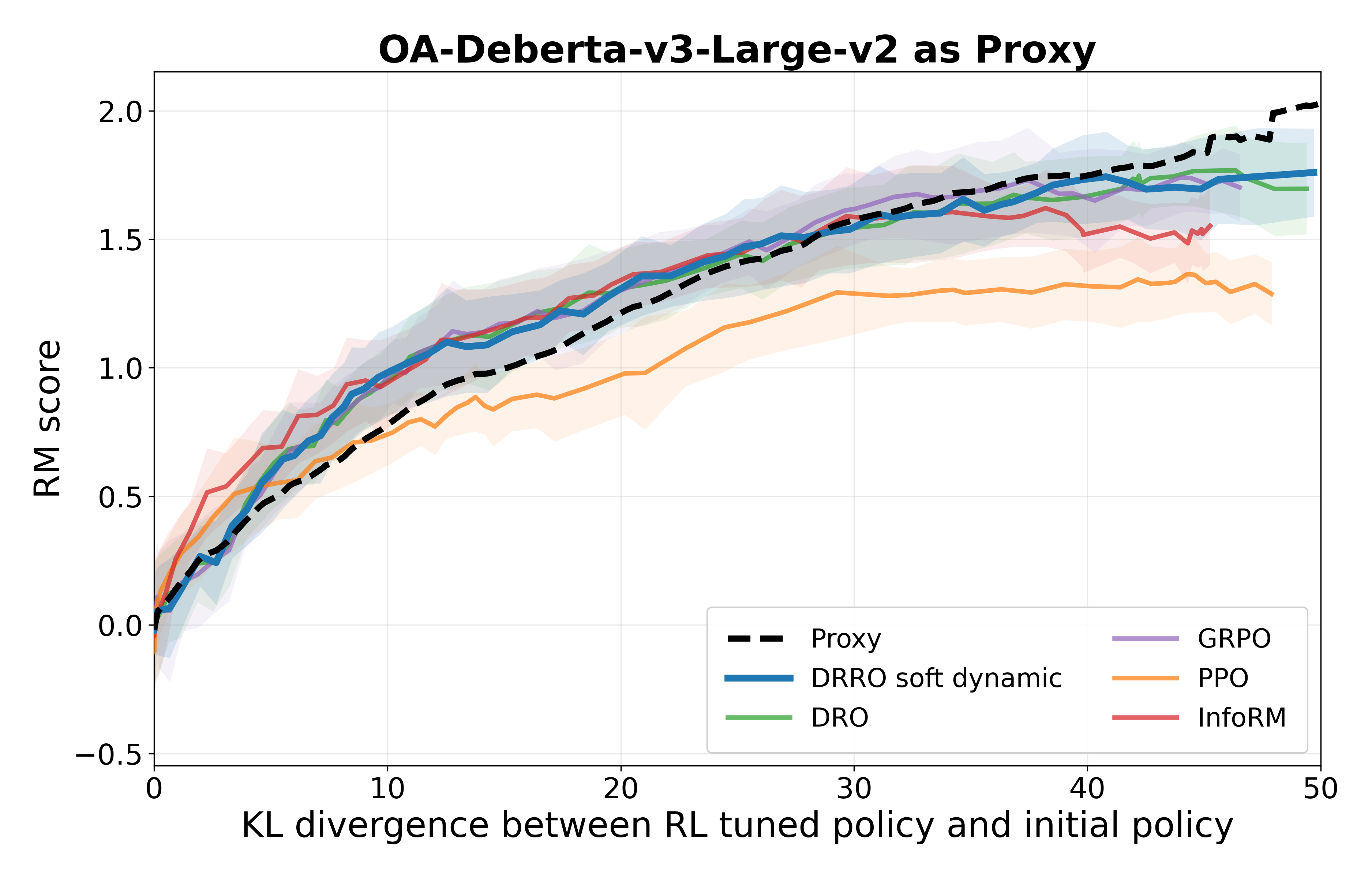}
\caption{Large-proxy benchmark using the OpenAssistant DeBERTa-v3-large-v2 reward model as the policy-training proxy. This larger proxy has 95.2\% pairwise agreement with the held-out gold reward model on the same 5,000-pair evaluation protocol. The dashed curve is the averaged proxy trajectory, and shaded regions show mean $\pm$ one standard deviation across the five seeds.}
\label{fig:large-proxy-appendix}
\end{figure}

\paragraph{Large-proxy benchmark.}
\Cref{fig:large-proxy-appendix} replaces the small proxy with OpenAssistant DeBERTa-v3-large-v2. This proxy is not the fine-tuned DeBERTa-v3-base model from the main benchmark; instead, it is a larger DeBERTa-v3-large reward model used directly as the policy-training proxy. Its agreement with gold is 95.2\%, much higher than both the raw small proxy and the fine-tuned small proxy. The plot changes accordingly. Since the proxy ranking is already very close to the gold ranking, optimizing the proxy is a much better approximation to optimizing the held-out evaluator. The gold curves of PPO, GRPO, ensemble baselines, DRO, and DRRO become closer to each other, and the advantage of a robust correction is less visually pronounced. This does not contradict the raw-proxy or main benchmark results; it shows that when proxy misspecification is small, all reasonable update rules see a similar reward signal.

\paragraph{Comparison across proxy qualities.}
Across the three settings, the qualitative trend is consistent with the role of DRRO as a misspecification-robust policy update. With the raw small proxy, agreement is only 73.6\%, so proxy optimization is unreliable and over-optimization is most visible. With the fine-tuned small proxy in the main benchmark, agreement improves to 85.7\%; the proxy is useful but still imperfect, and DRRO achieves its clearest frontier improvement. With the larger proxy, agreement reaches 95.2\%; proxy and gold are nearly rank-aligned, so the methods naturally cluster together. The comparison indicates that DRRO is most valuable when the proxy is informative but not fully aligned with the held-out evaluator. If the proxy is very weak, all methods are limited by reward noise; if the proxy is nearly gold-aligned, there is little misspecification left for any robustness method to correct.

\subsection{Hard-Max DRRO Ablation Algorithm}
\label{app:hard-max-ablation}

The hard-max ablation is the most direct sampled translation of the exact \(\ell_1\) promptwise DRRO objective. For a fixed prompt,
\begin{equation*}
F_{\bud}(\pi;\proxy)=\ip{\pi}{\proxy}-\max_i(\proxy_i-\bud \pi_i),
\end{equation*}
which differs from worst-case regret only by an additive constant. Define \(H_{\bud}(\pi;\proxy):=-\max_i(\proxy_i-\bud \pi_i)\). If the maximizer is unique, differentiating this additional term gives
\begin{equation*}
\nabla_\theta H_{\bud}(\pi_\theta(\cdot\mid x);\proxy(x))
=
\sum_{i=1}^n \pi_\theta(y_i\mid x)
\Big[\bud\,\ind\{i=i^\star(\pi_\theta)\}\Big]
\nabla_\theta\log\pi_\theta(y_i\mid x),
\end{equation*}
where
\begin{equation*}
i^\star(\pi_\theta)\in\argmax_{1\le i\le n}\{\proxy_i-\bud \pi_{\theta,i}\}.
\end{equation*}

\begin{proposition}[Gradient of the hard promptwise objective]
\label{prop:hard-gradient}
Fix a prompt \(x\) and suppose
\[
i^\star(\pi_\theta)\in\argmax_{1\le i\le n}\{\proxy_i-\bud \pi_{\theta,i}\}
\]
is unique. Then
\begin{equation}
\label{eq:hard-gradient}
\begin{aligned}
g_{\theta}^{\mathrm{hard}}(x)
&:=
\nabla_\theta F_{\bud}
(\pi_\theta(\cdot\mid x);\proxy(x)),\\
g_{\theta}^{\mathrm{hard}}(x)
&=
\sum_{i=1}^n \pi_\theta(y_i\mid x)
R_i^{\mathrm{hard}}
\nabla_\theta\log\pi_\theta(y_i\mid x),\\
R_i^{\mathrm{hard}}
&:=
\proxy_i+\textcolor{blue!70!black}{B_i^{\mathrm{hard}}},\\
B_i^{\mathrm{hard}}
&:=
\bud\,\ind\{i=i^\star(\pi_\theta)\}.
\end{aligned}
\end{equation}
\end{proposition}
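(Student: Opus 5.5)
The argument mirrors the proof of \Cref{prop:soft-gradient}: differentiate $F_{\bud}$ with respect to the promptwise probabilities, then push the result through the chain rule and the log-derivative identity. Write $a_i(\pi)\coloneqq\proxy_i-\bud\pi_i$ and $M(\pi)\coloneqq\max_i a_i(\pi)$, so that $F_{\bud}(\pi;\proxy)=\ip{\pi}{\proxy}-M(\pi)$.

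The main obstacle is the nonsmoothness of $M$. We must show that $M$ is differentiable at $\pi_\theta(\cdot\mid x)$ and identify its gradient. Uniqueness of $i^\star=i^\star(\pi_\theta)$ means $a_{i^\star}(\pi_\theta)>a_j(\pi_\theta)$ for every $j\ne i^\star$. Each $a_j$ is affine, hence continuous, and $n$ is finite, so this strict gap persists on a neighborhood of $\pi_\theta(\cdot\mid x)$ in $\R^n$. On that neighborhood $M(\pi)=a_{i^\star}(\pi)=\proxy_{i^\star}-\bud\pi_{i^\star}$ identically. Therefore $M$ is differentiable there with $\partial M/\partial\pi_i=-\bud\,\ind\{i=i^\star\}$. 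Consequently
\[
\frac{\partial F_{\bud}}{\partial\pi_i}=\proxy_i+\bud\,\ind\{i=i^\star\}=R_i^{\mathrm{hard}}.
\]
If one prefers a limiting argument, this also matches the $\tau\downarrow0$ limit of \Cref{prop:soft-gradient}, since $\sigma_{\theta,x}$ concentrates on $i^\star$ when the maximizer is unique. The direct local argument is cleaner, though, and avoids any need to justify exchanging the limit with the gradient.

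Next comes the chain rule. We assume, as is implicit in the statement, that $\theta\mapsto\pi_\theta(y_i\mid x)$ is differentiable and strictly positive. Then $\theta\mapsto\pi_\theta(\cdot\mid x)$ maps a neighborhood of $\theta$ into the neighborhood found above, so $F_{\bud}(\pi_\theta(\cdot\mid x);\proxy(x))$ is differentiable in $\theta$ and
\[
\nabla_\theta F_{\bud}=\sum_{i=1}^n R_i^{\mathrm{hard}}\nabla_\theta\pi_{\theta,i}.
\]
Substituting $\nabla_\theta\pi_{\theta,i}=\pi_\theta(y_i\mid x)\nabla_\theta\log\pi_\theta(y_i\mid x)$ gives \eqref{eq:hard-gradient}. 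The nominal part $\sum_i \proxy_i\nabla_\theta\pi_{\theta,i}$ is exactly the nominal policy gradient computed earlier, and the extra term is the bonus $B_i^{\mathrm{hard}}$.
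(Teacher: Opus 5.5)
Your proposal is correct and matches the paper's proof: you differentiate $F_{\bud}$ in the promptwise probabilities using uniqueness of the maximizer, then apply the chain rule together with $\nabla_\theta\pi_{\theta,i}=\pi_\theta(y_i\mid x)\nabla_\theta\log\pi_\theta(y_i\mid x)$. Your neighborhood argument, showing that $\max_i a_i$ coincides locally with the affine function $a_{i^\star}$, and your stated differentiability and positivity assumptions on $\pi_\theta$, simply make explicit what the paper asserts without detail.
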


\begin{proof}
Let
\begin{equation*}
a_i(\pi):=\proxy_i-\bud \pi_i,
\qquad
F_{\bud}(\pi;\proxy)=\ip{\pi}{\proxy}-\max_i a_i(\pi).
\end{equation*}
If the maximizer $i^\star(\pi)$ is unique, then the maximum is differentiable at $\pi$ and
\begin{equation*}
\frac{\partial F_{\bud}}{\partial \pi_i}
=
\proxy_i-\frac{\partial a_{i^\star(\pi)}(\pi)}{\partial \pi_i}
=
\proxy_i+\bud\,\ind\{i=i^\star(\pi)\}.
\end{equation*}
Applying the chain rule with
\begin{equation*}
\nabla_\theta \pi_{\theta,i}
=
\pi_\theta(y_i\mid x)\nabla_\theta\log\pi_\theta(y_i\mid x)
\end{equation*}
yields \eqref{eq:hard-gradient}.
\end{proof}

\Cref{prop:hard-gradient} shows that hard DRRO is still a reward-shaping rule: the adversarially threatening response receives a budget-sized bonus. Since the full maximizer over the hidden simplex is unobserved, the ablation replaces it by the maximizer in the sampled group. For prompt \(x\), write
\[
\bar\pi^{(k)}:=\rollpi(y^{(k)}\mid x),
\qquad
\widehat r^{(k)}:=\proxy(x,y^{(k)}),
\]
and define the sampled hard winner
\begin{equation}
\label{eq:hard-winner}
k^\star\in\argmax_{1\le k\le G}\{\widehat r^{(k)}-\bud\,\bar\pi^{(k)}\}.
\end{equation}
The conceptual hard DRRO-shaped rewards are
\begin{equation}
\label{eq:hard-bonus}
\widetilde r_{\mathrm{hard}}^{(k)}
=
\widehat r^{(k)}+\textcolor{blue!70!black}{\bud\,\ind\{k=k^\star\}}.
\end{equation}
For the implemented ablation, we use the same group-normalized probability coordinates as the final soft DRRO algorithm. That is, with
\[
\widetilde\pi^{(k)}
\coloneqq
\frac{\bar\pi^{(k)}}{\sum_{\ell=1}^G\bar\pi^{(\ell)}}
\qquad\text{and}\qquad
\widetilde{\bud}\coloneqq(G/n)\bud,
\]
the practical hard DRRO bonus is
\begin{equation}
\label{eq:practical-hard-bonus}
\begin{aligned}
\widetilde k^\star
&\in
\argmax_{1\le k\le G}
\{\widehat r^{(k)}
-\widetilde{\bud}\,\widetilde\pi^{(k)}\},\\
\widetilde r_{\mathrm{hard}}^{(k)}
&\coloneqq
\widehat r^{(k)}
+\textcolor{blue!70!black}{
\widetilde{\bud}\,\ind\{k=\widetilde k^\star\}}.
\end{aligned}
\end{equation}
The algorithm then normalizes these shaped rewards within each prompt group and plugs the resulting advantages into the same clipped GRPO surrogate \eqref{eq:grpo-loss}.

\begin{algorithm}[t]
\caption{Hard-max DRRO-RLHF ablation (blue lines differ from \Cref{alg:grpo})}
\label{alg:hard-drro-rlhf}
\begin{algorithmic}[1]
\Require initial policy $\pi_\theta$, prompt source $\mathcal D$, reward model $\proxy$, budget rule for $\widetilde{\bud}_b$, outer iterations $M$, prompt batch size $B$, group size $G$, clip radius $\varepsilon_{\mathrm{clip}}$, policy-gradient steps $S$
\Ensure updated policy $\pi_\theta$
\For{outer iteration $m=1,\ldots,M$}
  \State set rollout policy $\rollpi\gets\pi_\theta$
  \State sample prompt batch $x_1,\ldots,x_B\sim\mathcal D$
  \For{each prompt $x_b$ in the batch}
  \State sample $y_b^{(1)},\ldots,y_b^{(G)}\sim\rollpi(\cdot\mid x_b)$
  \State compute proxy rewards $\widehat r_b^{(k)}=\proxy(x_b,y_b^{(k)})$
  \State instantiate scaled budget $\widetilde{\bud}_b$ from the fixed or dynamic budget rule
  \State \algchange{compute rollout probabilities $\bar\pi_b^{(k)}=\rollpi(y_b^{(k)}\mid x_b)$ and normalized probabilities $\widetilde\pi_b^{(k)}$}
  \State \algchange{set hard shaped rewards $\widetilde r_{b,\mathrm{hard}}^{(k)}$ according to \eqref{eq:practical-hard-bonus}}
  \State normalize \algchange{$\widetilde r_{b,\mathrm{hard}}^{(k)}$} within the prompt group to form advantages
  \State compute response ratios $\rho_b^{(k)}(\theta)=\pi_\theta(y_b^{(k)}\mid x_b)/\rollpi(y_b^{(k)}\mid x_b)$
  \State accumulate the clipped GRPO surrogate \eqref{eq:grpo-loss}
  \EndFor
  \State update $\theta$ using $S$ minibatch policy-gradient steps on \eqref{eq:grpo-loss}
\EndFor
\end{algorithmic}
\end{algorithm}

\section{Proof of DRRO-Versus-DRO Comparison Theorem}
\label{app:drro-vs-dro}

\begin{lemma}[Explicit DRO optimizer]
\label{lem:dro-explicit}
For each $k\in\{1,\ldots,n\}$, define
\begin{equation*}
u^{(k)}:=
\left(\underbrace{\tfrac1k,\ldots,\tfrac1k}_{k\text{ entries}},0,\ldots,0\right),
\qquad
A_k:=\frac{1}{k}\sum_{i=1}^k \proxy_i.
\end{equation*}
Then a DRO optimizer can be chosen as $u^{(m)}$ for some
\begin{equation*}
m\in\argmax_{1\le k\le n}\left\{A_k-\frac{\bud}{k}\right\}.
\end{equation*}
\end{lemma}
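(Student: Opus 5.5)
The DRO objective $G(\pi):=\ip{\pi}{\proxy}-\bud\norm{\pi}_\infty$ is concave on $\Delta_n$ and piecewise linear, so a maximizer exists. The plan is to fix the value of the max-coordinate and optimize the linear part under that constraint. Concretely, for $s\in[1/n,1]$ consider the slice $\{\pi\in\Delta_n:\norm{\pi}_\infty\le s\}$. On this slice, maximizing $\ip{\pi}{\proxy}$ is a fractional knapsack with capacity $s$ per coordinate and total mass one. Under the ordering \eqref{eq:reward-monotone}, the greedy rule is optimal. Setting $k=\lfloor 1/s\rfloor$, it fills the first $k$ coordinates to level $s$, puts the remainder $1-ks$ on coordinate $k+1$, and sets the rest to zero. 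This yields the value function
\[
V(s)=s\sum_{i=1}^k\proxy_i+(1-ks)\proxy_{k+1}-\bud s,
\]
and $\max_\pi G(\pi)=\max_{s\in[1/n,1]}V(s)$, since any $\pi$ satisfies $G(\pi)\le V(\norm{\pi}_\infty)$. I would justify greedy optimality by a simple exchange argument: moving mass from a lower-ranked coordinate to a higher-ranked one that is below cap $s$ does not decrease $\ip{\pi}{\proxy}$.

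Next I would show that $V$ is piecewise linear in $s$, with breakpoints exactly at $s=1/k$. On the interval $s\in[1/(k+1),1/k]$, $k$ is fixed; I would define $k$ by the convention that makes the formula continuous at the endpoints. There the expression is affine:
\[
V(s)=\proxy_{k+1}+s\Big(\sum_{i=1}^k\proxy_i-k\proxy_{k+1}-\bud\Big).
\]
An affine function on an interval attains its maximum at an endpoint. Hence the maximum of $V$ over $[1/n,1]$ is attained at some $s=1/m$. At that point the greedy allocation is exactly $u^{(m)}$, and
\[
V(1/m)=\frac1m\sum_{i=1}^m\proxy_i-\frac{\bud}{m}=A_m-\frac{\bud}{m}.
\]
Therefore $\max_\pi G(\pi)=\max_k\{A_k-\bud/k\}$, and $u^{(m)}$ attains it for any maximizing $m$, which is the claim.

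The main obstacle is only bookkeeping: verifying that the greedy fill is optimal on each slice (the exchange argument, including ties in $\proxy$), and that the case split at the breakpoints is consistent so that $V$ is continuous and the endpoint evaluation is correct. As an alternative route, if the exchange argument gets messy, I would use the LP reformulation $\max\ \ip{\pi}{\proxy}-\bud s$ subject to $\pi_i\le s$, $\pi\in\Delta_n$. Its feasible region in $(\pi,s)$ has vertices of the form $(u^{(k)},1/k)$ together with greedy-type points. One can then check directly that the non-uniform vertices are dominated by adjacent $u^{(k)}$ vertices via the affine-in-$s$ argument above.
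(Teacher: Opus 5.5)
Your proposal is correct and is essentially the paper's argument: fix the cap $s=\norm{\pi}_\infty$, fill the top-ranked coordinates greedily, note that the resulting value is affine in $s$ on each interval $[1/(k+1),1/k]$, and conclude that the optimum sits at an endpoint $s=1/m$, where the greedy allocation is $u^{(m)}$ with value $A_m-\bud/m$. Your explicit value function $V(s)$ and the check that $G(\pi)\le V(\norm{\pi}_\infty)$, with equality at the greedy point, make precise what the paper's shorter proof leaves implicit.
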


\begin{proof}
Fix $m=\norm{\pi}_\infty$. Maximizing $\ip{\pi}{\proxy}$ subject to $\pi_i\le m$, $\sum_i \pi_i=1$, and $\pi_i\ge 0$ is done by filling the largest rewards first, so an optimizer has the form
\begin{equation*}
(m,\ldots,m,1-(k-1)m,0,\ldots,0)
\end{equation*}
for some $k$. On each such region the objective $\ip{\pi}{\proxy}-\bud m$ is linear in $m$, so an optimum occurs at an endpoint, which is exactly $u^{(k)}$ for some $k$. Evaluating the objective at $u^{(k)}$ gives $A_k-\bud/k$.
\end{proof}

\begin{proof}[Proof of \Cref{thm:drro-dominates-dro}]
Let $\pi^{\mathrm{DRRO}}=\pi^{\mathrm{DRRO}}(\bud)$ and let $k$ be the number of positive entries of $\pi^{\mathrm{DRRO}}$. By \Cref{thm:water-filling}, the DRRO optimizer is supported on the top $k$ actions and is nonincreasing:
\begin{equation*}
\pi_1^{\mathrm{DRRO}}\ge \pi_2^{\mathrm{DRRO}}\ge \cdots \ge \pi_k^{\mathrm{DRRO}}>0,
\qquad
\pi_{k+1}^{\mathrm{DRRO}}=\cdots=\pi_n^{\mathrm{DRRO}}=0.
\end{equation*}
Because $\pi_k^{\mathrm{DRRO}}>0$, \Cref{eq:water-filling-solution} implies $\proxy_k>t^\star\ge t_0$. Hence
\begin{equation*}
\bud
\;>\;
\sum_{i=1}^k (\proxy_i-\proxy_k)
\;=\;
k(A_k-\proxy_k).
\end{equation*}
More generally, since $\proxy_j>t_0$ for every $j\le k$,
\begin{equation}
\label{eq:dro-support-monotone}
\bud
\;>\;
\sum_{i=1}^j(\proxy_i-\proxy_j)
\;=\;
j(A_j-\proxy_j),
\qquad j=2,\ldots,k.
\end{equation}

Now let $m$ be the support size of the DRO optimizer supplied by \Cref{lem:dro-explicit}. Define
\begin{equation*}
v_j:=A_j-\frac{\bud}{j},\qquad j=1,\ldots,n.
\end{equation*}
For each $j\ge 2$,
\begin{equation*}
v_j-v_{j-1}
=
\frac{\bud-j(A_j-\proxy_j)}{j(j-1)}.
\end{equation*}
By \eqref{eq:dro-support-monotone}, $v_j-v_{j-1}>0$ for all $j\le k$, so
\begin{equation*}
v_1< v_2<\cdots < v_k.
\end{equation*}
Therefore any maximizer $m$ of $v_j$ must satisfy $m\ge k$.

Since $\pi^{\mathrm{DRRO}}$ is decreasing and supported on the first $k$ coordinates, for every $j\le k$,
\begin{equation}
\label{eq:prefix-lower-bound}
\sum_{i=1}^j \pi_i^{\mathrm{DRRO}}
\ge
\frac{j}{k}
\ge
\frac{j}{m}
=
\sum_{i=1}^j u_i^{(m)}.
\end{equation}
The first inequality in \eqref{eq:prefix-lower-bound} follows because the average of the first $j$ entries of a decreasing $k$-support probability vector is at least the average of all $k$ supported entries. For $j\ge k$, the left-hand side equals $1$, while the right-hand side is at most $1$. Hence
\begin{equation}
\label{eq:prefix-dominance}
\sum_{i=1}^j \pi_i^{\mathrm{DRRO}}
\ge
\sum_{i=1}^j \pi_i^{\mathrm{DRO}},
\qquad j=1,\ldots,n-1.
\end{equation}

Because $\phi$ is increasing and $\proxy_1>\cdots>\proxy_n$, the true rewards satisfy
\begin{equation*}
t_1>t_2>\cdots>t_n.
\end{equation*}
Let $d_i:=\pi_i^{\mathrm{DRRO}}-\pi_i^{\mathrm{DRO}}$. Then $\sum_i d_i=0$ and \eqref{eq:prefix-dominance} states that $\sum_{i=1}^j d_i\ge 0$ for all $j<n$. Abel summation gives
\begin{equation*}
\sum_{i=1}^n d_i t_i
=
\sum_{j=1}^{n-1}\left(\sum_{i=1}^j d_i\right)(t_j-t_{j+1}).
\end{equation*}
Every term on the right-hand side is nonnegative, so
\begin{equation*}
\sum_{i=1}^n \pi_i^{\mathrm{DRRO}}\,t_i
\ge
\sum_{i=1}^n \pi_i^{\mathrm{DRO}}\,t_i.
\end{equation*}
If $\phi$ is strictly increasing and the two policies differ, then at least one prefix inequality in \eqref{eq:prefix-dominance} is strict, and every increment $t_j-t_{j+1}$ is positive, so the final inequality is strict as well.
\end{proof}

\section{Proofs for Coverage-Based Comparison Results}
\label{app:coverage-proofs}

\begin{proof}[Proof of \Cref{prop:dro-certificate}]
Let $e=r-\proxy$ and write $C_\pi\coloneqq C_\infty(\pi;\mu,\mathcal D)$. Then
\begin{equation*}
\begin{aligned}
J_r(\pi)
&=
J_{\proxy}(\pi)
+\E_{x\sim\mathcal D}
\big[\ip{z_\pi(x)}{e(x)}\big].
\end{aligned}
\end{equation*}
Weighted $\ell_1$--$\ell_\infty$ H\"older's inequality on the product prompt-response space gives
\begin{equation*}
\begin{aligned}
\left|\E_{x}\big[\ip{z_\pi(x)}{e(x)}\big]\right|
&\le
\E_x\big[
\norm{z_\pi(x)}_{\infty,\mu_x^{-1}}
\norm{e(x)}_{1,\mu_x}
\big]\\
&\le
C_\pi\,
\E_x\big[\norm{e(x)}_{1,\mu_x}\big].
\end{aligned}
\end{equation*}
Thus every $r\in\U^{(1)}_\varepsilon$ satisfies
\begin{equation*}
\E_{x}\big[\ip{z_\pi(x)}{e(x)}\big]\ge
-\varepsilon C_\pi.
\end{equation*}
Hence
\begin{equation*}
\begin{aligned}
\min_{r\in\U^{(1)}_\varepsilon}J_r(\pi)
&\ge
J_{\proxy}(\pi)
-\varepsilon C_\pi.
\end{aligned}
\end{equation*}
The reverse inequality is obtained by concentrating the perturbation on prompt-response coordinates whose ratio $|z_\pi(x,y)|/\mu_x(y)$ approaches the essential supremum. More explicitly, for any $\eta>0$, let $S_\eta$ be a positive-probability set of prompts on which $\norm{z_\pi(x)}_{\infty,\mu_x^{-1}}\ge C_\infty(\pi;\mu,\mathcal D)-\eta$, choose a maximizing response $y_\pi^\star(x)$ on that set, and put all $\ell_1$ budget on those coordinates with sign opposite to $z_\pi(x,y_\pi^\star(x))$. This gives rewards in $\U^{(1)}_\varepsilon$ whose objective values are at most $J_{\proxy}(\pi)-\varepsilon(C_\infty(\pi;\mu,\mathcal D)-\eta)$. Letting $\eta\downarrow0$ proves \eqref{eq:dro-certificate}.
\end{proof}

\begin{proof}[Proof of \Cref{prop:dro-regret}]
Because $\true\in\U^{(1)}_\varepsilon$,
\begin{equation*}
J_{\true}(\pi_\rho^{\mathrm{DRO}})
\ge
\min_{r\in\U^{(1)}_\varepsilon}J_r(\pi_\rho^{\mathrm{DRO}})
\ge
\min_{r\in\U^{(1)}_\varepsilon}J_r(\pi_\rho^\star),
\end{equation*}
where the last step uses the optimality of $\pi_\rho^{\mathrm{DRO}}$ over $\Pi_\rho$. Applying \Cref{prop:dro-certificate} to $\pi_\rho^\star$ yields
\begin{equation*}
J_{\true}(\pi_\rho^{\mathrm{DRO}})
\ge
J_{\proxy}(\pi_\rho^\star)-\varepsilon C_\infty(\pi_\rho^\star;\mu,\mathcal D).
\end{equation*}
On the other hand, because $\true\in\U^{(1)}_\varepsilon$,
\begin{equation*}
J_{\true}(\pi_\rho^\star)
\le
J_{\proxy}(\pi_\rho^\star)+\varepsilon C_\infty(\pi_\rho^\star;\mu,\mathcal D).
\end{equation*}
Subtracting the two inequalities gives \eqref{eq:dro-regret-bound}.
\end{proof}

\begin{proof}[Proof of \Cref{prop:drro-certificate}]
For fixed $\pi$ and $\beta$,
\[
C_{\beta,\pi}
\coloneqq
C_{\infty,\mathrm{rel}}(\beta,\pi;\mu,\mathcal D).
\]
\begin{equation*}
J_r(\beta)-J_r(\pi)
=
\begin{aligned}[t]
&J_{\proxy}(\beta)-J_{\proxy}(\pi)\\
&\quad+
\E_x\big[\ip{z_\beta(x)-z_\pi(x)}{r(x)-\proxy(x)}\big].
\end{aligned}
\end{equation*}
Exactly as in the proof of \Cref{prop:dro-certificate},
\begin{equation*}
\begin{aligned}
&\max_{r\in\U^{(1)}_\varepsilon}
\{J_r(\beta)-J_r(\pi)\}\\
&\qquad =
J_{\proxy}(\beta)-J_{\proxy}(\pi)
+\varepsilon C_{\beta,\pi}.
\end{aligned}
\end{equation*}
Taking the maximum over $\beta\in\Pi_\rho$ yields \eqref{eq:drro-certificate}. The interchange between the maximization over $\beta$ and the maximization over $r$ is valid because the objective is simply maximized over the Cartesian product $\Pi_\rho\times\U^{(1)}_\varepsilon$.
\end{proof}

\begin{proof}[Proof of \Cref{prop:drro-regret}]
Because $\true\in\U^{(1)}_\varepsilon$ and $\pi_\rho^\star$ is locally optimal under $\true$ over $\Pi_\rho$,
\[
\mathcal R(\pi)
\coloneqq
\max_{r\in\U^{(1)}_\varepsilon}
\Big[\max_{\beta\in\Pi_\rho}J_r(\beta)-J_r(\pi)\Big].
\]
\begin{align}
J_{\true}(\pi_\rho^\star)-J_{\true}(\pi_\rho^{\mathrm{DRRO}})
&\le
\mathcal R(\pi_\rho^{\mathrm{DRRO}})\nonumber\\
&\le
\mathcal R(\pi_\rho^\star).
\label{eq:app-drro-first}
\end{align}
The second inequality uses the optimality of $\pi_\rho^{\mathrm{DRRO}}$.

Now fix any $\beta\in\Pi_\rho$. If $C_{\infty,\mathrm{rel}}(\beta,\pi_\rho^\star;\mu,\mathcal D)=0$, then $z_\beta(x)=z_{\pi_\rho^\star}(x)$ for $\mathcal D$-almost every prompt because $\mu_x$ has full support, and hence
\begin{equation*}
J_{\proxy}(\beta)-J_{\proxy}(\pi_\rho^\star)
=
J_{\true}(\beta)-J_{\true}(\pi_\rho^\star)
\le
0.
\end{equation*}
Otherwise, write $C_\gamma\coloneqq C_{\infty,\mathrm{rel}}(\gamma,\pi_\rho^\star;\mu,\mathcal D)$. By the tightness argument in \Cref{prop:drro-certificate}, for any $\eta>0$ there exists $r_\beta^\eta\in\U^{(1)}_\varepsilon$ such that
\begin{align*}
&J_{\proxy}(\beta)-J_{\proxy}(\pi_\rho^\star)
+\varepsilon C_\beta
\le
J_{r_\beta^\eta}(\beta)-J_{r_\beta^\eta}(\pi_\rho^\star)+\eta.
\end{align*}
Choose $\widetilde\beta\in\argmax_{\gamma\in\Pi_\rho}J_{r_\beta^\eta}(\gamma)$. Then $\widetilde\beta\in\Pi_\rho^\star(\U^{(1)}_\varepsilon)$ and
\begin{equation}
J_{\proxy}(\beta)-J_{\proxy}(\pi_\rho^\star)
+\varepsilon C_\beta
\le\ 
J_{\proxy}(\widetilde\beta)-J_{\proxy}(\pi_\rho^\star)
+\varepsilon C_{\widetilde\beta}
+\eta.
\label{eq:app-restrict-opt}
\end{equation}
Letting $\eta\downarrow0$ shows that the maximum in \Cref{prop:drro-certificate} may be restricted to $\Pi_\rho^\star(\U^{(1)}_\varepsilon)$ when $\pi=\pi_\rho^\star$.

For any $\beta\in\Pi_\rho^\star(\U^{(1)}_\varepsilon)$,
\begin{equation*}
\begin{aligned}
J_{\proxy}(\beta)-J_{\proxy}(\pi_\rho^\star)
&=
\big(J_{\proxy}(\beta)-J_{\true}(\beta)\big)
-\big(J_{\proxy}(\pi_\rho^\star)-J_{\true}(\pi_\rho^\star)\big)
+\big(J_{\true}(\beta)-J_{\true}(\pi_\rho^\star)\big)\\
&\le
\varepsilon C_\beta,
\end{aligned}
\end{equation*}
because $J_{\true}(\beta)-J_{\true}(\pi_\rho^\star)\le 0$ and the pairwise estimation error is bounded by the same weighted $\ell_1$--$\ell_\infty$ argument used above. Combining this with \eqref{eq:app-drro-first}, \Cref{prop:drro-certificate}, and \eqref{eq:app-restrict-opt} gives the following bound. Write $\mathcal B\coloneqq\Pi_\rho^\star(\U^{(1)}_\varepsilon)$.
\begin{equation*}
\begin{aligned}
J_{\true}(\pi_\rho^\star)-J_{\true}(\pi_\rho^{\mathrm{DRRO}})
&\le
\max_{\beta\in\mathcal B}
\left\{
J_{\proxy}(\beta)-J_{\proxy}(\pi_\rho^\star)
+\varepsilon C_\beta
\right\}\\
&\le
2\varepsilon
\max_{\beta\in\mathcal B} C_\beta\\
&=
2\varepsilon\mathcal C_{\infty,\mathrm{rel}}
(\U^{(1)}_\varepsilon,\pi_\rho^\star;\mu,\mathcal D).
\end{aligned}
\end{equation*}
\end{proof}

\end{document}